\def\eqref#1{equation~\ref{#1}}
\def\1{\bm{1}}
\DeclareMathAlphabet{\mathsfit}{\encodingdefault}{\sfdefault}{m}{sl}
\SetMathAlphabet{\mathsfit}{bold}{\encodingdefault}{\sfdefault}{bx}{n}
\newtheorem{theorem}{Theorem}
\newtheorem{lemma}{Lemma}
\newtheorem{remark}{Remark}
\useunder{\uline}{\ul}{}
\newcolumntype{P}[1]{>{\centering\arraybackslash}p{#1}}
\title{The Lie of the Average: How Class Incremental Learning Evaluation Deceives You?}
\author{
Guannan Lai$^{1,2}$ \quad
Da-Wei Zhou$^{1,2}$ \quad
Xin Yang$^{3}$ \quad
Han-Jia Ye$^{1,2}$\thanks{Corresponding author.} \\
$^{1}$ School of Artificial Intelligence, Nanjing University \\
$^{2}$ National Key Laboratory for Novel Software Technology, Nanjing University \\
$^{3}$ School of Computing and Artificial Intelligence, Southwestern University of Finance and Economics \\
\textit{Email: \{laign, zhoudw, yehj\}@lamda.nju.edu.cn, yangxin@swufe.edu.cn}
}
\begin{document}

\maketitle

\begin{abstract}
Class Incremental Learning (CIL) requires models to continuously learn new classes without forgetting previously learned ones, while maintaining stable performance across all possible class sequences. In real-world settings, the order in which classes arrive is diverse and unpredictable, and model performance can vary substantially across different sequences. Yet mainstream evaluation protocols calculate mean and variance from only a small set of randomly sampled sequences. Our theoretical analysis and empirical results demonstrate that this sampling strategy fails to capture the full performance range, resulting in biased mean estimates and a severe underestimation of the true variance in the performance distribution. We therefore contend that a robust CIL evaluation protocol should accurately characterize and estimate the entire performance distribution. To this end, we introduce the concept of extreme sequences and provide theoretical justification for their crucial role in the reliable evaluation of CIL. Moreover, we observe a consistent positive correlation between inter-task similarity and model performance, a relation that can be leveraged to guide the search for extreme sequences. Building on these insights, we propose \textbf{EDGE} (Extreme case–based Distribution \& Generalization Evaluation), an evaluation protocol that adaptively identifies and samples extreme class sequences using inter-task similarity, offering a closer approximation of the ground-truth performance distribution. Extensive experiments demonstrate that EDGE effectively captures performance extremes and yields more accurate estimates of distributional boundaries, providing actionable insights for model selection and robustness checking. Our code is available at \url{https://github.com/AIGNLAI/EDGE}. 
\end{abstract}

\section{Introduction}

Class Incremental Learning (CIL) seeks to equip a model with the ability to incorporate new class knowledge over time while preserving accurate recall of previously learned classes \citep{zhou2023revisiting,zhou2024continual,li2025exploring}. While much of the literature has centered on advancing architectures and algorithms, the equally crucial question of \textbf{how we evaluate CIL} has received far less attention. Recent studies reveal that final performance in CIL is highly sensitive to the sequence in which new classes arrive \citep{bell2022effect,lin2023theory,shan2024order,wu2021curriculum}. Such sensitivity to class order is particularly problematic in realistic settings (e.g., autonomous driving), where the order of class emergence is inherently uncontrollable. Compounding this challenge, the number of possible sequences grows factorially with the number of classes ($O(N!)$), rendering exhaustive evaluation impractical. Consequently, CIL evaluation must rely on sampling only a subset of class sequences to assess and compare model performance.

Existing CIL evaluation protocols \citep{wang2024comprehensive, zhou2024continual} typically compute model capability by sampling only 3-5 random class sequences and reporting the sample mean and standard deviation — an approach we call the \textbf{Random Sampling (RS) protocol}. Because RS relies on only a handful of sequences, it yields only point estimates and provides no characterization of the full performance distribution.
To examine whether RS can reliably approximate the true performance distribution, we conduct a controlled study with \textbf{6} classes organized into \textbf{3} sequential tasks, resulting in \textbf{90} possible class-arrival orders. We exhaustively evaluate each sequence to obtain the ground-truth distribution of test accuracies. 
\Cref{fig: 1} illustrates this experimental setting. Consider a realistic incremental-training scenario, such as an autonomous driving system, where numerous intrinsically different class-arrival sequences may occur in practice. Evaluating all sequences in our controlled study produces the ground-truth performance distribution. Analysis of this distribution reveals two key observations: first, the model performance approximately follows a Gaussian shape; second, there is substantial variation in extreme cases, with the gap between the easiest and hardest sequences reaching up to \textbf{20\%} in our example (Hide-Prompt \citep{wang2023hierarchical} on CIFAR-100 \citep{krizhevsky2009learning}).

Following the RS protocol, we emulate typical practice by randomly sampling three sequences and fitting a Gaussian $\mathcal{N}(\mu,\sigma^2)$ using their sample mean and variance. As highlighted in the blue box of \Cref{fig: 1}, comparing this RS-estimated Gaussian against the ground-truth distribution reveals systematic bias: RS tends to \emph{overestimate the mean}, \emph{dramatically underestimate the variance}, and fails to capture the true upper and lower performance bounds. Consequently, selecting models based solely on the reported average is risky — a model with an inflated mean but a poor lower bound may cause severe failures in real-world deployment. These observations demonstrate that RS is inadequate for faithfully capturing CIL performance; a reliable evaluation protocol must either characterize distributional extremes or otherwise provide a substantially better approximation of the full performance distribution.

\begin{figure}[t]
\vspace{-0.3cm}
\centering
\includegraphics[width=1.0\linewidth]{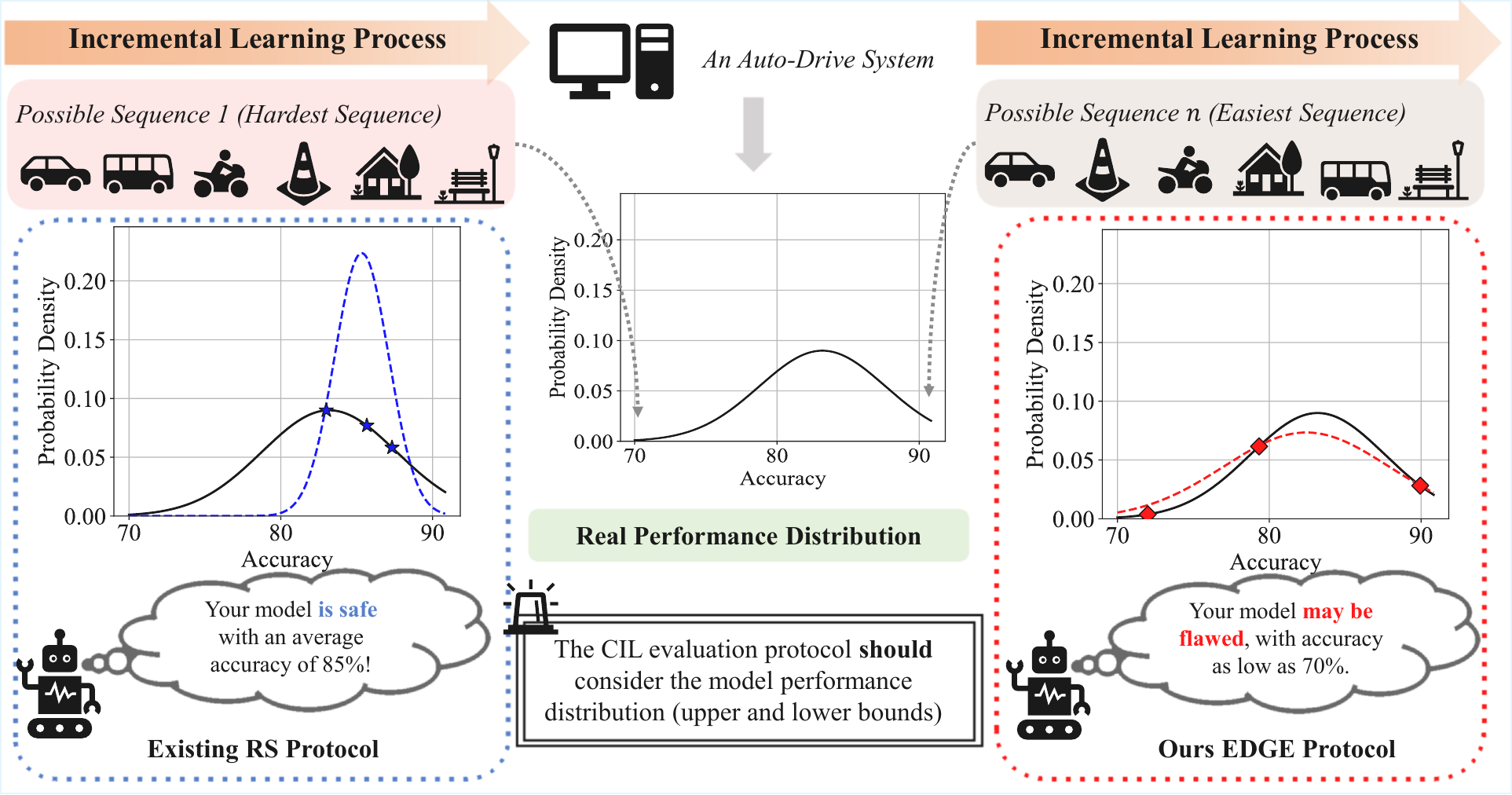}
\caption{Existing CIL evaluations may be misleading! They merely compute the average accuracy without perceiving the performance distribution, failing to anticipate the impact of potential extreme sequences on the model.}
\label{fig: 1}
\vspace{-0.7cm}
\end{figure}

Motivated by the RS protocol’s neglect of extreme sequences and supported by our theoretical analysis of these cases, we adopt extreme case sampling to more comprehensively characterize model capability and achieve more accurate performance estimates. Through both theoretical and empirical analysis of inter-task similarity and model performance, we identify inter-task similarity as a key factor influencing CIL performance.
Building on this insight, we propose \textbf{EDGE} (\textit{Extreme case-based Distribution \& Generalization Evaluation}), a novel evaluation framework for CIL. EDGE encodes class-level textual descriptions using a pre-trained CLIP model to construct a class similarity matrix. It then generates three representative class sequences: one that maximizes inter-task similarity to simulate an \textit{easy} scenario, one that minimizes it to represent a \textit{difficult} scenario, and one randomly sampled to serve as a \textit{medium} case. Model performance is evaluated on these three sequences, and their results are aggregated by computing the mean and standard deviation, providing a more comprehensive approximation of the model’s performance distribution.
As highlighted in the red box of \Cref{fig: 1}, EDGE produces a substantially closer approximation to the ground-truth distribution than the RS protocol, capturing both the central tendency and the distributional extremes.

The main contributions of this work are summarized as follows:
\begin{itemize}[leftmargin=*]
    \item We conduct a systematic study of evaluation protocols in CIL, emphasizing that evaluation should aim to capture the full performance distribution of a model. Through both theoretical analysis and empirical investigation, we show that the widely adopted RS protocol produces biased estimates and fails to reflect the realistic behavior of CIL models.
    \item We propose \textbf{EDGE} (\textit{Extreme case-based Distribution \& Generalization Evaluation}), a novel evaluation framework that adaptively identifies and samples both easy and challenging class sequences based on inter-task similarity, thereby providing a more faithful approximation of the ground-truth performance distribution.
    \item Extensive experiments validate the effectiveness of EDGE in sampling extreme sequences and estimating model performance accurately. Our analysis also uncovers notable phenomena, such as different methods exhibiting comparable lower-bound performance in specific scenarios, offering critical insights for the design of future CIL models.
\end{itemize}

\section{Related Work}
\textbf{Class Incremental Learning (CIL):}
Existing CIL approaches can be broadly categorized into non-pre-trained and pre-trained based methods \citep{cao2023retentive,dohare2024loss}. Non-pre-trained methods typically fall into three categories: (1) Regularization-based methods, which introduce explicit regularization terms into the loss function to balance the learning dynamics between old and new tasks~\citep{aljundi2018memory,kirkpatrick2017overcoming,li2017learning,wang2022continual}; (2) Replay-based methods, which alleviate catastrophic forgetting by replaying data from past tasks, either through stored exemplars~\citep{cha2021co2l,lopez2017gradient,riemer2018learning,wang2022foster} or via generative samples synthesized by GANs~\citep{cong2020gan,liu2020generative,shin2017continual,zhu2022self}; and (3) Dynamic network methods, which modify the network architecture—such as by expanding layers or neurons—to accommodate new knowledge while preserving prior information~\citep{aljundi2017expert,cao2025erroreraser,ostapenko2021continual,wang2023incorporating,wang2022coscl}. In contrast, PTM-based methods leverage the representational power of pre-trained backbones and mitigate forgetting through three main strategies: (1) Prompt-based methods, which apply lightweight updates via prompt tuning while freezing the backbone to maintain generalization~\citep{jia2022visual,li2024learning,smith2023coda,NEURIPS2023_d9f8b5ab,wang2022dualprompt,wang2022learning}; (2) Model mixture–based methods, which store intermediate checkpoints and integrate them using ensemble or model-merging techniques~\citep{gao2023unified,wang2024hierarchical,wang2023isolation,zheng2023preventing,zhou2024expandable,zhou2023learning}; and (3) Prototype-based methods, which classify examples using nearest-class-mean strategies grounded in PTM-derived embeddings~\citep{lai2025order,mcdonnell2024ranpac,panos2023first,zhou2023revisiting}.

\textbf{Evaluation Protocols of CIL:}
Evaluation protocols in CIL have received comparatively limited attention. Prior studies such as \citet{farquhar2018towards, hsu2018re, mundt2021cleva} propose multi-dimensional assessment criteria and benchmarks, while \citet{chen2025cldyb} investigates dynamic task allocation to probe lower-bound performance. In contrast, our work adopts a distribution-oriented perspective: rather than relying on a few random trials, we aim to estimate the underlying performance distribution via a small set of informative, extreme-aware samples. This approach enables more reliable assessment under atypical or adversarial class sequences and delivers more actionable guidance for model selection and design.
\section{Preliminaries}

\subsection{Problem Definition}

\textbf{Class Incremental Learning (CIL).} 
Given an ordered sequence of tasks $\{1, \dots, t, \dots\}$, each task $i$ is associated with a training set $\mathcal{D}^i = \{X^i, Y^i\}$, where $X^i$ denotes the input samples and $Y^i$ the corresponding labels. Let $CLS^i$ denote the class set for task $i$, with cardinality $|CLS^i|$. A crucial constraint enforces strict separation between tasks: $\forall i \neq j \in \{1, \dots, n\},\ CLS^i \cap CLS^j = \emptyset$, and no inter-task data accessibility is allowed during training. The goal of CIL is to learn a unified embedding function $\Psi: \mathcal{D}^i \to \mathbb{R}^d$ that maps inputs to a shared embedding space, along with a classifier $f(\cdot)$ capable of maintaining discriminative performance across all encountered tasks.

In the CIL scenario, given a sequence of learning classes $\mathcal{O}$ comprising $T$ tasks, we define \( A_{t,t'} \) as the classification accuracy of the model on the test set of the \( t' \)-th task after training on the first \( t \) tasks. Based on this, the overall evaluation metric for sequence $\mathcal{O}$ can be formally defined as:
\begin{equation}
    \label{eq: acc}
    \mathcal{A}(\mathcal{O}) = \frac{1}{T}\sum_{t=1}^T A_{T,t}.
\end{equation}

\textbf{Objective of CIL Evaluation Protocol Design.}
Let $\Omega$ denote the space of all possible class sequences under a given CIL setting. By sampling $L$ sequences $\{\mathcal{O}_1, \dots, \mathcal{O}_L\} \subset \Omega$ and computing their final accuracies $\mathcal{A}(\mathcal{O}_l)$ using \Cref{eq: acc}, we construct an empirical performance distribution $\mathcal{P}_{\text{emp}}$ with mean $\mu_{\mathcal{A}}$ and standard deviation $\sigma_{\mathcal{A}}$.

As shown in \Cref{fig: 1} and the \textit{appendix}, the realistic distribution $\mathcal{P}_{\text{true}}$ is approximately Gaussian. We therefore use $\mathcal{N}(\mu_{\mathcal{A}}, \sigma_{\mathcal{A}}^2)$ to approximate it, and define the goal of protocol design as minimizing the distributional distance between $\mathcal{P}_{\text{true}}$ and its estimate, measured by metrics such as Jensen-Shannon divergence \citep{lamberti2007jensen} or Wasserstein distance \citep{villani2009wasserstein}.

\textbf{Random Sampling (RS) Evaluation Protocol.}
For a given CIL model $\mathcal{M}$, the conventional evaluation protocol uses three fixed random seeds \citep{lai2025order,li2025caprompt,mcdonnell2024ranpac,wang2022learning} to generate class sequences $\{RS_l\}_{l=1}^3$. The performance of the model is then estimated by computing the mean and standard deviation of final accuracies:
$\mu_{\mathcal{A}} = \frac{1}{3}\sum_{l=1}^3 \mathcal{A}(RS_l)$,  
$\sigma_{\mathcal{A}}^2 = \frac{1}{3}\sum_{l=1}^3 (\mathcal{A}(RS_l) - \mu_{\mathcal{A}})^2$.
However, prior work only uses these statistics to summarize performance, without evaluating how well the estimated distribution matches the true one. This leads to overconfidence in the evaluation results and may result in misleading conclusions.

\subsection{Limitations of RS Evaluation Protocol}\label{sec: preex}

Despite the substantial advances in CIL, to our knowledge, no prior work has critically examined the validity of prevailing evaluation protocols. In this section, we undertake theoretical investigations to address this oversight. \textit{All theoretical results and proofs in this section are provided in the Appendix.} 

First, we demonstrate through \Cref{lemma: comb} that CIL evaluation cannot be reliably accomplished using existing RS protocols, due to the combinatorial explosion in the number of possible class sequences.

\begin{lemma}
\label{lemma: comb}
Let \(N\) be the total number of classes, partitioned into \(K\) tasks of equal size \(M = N / K\).  Then the number of distinct class sequences is $\lvert \Omega \rvert \;=\; \frac{N!}{(M!)^K}.$ 
Moreover, under linear scaling \(K = \Theta(N)\), the quantity \(\lvert \Omega \rvert\) grows factorially, satisfying $\lvert \Omega \rvert \;=\;\Omega\bigl((N/e)^N\bigr),$ 
which asymptotically dwarfs any polynomial‐scale sampling capacity as \(N \to \infty\).
\end{lemma}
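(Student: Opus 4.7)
The plan is to prove \Cref{lemma: comb} in two stages: a combinatorial derivation of the closed form $|\Omega| = N!/(M!)^K$, followed by Stirling's approximation to extract the factorial growth rate.

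First I would identify each class sequence with an ordered $K$-tuple $(CLS^1,\dots,CLS^K)$ of pairwise disjoint size-$M$ subsets of $\{1,\dots,N\}$ whose union is the whole set, i.e., an ordered set partition into blocks of equal size $M$. The number of such ordered partitions is exactly the multinomial coefficient $\binom{N}{M,M,\dots,M} = N!/(M!)^K$. An equivalent bookkeeping is to take all $N!$ linear orderings of the $N$ classes and quotient by within-block permutations: each sequence is counted exactly $(M!)^K$ times, yielding the same formula. This settles the first claim.

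Next I would obtain the asymptotic lower bound. Taking logarithms and applying Stirling's expansion $\log n! = n\log n - n + \tfrac{1}{2}\log(2\pi n) + O(1/n)$ to both $N!$ and $M!$, and using $MK = N$ throughout to cancel the linear-in-$n$ terms, gives
\[
\log|\Omega| \;=\; N\log K \;-\; \tfrac{K}{2}\log(2\pi M) \;+\; O(\log N).
\]
Under the linear scaling $K = \Theta(N)$, the leading contribution $N\log K = \Theta(N\log N)$ dominates, so $|\Omega| = N^{\Theta(N)}$, which is super-polynomial. To recover the explicit $(N/e)^N$ bound, I would exponentiate and compare with $N\log(N/e) = N\log N - N$: the discrepancy is $O(N)$, which can be absorbed into the constant hidden in the base of the exponential lower bound, yielding $|\Omega| = \Omega((N/e)^N)$.

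The hard part will be controlling the accumulated slack in Stirling's formula, because each $\log M!$ contributes an $O(\log M)$ error term that inflates to $O(K\log M)$ — linear rather than logarithmic in $N$ — when summed over all $K$ tasks. I would manage this by keeping the computation in log-space until the very end and by treating the proportionality constant in $K = \Theta(N)$ carefully, so that every $O(N)$ error term is explicitly folded into the $\Omega(\cdot)$ on the right-hand side rather than left to accumulate across the estimate.
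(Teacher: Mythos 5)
Your combinatorial derivation of $|\Omega| = N!/(M!)^K$ and your use of Stirling's approximation to establish the factorial growth rate are essentially the same approach the paper takes. One simplification worth noting: the "hard part" you anticipate — Stirling slack from each $\log M!$ accumulating to an $O(N)$ error over the $K$ tasks — is sidestepped entirely in the paper, which observes that $K = \Theta(N)$ forces $M = N/K = \Theta(1)$, so $M!$ is just a bounded constant and $(M!)^K$ is a plain geometric term $\Theta(c^N)$; no Stirling estimate for $M!$ is needed at all, and the only asymptotic work is applying Stirling to $N!$. Both your write-up and the paper then make the same mild abuse in the final step: an $O(N)$ additive slack in log-space really degrades the base of the exponential (to some $e' > e$) rather than the multiplicative constant, so strictly one obtains $\Omega((N/e')^N)$ rather than $\Omega((N/e)^N)$; this does not affect the lemma's actual point (super-polynomial growth), but it is a shared notational looseness rather than a feature unique to your proposal.
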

For \(N = 100\) classes divided into \(K = 10\) tasks, the number of possible sequences is approximately \(10^{92}\), vastly exceeding practical enumeration. The RS protocols typically sample only \(3\) class sequences, covering less than \(10^{-90}\%\) of the space and thus suffering from severe under-sampling bias.
Building on \Cref{lemma: comb}, we now ask: \textbf{\textit{How Many}} random sequence samples are required to approximate the true accuracy distribution over the full sequence space within a given tolerance?

\begin{theorem}
\label{thm: sample}
Let \(\Omega\) be the set of all possible class sequences with \(\lvert \Omega\rvert\) elements, and fix tolerance \(\varepsilon>0\) and failure probability \(\delta\in(0,1)\). Suppose we draw \(L\) sequences \(\{RS_l\}_{l=1}^L\) \emph{without} replacement uniformly from \(\Omega\), and let $\widehat{\mathcal{A}}_L \;=\;\frac{1}{L}\sum_{l=1}^L \mathcal{A}(RS_l)$ be the empirical mean, respectively.  Then for any \(\varepsilon>0\), if
\begin{equation}
  L\,\frac{\lvert\Omega\rvert - L}{\lvert\Omega\rvert - 1}
  \;\ge\;
  \frac{\ln\bigl(2\,\lvert \Omega\rvert/\delta\bigr)}{2\,\varepsilon^2},
\label{eq:sample_complexity_no_replacement}
\end{equation}
then with probability at least \(1-\delta\), $\bigl\lvert \widehat{\mathcal{A}}_L - \mathbb{E}_{\omega}[\mathcal{A}(\omega)]\bigr\rvert 
  \;\le\; \varepsilon.$
\end{theorem}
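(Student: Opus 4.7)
The plan is to treat $\widehat{\mathcal{A}}_L$ as a sample mean of bounded but dependent random variables and invoke a concentration inequality tailored to sampling without replacement. Setting $X_l := \mathcal{A}(RS_l)$, each $X_l \in [0,1]$ since accuracy is a bounded fraction, and uniformity of the without-replacement draw guarantees $\mathbb{E}[X_l] = \mathbb{E}_\omega[\mathcal{A}(\omega)]$, so $\widehat{\mathcal{A}}_L$ is automatically unbiased. The complication is that $X_1,\dots,X_L$ are exchangeable but not independent, which is precisely the regime the finite-population correction factor $\tfrac{|\Omega|-L}{|\Omega|-1}$ is designed to capture.

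Next, I would apply a Hoeffding--Serfling-type concentration bound for sampling without replacement from a finite population: for $[0,1]$-valued variables this yields an exponential tail of the form $\exp(-c\,L\varepsilon^{2})$ in which the multiplicative constant $c$ carries the correction factor $\tfrac{|\Omega|-L}{|\Omega|-1}$. Two natural routes are (i) Hoeffding's convex coupling, which reduces the dependent sum to an i.i.d.\ with-replacement sum and then applies classical Hoeffding, and (ii) Serfling's Doob-martingale construction, which exposes the sum as a martingale with bounded increments. Either route can be pushed to produce a tail bound in which the finite-population correction appears in the exponent in the required form.

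To obtain the $\ln(2|\Omega|/\delta)$ appearing in the statement, I would combine the pointwise tail with a union bound over the at most $|\Omega|$ atoms of the induced distribution of $\mathcal{A}$ (equivalently, over a fine enough discretization of the image of $\mathcal{A}$ in $[0,1]$). This turns the concentration of $\widehat{\mathcal{A}}_L$ into a uniform statement at the price of an extra $|\Omega|$ inside the log. Imposing that the resulting failure probability not exceed $\delta$, the symmetric event $\{|\widehat{\mathcal{A}}_L - \mu_{\mathcal{A}}|\ge\varepsilon\}$ contributes the factor $2$, and a direct algebraic rearrangement yields exactly the stated sufficient condition $L\,\tfrac{|\Omega|-L}{|\Omega|-1}\ge \tfrac{\ln(2|\Omega|/\delta)}{2\varepsilon^{2}}$.

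The main obstacle will be the concentration step itself: sampling without replacement induces negative correlations among the $X_l$, and extracting an exponential tail with the precise correction factor $\tfrac{|\Omega|-L}{|\Omega|-1}$ requires careful bookkeeping through either Serfling's martingale argument or Hoeffding's convex coupling, including tracking constants through the moment generating function. A secondary subtlety is justifying the factor $|\Omega|$ inside the logarithm; I expect it to emerge either from a union bound over atoms of $\mathcal{A}$ or from the coupling constant, and it is this step that renders the stated bound a conservative, transparent form rather than the tightest possible version of the inequality.
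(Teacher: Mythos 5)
Your proposal follows essentially the same route as the paper: the paper instantiates exactly your option (ii), building the Doob martingale $Z_i = \mathbb{E}[\widehat{\mathcal{A}}_L \mid RS_1,\dots,RS_i]$, bounding the increments with a finite-population correction $\sqrt{(|\Omega|-L)/(|\Omega|-1)}$ absorbed into each $|Z_i-Z_{i-1}|$, applying Azuma--Hoeffding, and then forcing the failure probability below $\delta/|\Omega|$ via a union-bound argument that produces the $\ln(2|\Omega|/\delta)$ factor, after which the stated sample-size condition follows by rearrangement. You also correctly diagnose the two genuinely delicate points — the increment bound carrying the correction factor, and the somewhat artificial $|\Omega|$ inside the logarithm — both of which the paper handles exactly as you anticipate (the latter by a conservative union bound over the $|\Omega|$ atoms rather than a tight single-event bound).
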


\begin{remark}
\label{rmk: 2}
Substituting \(\lvert\Omega\rvert = N!/(M!)^K\approx (N/e)^N\) from \Cref{lemma: comb}, the condition \eqref{eq:sample_complexity_no_replacement} becomes
\begin{equation}
    L\,\frac{\frac{N!}{(M!)^K} - L}{\frac{N!}{(M!)^K} - 1}
  \;\ge\;
  \frac{\ln\bigl(2/\delta\bigr) + \ln(N!/(M!)^K)}{2\,\varepsilon^2}
  \;\approx\;
  \frac{1}{2\varepsilon^2}\Bigl[N\ln\!\bigl(N/e\bigr) + \ln\!\tfrac{2}{\delta}\Bigr].
  \label{eq: L}
\end{equation}
For large \(\lvert\Omega\rvert\) and \(L\ll\lvert\Omega\rvert\), the finite‐population correction
\(\frac{\lvert\Omega\rvert-L}{\lvert\Omega\rvert-1}\approx1\), so one recovers the same
sample complexity scale
\(\Omega\!\bigl(\tfrac{N\ln N}{\varepsilon^2}\bigr)\)
as in the with‐replacement case.  Even for moderate \(N\) (e.g.\ \(N=100\)) and a coarse
\(\varepsilon=0.1\), achieving high confidence (say \(\delta=0.05\)) still requires on the order of
\(L\gtrsim 2\times10^4\) samples, so purely random sampling remains fundamentally impractical.
\end{remark}

Noting that in \Cref{eq: L} we have \(L \ll |\Omega|\), let
$E_t \;=\;\bigl\{\omega\in\Omega : \,\lvert \mathcal{A}(\omega) - \mathbb{E}_{\omega}[\mathcal{A}(\omega)]\rvert > t\,\sqrt{\mathrm{Var}_{\omega}[\mathcal{A}(\omega)]} \bigr\}.$
The probability that none of the \(L\) sampled sequences falls into \(E_t\) is approximately 
\(\exp\bigl(-(|E_t|/|\Omega|)\,L\bigr)\).
Hence, uniform random sampling almost surely fails to capture model performance in the most extreme cases. This observation motivates the idea of deliberately constructing such \textbf{extreme class sequences} to directly evaluate easy and hard case performance; \Cref{thm: extr} provides an initial theoretical analysis of this approach:

\begin{theorem}
\label{thm: extr}
Let \(\Omega\) be the set of all class sequences, and define
$\mu \;=\;\mathbb{E}_{\omega\sim\Omega}[\mathcal{A}(\omega)]$, $ 
\sigma \;=\;\sqrt{\mathrm{Var}_{\omega\sim\Omega}[\mathcal{A}(\omega)]}
$
as the realistic mean and standard deviation of the accuracy function \(\mathcal{A}\).  Suppose we know two extreme sequences
$\omega_{+},\;\omega_{-}$
satisfying
\(\mathcal{A}(\omega_{+}) - \mu \ge \sigma\)
and
\(\mu - \mathcal{A}(\omega_{-}) \ge \sigma\).
Draw \(L\) sequences
\(\{RS_l\}_{l=1}^L\)
\emph{without} replacement uniformly from \(\Omega \setminus \{\omega_{+},\omega_{-}\}\), and define
$\widetilde{\mathcal{A}}_{L+2}
\;=\;
\frac{1}{L+2}
\Bigl[
  \mathcal{A}(\omega_{-})
  + \mathcal{A}(\omega_{+})
  + \sum_{l=1}^L \mathcal{A}(RS_l)
\Bigr].$
Then for any \(\varepsilon>0\) and \(\delta\in(0,1)\), if
\begin{equation}
  L \;\frac{\lvert\Omega\rvert -2  - L}{\lvert\Omega\rvert - 3}
  \;\ge\;
  \frac{\displaystyle \ln\!\bigl(2\,(\lvert\Omega\rvert -2) /\delta\bigr)\;\bigl(R^{(\sigma)}\bigr)^2}
       {2\,\varepsilon^2},
\label{eq:L2_extreme}
\end{equation}
where
$R^{(\sigma)}
\;=\;
\mathcal{A}(\omega_{+})-\mathcal{A}(\omega_{-}),$
then with probability at least \(1-\delta\),
$\bigl\lvert \widetilde{\mathcal{A}}_{L+2}
  - \mathbb{E}_{\omega\sim\Omega}[\mathcal{A}(\omega)]
\bigr\rvert
\;\le\;\varepsilon.$
\end{theorem}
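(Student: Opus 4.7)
The plan is to decompose $\widetilde{\mathcal{A}}_{L+2}-\mu$ into a deterministic anchor contribution from $\omega_{\pm}$ and a random contribution from the $L$ without-replacement samples, then apply the same Hoeffding--Serfling--style concentration inequality used in \Cref{thm: sample} to the random part --- with range reduced from that of $\mathcal{A}$ over $\Omega$ to $R^{(\sigma)}$ and effective population reduced from $\lvert\Omega\rvert$ to $\lvert\Omega\rvert-2$. Writing $\widehat{\mathcal{A}}_L = \tfrac{1}{L}\sum_{l=1}^L \mathcal{A}(RS_l)$, a direct rearrangement gives
\[
\widetilde{\mathcal{A}}_{L+2} - \mu \;=\; \frac{d_+ - d_-}{L+2} \;+\; \frac{L}{L+2}\bigl(\widehat{\mathcal{A}}_L - \mu\bigr),
\]
where $d_+ := \mathcal{A}(\omega_+) - \mu \ge \sigma$ and $d_- := \mu - \mathcal{A}(\omega_-) \ge \sigma$. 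The deterministic term is handled by observing that $d_+$ and $d_-$ are both nonnegative and measure opposite-sided deviations from $\mu$, so they partially cancel: $|d_+ - d_-| \le d_+ + d_- = R^{(\sigma)}$, giving an anchor contribution of magnitude at most $R^{(\sigma)}/(L+2)$ deterministically.

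For the random part I would first compute the shifted conditional mean $\mu' := \mathbb{E}[\mathcal{A}(RS_l)] = \bigl(\lvert\Omega\rvert\mu - \mathcal{A}(\omega_+) - \mathcal{A}(\omega_-)\bigr)/(\lvert\Omega\rvert-2)$, and note that $\mu'-\mu = -(d_+-d_-)/(\lvert\Omega\rvert-2)$ is a shift of order $R^{(\sigma)}/\lvert\Omega\rvert$, negligible for large $\lvert\Omega\rvert$. Treating $\omega_{\pm}$ as genuine extremes of $\mathcal{A}$ over $\Omega$ --- the intended reading of ``extreme sequences'' --- the values $\mathcal{A}(RS_l)$ lie in $[\mathcal{A}(\omega_-), \mathcal{A}(\omega_+)]$ and hence have range exactly $R^{(\sigma)}$. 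I would then invoke the without-replacement Hoeffding inequality underlying \Cref{thm: sample} --- with finite-population correction $(\lvert\Omega\rvert-2-L)/(\lvert\Omega\rvert-3)$ and a union bound across the $\lvert\Omega\rvert-2$ possible sample values --- to conclude that $\bigl|\widehat{\mathcal{A}}_L - \mu'\bigr| \le \varepsilon$ with probability at least $1-\delta$ exactly under hypothesis \eqref{eq:L2_extreme}, whose $(R^{(\sigma)})^2$ factor in the numerator replaces the unit-range factor implicit in \eqref{eq:sample_complexity_no_replacement}.

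The final step is a triangle inequality combining the deterministic bias, the conditional-mean shift, and the concentration error. The main obstacle is the bookkeeping here: one must verify that both the anchor bias $O(R^{(\sigma)}/(L+2))$ and the mean shift $O(R^{(\sigma)}/\lvert\Omega\rvert)$ are dominated by $\varepsilon$ under the hypothesized lower bound on $L$, so that neither erodes the sharper $(R^{(\sigma)})^2$ scaling that \eqref{eq:L2_extreme} offers over \eqref{eq:sample_complexity_no_replacement}. Since \eqref{eq:L2_extreme} forces $L$ to scale like $(R^{(\sigma)})^2\ln(\lvert\Omega\rvert/\delta)/\varepsilon^2 \gg R^{(\sigma)}/\varepsilon$, these bias terms are safely absorbed, and the claim $|\widetilde{\mathcal{A}}_{L+2} - \mu| \le \varepsilon$ follows.
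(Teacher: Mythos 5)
Your decomposition into a deterministic anchor term plus a random part is a genuinely different route from the paper's: the paper runs a Doob-martingale / Azuma argument directly on $\widetilde{\mathcal{A}}_{L+2}$, with each increment scaled by $1/(L+2)$ and range $R^{(\sigma)}$, whereas you peel off the two anchor terms first and apply the without-replacement Hoeffding bound only to $\widehat{\mathcal{A}}_L=\tfrac1L\sum_l\mathcal{A}(RS_l)$. Since $\tfrac{L}{L+2}\cdot\tfrac1L=\tfrac1{L+2}$, the effective increment scale agrees and the two arguments meet again at the concentration step. What your version buys is that it makes explicit the anchor bias $\bigl(\mathcal{A}(\omega_+)+\mathcal{A}(\omega_-)-2\mu\bigr)/(L+2)$ and the conditional-mean shift $\mathbb{E}[\mathcal{A}(RS_l)]-\mu=-\bigl(\mathcal{A}(\omega_+)+\mathcal{A}(\omega_-)-2\mu\bigr)/(\lvert\Omega\rvert-2)$. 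The paper elides both by flatly asserting $Z_0=\mathbb{E}[\widetilde{\mathcal{A}}_{L+2}]=\mu$, which is false unless the extremes are symmetric about $\mu$ (i.e.\ $\mathcal{A}(\omega_+)-\mu=\mu-\mathcal{A}(\omega_-)$). Your bookkeeping actually surfaces a step the paper's own proof silently skips.

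That said, the final step of your plan is not closed. Combining the anchor bias with the mean shift, the total deterministic contribution is $\bigl(\mathcal{A}(\omega_+)+\mathcal{A}(\omega_-)-2\mu\bigr)\cdot\tfrac{\lvert\Omega\rvert-2-L}{(L+2)(\lvert\Omega\rvert-2)}$, which is nonzero whenever the extremes are asymmetric. If you then apply concentration under \eqref{eq:L2_extreme} to get $\lvert\widehat{\mathcal{A}}_L-\mathbb{E}[\mathcal{A}(RS_l)]\rvert\le\varepsilon$, the triangle inequality delivers a bound on $\lvert\widetilde{\mathcal{A}}_{L+2}-\mu\rvert$ that is strictly larger than $\varepsilon$, so the claimed conclusion does not follow as stated. ``Safely absorbed'' has to be cashed out: target the concentration at $\varepsilon'=\varepsilon-R^{(\sigma)}/(L+2)$ and verify that \eqref{eq:L2_extreme} still holds with $\varepsilon'$ in place of $\varepsilon$, or split the error budget explicitly; noting that $L\gg R^{(\sigma)}/\varepsilon$ shows the bias is small but does not make it vanish. (The paper's proof has the same gap, just hidden behind the false assertion $Z_0=\mu$.) Separately, both you and the paper use that $\mathcal{A}$ takes values in $[\mathcal{A}(\omega_-),\mathcal{A}(\omega_+)]$ on all of $\Omega\setminus\{\omega_+,\omega_-\}$, whereas the hypotheses only give $\mathcal{A}(\omega_+)\ge\mu+\sigma$ and $\mathcal{A}(\omega_-)\le\mu-\sigma$; your reading that $\omega_\pm$ are genuine global extremes is the intended one, is needed for the range $R^{(\sigma)}$ to be the right Hoeffding constant, and should be stated as an explicit hypothesis.
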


\begin{remark}
    \Cref{thm: extr} demonstrates that, under the conditions outlined in \Cref{rmk: 2}, incorporating extreme class sequences reduces the required sample size to a value proportional to \(\bigl(R^{(\sigma)}\bigr)^2\). For instance, when \(R^{(\sigma)} \approx 0.1\) (which is common in practical scenarios), the lower bound on the sample size \(L\) drops to around \textbf{50}. This represents a significant reduction compared to uniform random sampling, underscoring the practical benefit of extreme-sequence-assisted evaluation in CIL.
\end{remark}
\section{EDGE: \textbf{E}xtreme case-based \textbf{D}istribution \& \textbf{G}eneralization \textbf{E}valuation}

\subsection{Motivation}

Building on the theoretical analyses in \Cref{sec: preex}, we conduct an exhaustive evaluation under a 6-class, 3-task setting. As illustrated in \Cref{fig: cifar,fig: imagenet}, the RS protocol often fails to accurately estimate the true performance distribution, frequently leading to either underestimation or overestimation of certain models, which compromises fairness in comparison. Meanwhile, the findings from \Cref{thm: extr}, together with the \textbf{near-Gaussian} nature of the true distribution, highlight the importance of incorporating extreme class sequences to improve evaluation quality. Nevertheless, a key challenge remains in how to effectively leverage dataset-specific structures and characteristics to generate extreme sequences that are both robust and generalizable, thereby enabling more reliable and informative evaluation protocols.
\begin{figure}[!t]
  \centering
  % 子图 (a)
  \begin{subfigure}[b]{0.32\linewidth}
    \centering
      \includegraphics[width=\linewidth]{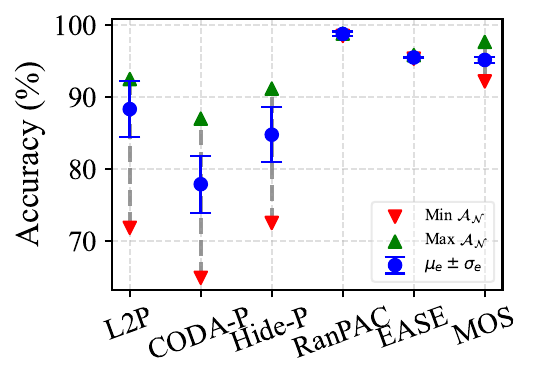}
    \caption{Under CIFAR-100}
    \label{fig: cifar}
  \end{subfigure}
  \hfill
  % 子图 (b)
  \begin{subfigure}[b]{0.32\linewidth}
    \centering
    \includegraphics[width=\linewidth]{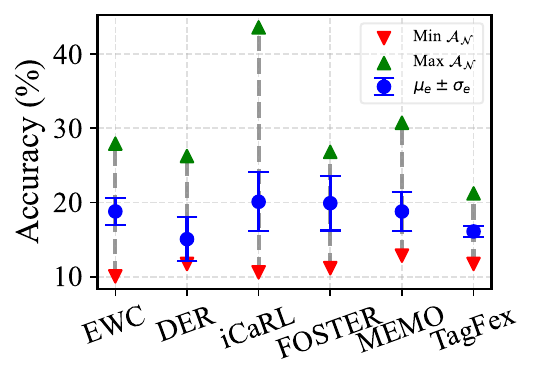}
    \caption{Under ImageNet-R}
    \label{fig: imagenet}
  \end{subfigure}
  \hfill
  % 子图 (c)
  \begin{subfigure}[b]{0.32\linewidth}
    \centering
    \includegraphics[width=\linewidth]{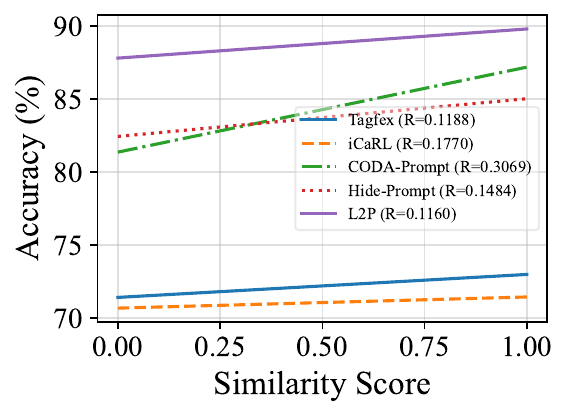}
    \caption{Algorithm design motivation}
    \label{fig: 23}
  \end{subfigure}
  
  \caption{
        \Cref{fig: cifar,fig: imagenet} show model performance under fully enumerable scenarios (green: maximum, red: minimum), along with estimates from the random sampling (RS) protocol (blue error bars). 
        \Cref{fig: 23} illustrates the correlation between inter-task similarity scores and model performance, where \( R \) denotes the Pearson correlation coefficient.
    }
  \label{fig: ex_pre}
  \vspace{-0.5cm}
\end{figure}

In the CIL setting, it is intuitively understood that when adjacent tasks exhibit low similarity, model parameters undergo significant changes during task transitions, which increases the risk of forgetting. To investigate this phenomenon further, we examine the relationship between inter-task similarity and model generalization error. 

\begin{theorem}
\label{thm: similarity-error}
Consider a CIL system consisting of \( K \) tasks, where each task \( T_k \) is associated with a data distribution \( \mathcal{D}_k \) and a class set \( \mathcal{C}_k \). The generalization error is defined as \( \epsilon_g = \frac{1}{K} \sum_{k=1}^K \mathbb{E}_{(x, y) \sim \mathcal{D}_k} [ L(h(x), y) ] \), where \( L(h(x), y) \) denotes the loss between the model prediction \( h(x) \) and the true label \( y \).
Given a task order \( \mathcal{O} = \{T_1, T_2, \dots, T_K\} \), the similarity score \( \mathcal{S}(\mathcal{O}) \) is defined as:
\begin{equation}
\mathcal{S}(\mathcal{O}) = \frac{K}{(K-1)N}\sum_{1 \leq i  \leq K-1} \sum_{c \in \mathcal{C}_i} \sum_{c' \in \mathcal{C}_{i+1}} Sim(c, c'),
\label{eq: sim}
\end{equation}
where \( Sim(c, c') \) denotes the semantic similarity in the representation space between classes \( c \) and \( c' \), belonging to tasks \( T_i \) and \( T_j \), respectively.
Let \( \mathcal{O}_h \) and \( \mathcal{O}_e \) denote the sequences with the minimum and maximum similarity scores \( \mathcal{S}(\mathcal{O}) \), respectively, and let \( \mathcal{O}_r \) represent a randomly generated sequence. Then, the following conditions hold:

$\bullet$ The similarity score satisfies \( \mathcal{S}(\mathcal{O}_h) \leq \mathcal{S}(\mathcal{O}_r) \leq \mathcal{S}(\mathcal{O}_e) \), 

$\bullet$ The generalization error satisfy $\epsilon_g(\mathcal{O}_h) \geq \epsilon_g(\mathcal{O}_r) \geq \epsilon_g(\mathcal{O}_e).$

\end{theorem}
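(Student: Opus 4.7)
The plan is to split the theorem into its two bullets and handle them separately, since the first is essentially definitional while the second requires a genuine argument tying the similarity score to the learning dynamics of a CIL model.

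For the first bullet, I would simply note that $\mathcal{O}_e$ and $\mathcal{O}_h$ are defined as the maximizer and minimizer of $\mathcal{S}(\cdot)$ over the full space $\Omega$, and any random sequence $\mathcal{O}_r$ lives in $\Omega$. The chain $\mathcal{S}(\mathcal{O}_h) \leq \mathcal{S}(\mathcal{O}_r) \leq \mathcal{S}(\mathcal{O}_e)$ is then immediate from the definitions of min and max, and requires no further argument beyond invoking optimality.

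For the second bullet, my plan is to write $\epsilon_g(\mathcal{O})$ as an average of per-task test losses and to upper-bound each such loss by decomposing it into (i) the optimization error achieved when $T_k$ was being learned and (ii) the accumulated representation drift caused by the subsequent tasks $T_{k+1},\dots,T_K$. Under a mild local-Lipschitz assumption on the loss, or equivalently a Ben-David-style $\mathcal{H}\Delta\mathcal{H}$ divergence bound adapted to sequential learning, each drift increment from $T_i$ to $T_{i+1}$ can be controlled by a term monotonically decreasing in the consecutive-pair similarity $\frac{1}{|\mathcal{C}_i||\mathcal{C}_{i+1}|}\sum_{c,c'} Sim(c,c')$. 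Summing over the remaining tasks for each $k$ and then averaging across $k$ produces an upper bound on $\epsilon_g(\mathcal{O})$ whose dominant data-dependent term is an affine decreasing function of $\mathcal{S}(\mathcal{O})$. Substituting the three sequences and chaining with the first bullet then yields the claimed ordering $\epsilon_g(\mathcal{O}_h) \geq \epsilon_g(\mathcal{O}_r) \geq \epsilon_g(\mathcal{O}_e)$.

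The main obstacle will be formalizing the connection between representation-space similarity $Sim(c,c')$ and actual per-step forgetting, since this link is not automatic and depends on the smoothness of the classifier, the feature extractor, and the choice of $Sim$. I would resolve this by stating a short regularity lemma in the appendix — either Lipschitzness of the composite loss in the representation, or a bound identifying $1-Sim(c,c')$ with an upper bound on the $\mathcal{H}\Delta\mathcal{H}$-divergence between the class-conditional distributions — so that the theorem reduces to the established observation that generalization error in sequential learning grows with total inter-task divergence. Because only the direction of monotonicity (not sharp constants) is needed to order $\epsilon_g$ across the three sequences, the constants and lower-order slack terms can be absorbed and deferred to the appendix.
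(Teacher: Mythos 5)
Your handling of the first bullet is correct and in fact cleaner than the paper's: since $\mathcal{O}_h$ and $\mathcal{O}_e$ are defined as the minimizer and maximizer of $\mathcal{S}(\cdot)$ over $\Omega$ and $\mathcal{O}_r\in\Omega$, the chain is definitional. (The paper additionally runs a McDiarmid concentration argument to show $\mathcal{S}(\mathcal{O}_r)$ is close to $\mathbb{E}[\mathcal{S}(\mathcal{O}_r)]$ with high probability, a stronger quantitative statement than the bullet itself requires.)

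For the second bullet your route is genuinely different from the paper's, and it contains a gap. The paper does not reason through per-task drift or $\mathcal{H}\Delta\mathcal{H}$ divergence at all. It instead imports a closed-form expression for $\mathbb{E}[\epsilon_g]$ from Lin et al.\ for an overparameterized linear CIL model (their Lemma on $\mathbb{E}[\epsilon_g]$ with terms $\lVert w_i^*-w_j^*\rVert^2$), posits a proportionality $\lVert w_i^*-w_j^*\rVert^2 \propto \bigl\lVert \sum_{m\in\mathcal{C}_i}v_m^* - \sum_{n\in\mathcal{C}_j}v_n^*\bigr\rVert^2$, expands the squared norm into inner products identified with $Sim(c,c')$, and shows algebraically that $\mathbb{E}[\epsilon_g]$ is an affine \emph{decreasing} function of $\mathcal{S}(\mathcal{O})$. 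Because it manipulates an (expected) equality rather than a bound, monotonicity of $\mathbb{E}[\epsilon_g]$ in $\mathcal{S}(\mathcal{O})$ follows directly. Your plan, by contrast, produces only an \emph{upper bound} on $\epsilon_g(\mathcal{O})$ that is decreasing in $\mathcal{S}(\mathcal{O})$ and then "substitutes the three sequences and chains with the first bullet." That step does not follow: ordering three upper bounds $B(\mathcal{O}_h)\ge B(\mathcal{O}_r)\ge B(\mathcal{O}_e)$ gives no information about the ordering of $\epsilon_g(\mathcal{O}_h),\epsilon_g(\mathcal{O}_r),\epsilon_g(\mathcal{O}_e)$ themselves. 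To repair your route you would need the regularity lemma to deliver two-sided control, e.g.\ $\alpha\,(C-\mathcal{S}(\mathcal{O}))\le \epsilon_g(\mathcal{O})-\epsilon_0 \le \beta\,(C-\mathcal{S}(\mathcal{O}))$ with constants independent of the ordering, or a direct monotonicity statement for $\epsilon_g$ itself, not merely a drift-based upper bound. Absent that, the conclusion of the second bullet is not established, whereas the paper's (admittedly stylized and assumption-heavy) closed-form route does reach it.
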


\Cref{thm:  similarity-error} theoretically demonstrates that as task similarity decreases, the upper bound of the generalization error increases significantly. \Cref{fig: 23} illustrates the trend between inter-task similarity scores and corresponding model performance for all possible class sequences. The majority of methods show a positive correlation, empirically supporting this result by showing a consistent decline in model accuracy as task similarity decreases. Motivated by these observations, we take advantage of inter-task similarity to construct extreme class sequences, which facilitates a more thorough and representative evaluation of CIL.

\subsection{Extreme Sequence Generation Algorithm and Proposed Protocol}\label{sec: alo}

\begin{figure}[!t]
    \centering
    \includegraphics[width=1\linewidth]{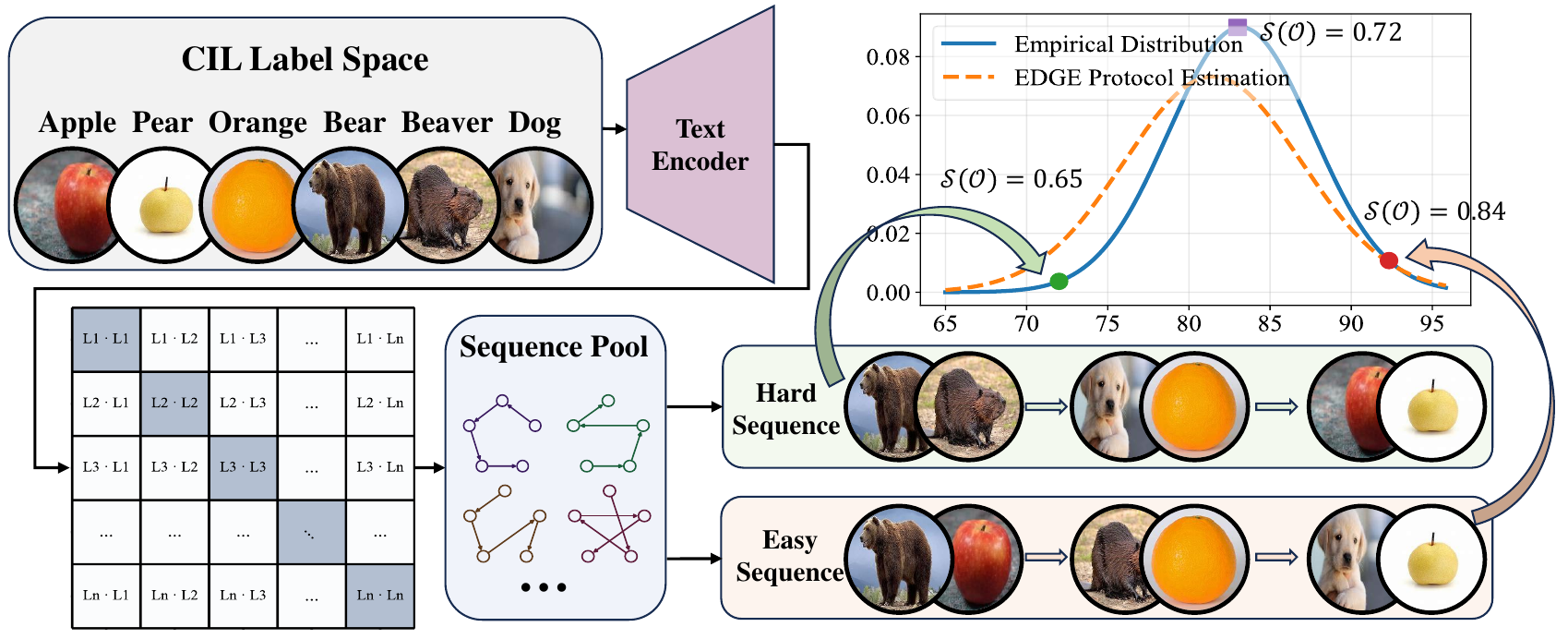}
    \captionof{figure}{
Illustration of the EDGE evaluation protocol. The sequence with a green background represents a hard case, where similar classes (e.g., apples and pears) appear within the same task, resulting in low inter-task similarity. The sequence with an orange background represents an easy case, where similar classes are distributed across different tasks, leading to high inter-task similarity.
}
    \label{fig: 2}
    \vspace{-0.5cm}
\end{figure}

\Cref{fig: 2} illustrates the proposed EDGE evaluation protocol. Given a dataset, since direct access to image instances is unavailable, we leverage the text encoder from a pre-trained CLIP model to embed class labels. Specifically, each class label is encoded into a $d$-dimensional semantic feature vector via the CLIP text encoder \( \Phi \), forming a label feature set \( \mathcal{L} = \{\mathbf{L}_1, \dots, \mathbf{L}_N\} \), where \( \mathbf{L}_i = \Phi(y_i) \in \mathbb{R}^d \).
By computing cosine similarities between these label features, we construct a symmetric similarity matrix \( \mathbf{D} \in \mathbb{R}^{N \times N} \), where each entry \( d_{ij} = \frac{\mathbf{L}_i \cdot \mathbf{L}_j}{\|\mathbf{L}_i\| \cdot \|\mathbf{L}_j\|} \) quantifies the semantic similarity between classes \( i \) and \( j \).
Based on the similarity matrix $\mathbf{D}$, we generate candidate class sequences by hierarchically clustering \citep{nielsen2016hierarchical} semantically similar classes and strategically selecting subsequent tasks to minimize or maximize inter-task similarity. Two optimal permutations are identified using \Cref{eq: sim}: $\mathcal{O}_h = \arg\min_{o \in \Omega} \mathcal{S}(o)$ for the hardest sequence and $\mathcal{O}_e = \arg\max_{o \in \Omega} \mathcal{S}(o)$ for the easiest sequence. To ensure the total number of sampled sequences remains unchanged, we randomly select one sequence as the \textit{Median Sequence}, which is theoretically guaranteed to lie near the center of the distribution \Cref{thm: sample}. By evaluating models on this triplet of task sequences, we approximate the true performance distribution and enable a more comprehensive assessment of model capability.

To generate hard sequences, we first cluster classes based on semantic similarity using hierarchical clustering \citep{nielsen2016hierarchical}. To encourage semantically similar classes to be grouped into the same task, we preserve large clusters intact and selectively split smaller ones as needed, ensuring all classes are assigned to \(K\) tasks while minimizing global inter-task similarity.

After constructing task partitions, we compute the inter-task similarity matrix \(\mathbf{ITS} \in \mathbb{R}^{K \times K}\) and initialize the sequence with the task exhibiting the lowest global similarity. Subsequent tasks are iteratively selected based on minimal similarity to the current task, forming candidate sequences.

By varying the clustering granularity, we generate multiple candidates and select the one with the lowest overall similarity score \(\mathcal{S}(o)\). The easy sequence is constructed analogously, except that similar classes are intentionally assigned to different tasks, and each next task is selected based on maximal similarity to the previous one. \textit{Pseudo-code and analysis are provided in the Appendix}.
\section{Experiment}
Due to the exponential growth in the number of possible class sequences in CIL scenarios (as shown in \Cref{lemma: comb}), obtaining the true performance distribution under standard experimental settings is infeasible. We therefore divide our experimental evaluation into two parts. First, we conduct \textbf{fully enumerable} experiments on subsets of standard datasets, enabling quantitative analysis to validate the effectiveness of the proposed EDGE protocol. Second, we perform analytical experiments under standard benchmark settings, visually demonstrating EDGE’s strong capability in capturing extreme performance cases.

\subsection{Enumerable Experiments}\label{sec: ex1}

\subsubsection{Experimental Setup}

\begin{table}[t]
\centering
\caption{Experimental results of pre-trained-based methods on two datasets. The gray region indicates the ground-truth values, and the best results are highlighted in bold black.}
\label{tab:merged}
\resizebox{\textwidth}{!}{%
\begin{tabular}{lcccccc@{\hspace{8pt}}cccccc}
\toprule
% Dataset labels spanning each block
& \multicolumn{6}{c}{\textbf{CIFAR-100}} & \multicolumn{6}{c}{\textbf{ImageNet-R}} \\
\cmidrule(lr){2-7} \cmidrule(lr){8-13}
Metric & L2P & \begin{tabular}[c]{@{}c@{}}CODA-\\ Prompt\end{tabular} & \begin{tabular}[c]{@{}c@{}}Hide-\\ Prompt\end{tabular} & EASE & MOS & RanPAC
& L2P & \begin{tabular}[c]{@{}c@{}}CODA-\\ Prompt\end{tabular} & \begin{tabular}[c]{@{}c@{}}Hide-\\ Prompt\end{tabular} & EASE & MOS & RanPAC \\
\midrule
\cellcolor{gray!20}$\min_{\mathcal{A}_{\mathcal{N}}}$ & \cellcolor{gray!20}71.83 & \cellcolor{gray!20}64.67 & \cellcolor{gray!20}72.50 & \cellcolor{gray!20}95.33 & \cellcolor{gray!20}92.17 & \cellcolor{gray!20}98.50
& \cellcolor{gray!20}57.75 & \cellcolor{gray!20}18.72 & \cellcolor{gray!20}65.90 & \cellcolor{gray!20}88.24 & \cellcolor{gray!20}85.56 & \cellcolor{gray!20}90.91 \\
RS & 83.83 & 76.83 & 79.67 & 95.50 & 95.33 & \textbf{98.67}
& 68.98 & 39.04 & 78.34 & 88.77 & 87.17 & \textbf{93.05} \\
EDGE & \textbf{72.83} & \textbf{73.00} & \textbf{73.00} & \textbf{95.33} & \textbf{93.83} & \textbf{98.67}
& \textbf{66.31} & \textbf{21.93} & \textbf{71.89} & \textbf{88.24} & \textbf{86.10} & \textbf{93.05} \\
\midrule
\cellcolor{gray!20}$\max_{\mathcal{A}_{\mathcal{N}}}$ & \cellcolor{gray!20}92.50 & \cellcolor{gray!20}87.00 & \cellcolor{gray!20}91.17 & \cellcolor{gray!20}95.83 & \cellcolor{gray!20}97.67 & \cellcolor{gray!20}98.83
& \cellcolor{gray!20}83.42 & \cellcolor{gray!20}57.75 & \cellcolor{gray!20}82.95 & \cellcolor{gray!20}88.77 & \cellcolor{gray!20}91.98 & \cellcolor{gray!20}96.26 \\
RS & 87.33 & 81.83 & 90.67 & 95.50 & 95.83 & \textbf{98.83}
& 75.40 & 43.85 & \textbf{78.34} & \textbf{88.77} & 87.70 & \textbf{95.19} \\
EDGE & \textbf{92.00} & \textbf{84.17} & \textbf{90.75} & \textbf{95.67} & \textbf{96.67} & \textbf{98.83}
& \textbf{77.54} & \textbf{45.45} & 76.04 & \textbf{88.77} & \textbf{91.44} & \textbf{95.19} \\
\midrule
$JSD_{RS}$ & 0.44 & 0.38 & 0.34 &  \textbf{0.00}  & \textbf{0.15} &  \textbf{0.00} 
& 0.23 & 0.65 & 0.41 &  \textbf{0.00}  & 0.37 & 0.57 \\
$JSD_{EDGE}$ & \textbf{0.30} & \textbf{0.28} & \textbf{0.22} &  \textbf{0.00}  & \textbf{0.15} &  \textbf{0.00} 
& \textbf{0.21} & \textbf{0.20} & \textbf{0.18} & \textbf{0.00} & \textbf{0.17} & \textbf{0.36} \\
\midrule
$W_{RS}$ & 2.81 & 2.92 & 3.89 &  \textbf{0.00}  & 0.48 &  \textbf{0.00} 
& 1.59 & 9.85 & 2.44 &  \textbf{0.00}  & 1.18 & 2.25 \\
$W_{EDGE}$ & \textbf{2.00} & \textbf{2.03} & \textbf{1.42} &  \textbf{0.00}  & \textbf{0.22} &  \textbf{0.00} 
& \textbf{1.74} & \textbf{2.37} & \textbf{1.11} &  \textbf{0.00}  & \textbf{0.77} & \textbf{1.07} \\
\bottomrule
\end{tabular}%
}
\vspace{-0.5cm}
\end{table}

\textbf{Dataset and Metrics.} We conduct experiments on the CIFAR-100 and ImageNet-R \citep{krizhevsky2009learning} datasets. For each dataset, we select the first six classes and partition them into three tasks, generating 90 possible task permutations, which we consider the true distribution ($\mathcal{D}_{true}$). Next, we apply the RS evaluation protocol (using random seeds 0, 42, and 1993 \citep{lai2025order,li2025caprompt,mcdonnell2024ranpac,wang2022learning}) to generate class sequences for evaluation, obtaining the estimated distribution $\mathcal{D}_{RS}$. Simultaneously, we employ the EDGE protocol to perform th e evaluation, yielding the estimated distribution $\mathcal{D}_{EDGE}$. To quantitatively assess the effectiveness of different evaluation strategies, we use the JSD divergence and Wasserstein distance ($JSD_d$ \citep{lamberti2007jensen} and $W_d$ \citep{villani2009wasserstein}) to measure the differences between the estimated and true distributions.

\noindent\textbf{Baseline.} To ensure a fair comparison, we benchmark our method under both non-pre-trained and pre-trained settings against classic and state-of-the-art approaches: in the non-pre-trained setting, we compare with EWC \citep{kirkpatrick2017overcoming}, DER \citep{yan2021dynamically}, iCaRL \citep{rebuffi2017icarl}, FOSTER \citep{wang2022foster}, MEMO \citep{zhou2023model}, and TagFex \citep{zheng2025task}; in the pre-trained setting, following Sun et al. \citep{sun2023pilot}, we evaluate against L2P \citep{wang2022learning}, CODA-Prompt \citep{smith2023coda}, HidePrompt \citep{wang2023hierarchical}, EASE \citep{zhou2024expandable}, and MOS \citep{sun2025mos}.%, and GDDSG \citep{lai2025order}.

\noindent\textbf{Implementation Details.} Our framework is implemented in PyTorch, and the code is provided in the \textit{supplementary materials}. Complete experimental details can be found in the \textit{appendix}.

\subsubsection{Experiment Results}

\Cref{tab:merged,tab:merged2} present the experimental results of two types of methods on the CIFAR-100 and ImageNet-R datasets. The results are organized into four sections: the first section shows the true lower performance bound (highlighted in gray) along with the lower bounds estimated by RS and EDGE; the second section similarly compares the upper performance bounds. The third and fourth sections display the JSD Divergence and the Wasserstein Distance, respectively, between the estimated distributions from both methods and the true distribution. Based on these experimental results, we draw the following conclusions:

\begin{table}[t]
\centering
\caption{Experimental results of non-pre-trained-based methods on two datasets. Details are consistent with those in \Cref{tab:merged}.}
\label{tab:merged2}
\resizebox{\textwidth}{!}{%
\begin{tabular}{lccccc@{\hspace{8pt}}cccccc}
\toprule
 & \multicolumn{5}{c}{\textbf{CIFAR-100}} & \multicolumn{6}{c}{\textbf{ImageNet-R}} \\
\cmidrule(lr){2-6} \cmidrule(lr){7-12}
Metric & EWC & DER & iCaRL & FOSTER & MEMO  & EWC & DER & iCaRL & FOSTER & MEMO & TagFex \\
\midrule
\cellcolor{gray!20}$\min_{\mathcal{A}_{\mathcal{N}}}$ 
 & \cellcolor{gray!20}12.50 & \cellcolor{gray!20}16.83 & \cellcolor{gray!20}36.33 & \cellcolor{gray!20}16.00 & \cellcolor{gray!20}21.83  
 & \cellcolor{gray!20}10.06 & \cellcolor{gray!20}11.73 & \cellcolor{gray!20}10.61 & \cellcolor{gray!20}11.17 & \cellcolor{gray!20}12.85 & \cellcolor{gray!20}11.73 \\
RS 
 & 26.17 & \textbf{24.17} & 43.00 & 20.67 & 36.50  
 & 16.76 & \textbf{11.73} & \textbf{14.53} & 15.08 & \textbf{15.05} & 18.99 \\
EDGE 
 & \textbf{12.50} & 26.35 & \textbf{38.50} & \textbf{16.67} & \textbf{35.67} 
 & \textbf{10.61} & 11.97 & \textbf{14.53} & \textbf{11.73} & 15.44 & \textbf{14.53} \\
\midrule
\cellcolor{gray!20}$\max_{\mathcal{A}_{\mathcal{N}}}$ 
 & \cellcolor{gray!20}39.00 & \cellcolor{gray!20}45.50 & \cellcolor{gray!20}53.33 & \cellcolor{gray!20}38.33 & \cellcolor{gray!20}56.67  
 & \cellcolor{gray!20}27.93 & \cellcolor{gray!20}26.26 & \cellcolor{gray!20}43.58 & \cellcolor{gray!20}26.82 & \cellcolor{gray!20}30.73 & \cellcolor{gray!20}21.23 \\
RS 
 & 27.50 & 34.17 & 43.00 & 23.50 & 51.17  
 & 21.23 & 18.99 & 22.91 & \textbf{24.02} & 21.23 & 20.11 \\
EDGE 
 & \textbf{28.17} & \textbf{41.33} & \textbf{43.33} & \textbf{30.17} & \textbf{56.67}  
 & \textbf{24.58} & \textbf{21.23} & \textbf{26.82} & 23.95 & \textbf{28.49} & \textbf{20.67} \\
\midrule
$JSD_{RS}$ 
 & 0.51 & \textbf{0.29} & 0.36 & 0.58 & 0.29  
 & 0.36 & 0.32 & 0.30 & 0.22 & 0.38 & 0.44 \\
$JSD_{EDGE}$ 
 & \textbf{0.29} & 0.31 & \textbf{0.32} & \textbf{0.40} & \textbf{0.23}  
 & \textbf{0.26} & \textbf{0.20} & \textbf{0.21} & \textbf{0.20} & \textbf{0.16} & \textbf{0.15} \\
\midrule
$W_{RS}$ 
 & 4.74 & 4.62 & 2.37 & 6.25 & 2.00 
 & 2.40 & 2.14 & 3.14 & 2.11 & 2.99 & 3.14 \\
$W_{EDGE}$ 
 & \textbf{3.44} & \textbf{3.22} & \textbf{2.03} & \textbf{3.91} & \textbf{1.82}   
 & \textbf{2.03} & \textbf{1.07} & \textbf{2.71} & \textbf{1.03} & \textbf{1.66} & \textbf{0.88} \\
\bottomrule
\end{tabular}%
}
\vspace{-0.5cm}
\end{table}

\begin{itemize}[leftmargin=*]
    \item \textbf{RS leads to inaccurate and unfair comparisons.} RS produces biased estimates of performance boundaries, which may lead to unfair comparisons among models. For example, when evaluating non-pre-trained methods on the CIFAR-100 dataset, the lower bound estimated by RS for EWC (26.17\%) is significantly higher than its true lower bound (12.50\%). Notably, although the true lower bound of DER (16.83\%) is actually better than that of EWC, the RS estimate suggests a worse lower bound for DER (24.17\%), leading to an erroneous conclusion. In contrast, EDGE provides more accurate estimations of these boundaries, thereby avoiding such incorrect comparisons.
    \item \textbf{EDGE captures extremes and supports more comprehensive evaluation.} In the vast majority of experimental cases, the performance bounds estimated by EDGE are significantly closer to the ground-truth bounds (gray area) than those estimated by RS. Furthermore, EDGE demonstrates a stronger capability in approximating the true performance distribution, as reflected in its consistently lower or equal JSD Divergence and Wasserstein Distance values compared to RS in most scenarios. 
    \item \textbf{Multiple methods may converge to similar worst-case performance under hard sequences.} On the challenging ImageNet-R dataset, the true lower-bound performance (i.e., worst-case accuracy) of multiple non-pre-trained methods clusters within a narrow range of 10.06\% to 12.85\%. This consistency suggests that task difficulty itself, rather than architectural differences, constitutes the primary bottleneck in this setting. EDGE helps model developers recognize this phenomenon, highlighting that variations in model design have limited impact under such conditions.
    \item \textbf{The accuracy of boundary estimation is correlated with model performance stability.} A model’s performance stability directly affects how accurately its bounds can be estimated. Models with stable performance and low variance (e.g., EASE, MOS, and RanPAC in \Cref{tab:merged}) enable both RS and EDGE to estimate bounds accurately, yielding near-zero JSD and Wasserstein Distance. In contrast, models with high performance fluctuation (e.g., non-pre-trained methods in \Cref{tab:merged2}) pose greater challenges for bound estimation. It is in these cases that EDGE shows a clearer advantage over RS, producing closer bound estimates and lower distribution distances.
\end{itemize}

\begin{figure}[!t]
  \centering
  \begin{subfigure}[b]{0.48\linewidth}
    \captionsetup{skip=2pt}
    \centering
    \includegraphics[width=\linewidth]{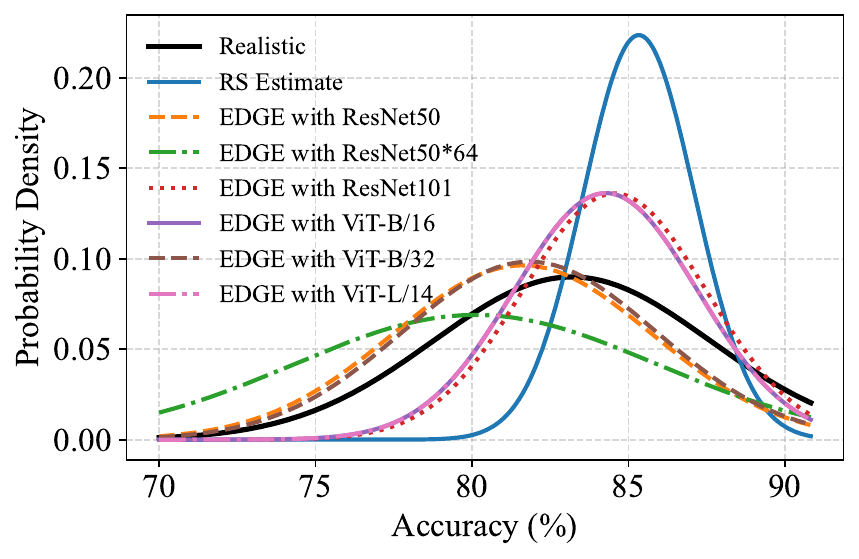}
    \caption{Hide-Prompt}
    \label{fig: 61}
  \end{subfigure}
  \hspace{0.01\linewidth}
  \begin{subfigure}[b]{0.48\linewidth}
    \captionsetup{skip=2pt}
    \centering
    \includegraphics[width=\linewidth]{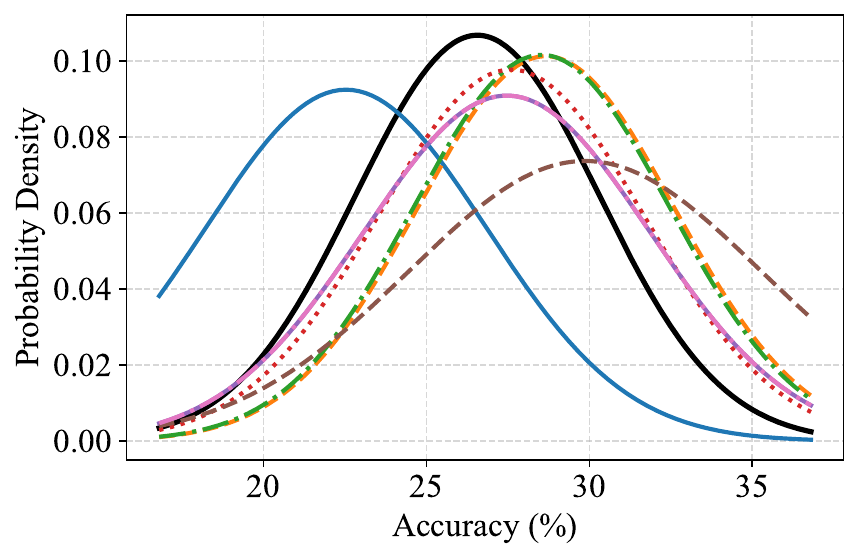}
    \caption{TagFex}
    \label{fig: 62}
  \end{subfigure}

  \caption{Effect of task sequences generated with CLIP text encoders of varying scales on the estimation of performance distributions under the EDGE protocol. The black curve denotes the ground-truth distribution, and the blue curve indicates the estimation obtained via the RS protocol.}
  \label{fig: 6}
  \vspace{-0.5cm}
\end{figure}

\textbf{EDGE is robust across different configurations.} \Cref{fig: 6} demonstrates that the EDGE protocol maintains high estimation accuracy under various settings, including different model backbones (e.g., ResNet vs. ViT) and different sizes of the CLIP text encoder. In all cases, EDGE consistently outperforms the RS protocol by providing estimates that more closely align with the ground-truth performance distribution. This highlights the reliability and generalizability of EDGE across diverse model architectures and embedding capacities.
\textit{Additional results and detailed analyses for other experimental settings are provided in the appendix.}

\subsection{Experiments on Classic CIL Settings}\label{sec: ex2}

Following the classic CIL setup, we conduct experiments using three datasets: CIFAR-100 \citep{krizhevsky2009learning}, CUB-200 \citep{wah2011caltech}, ImageNet-R \citep{krizhevsky2009learning}. Each dataset is partitioned into multiple tasks of equal size.
\Cref{fig: classic} visualizes the maximum and minimum accuracy values ($\max_{\mathcal{A}}$ and $\min_{\mathcal{A}}$) of the sampled sequences under each protocol, highlighting their ability to capture the extremes of the performance distribution.The results demonstrate that EDGE consistently achieves both a lower estimated lower bound and a higher upper bound across nearly all scenarios, including highly stable methods such as EASE \citep{zhou2024expandable} and RanPAC \citep{mcdonnell2024ranpac}. This allows it to identify rare but critical performance extremes, providing a more reliable and practical assessment of performance for real-world deployments.
\textit{For more detailed analysis, please refer to the Appendix.}

\begin{figure}[!h]
  \centering
  \begin{subfigure}[b]{0.32\linewidth}
    \centering
    \includegraphics[width=\linewidth]{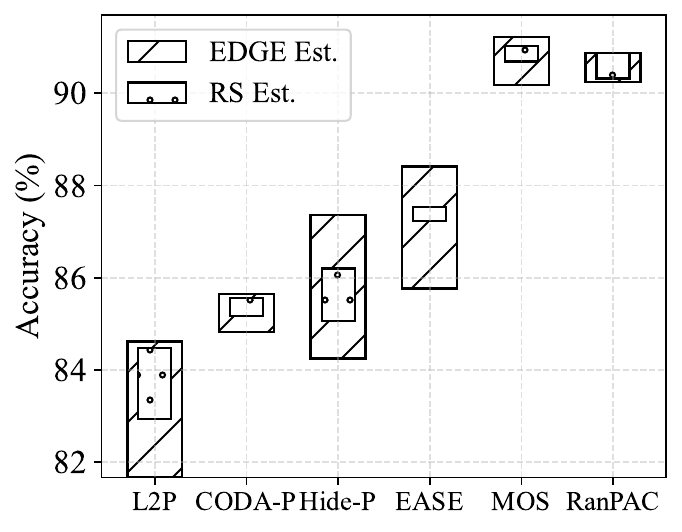}
    \caption{Under CIFAR-100}
    \label{fig: c_cifar}
  \end{subfigure}
  \hfill
  \begin{subfigure}[b]{0.32\linewidth}
    \centering
    \includegraphics[width=\linewidth]{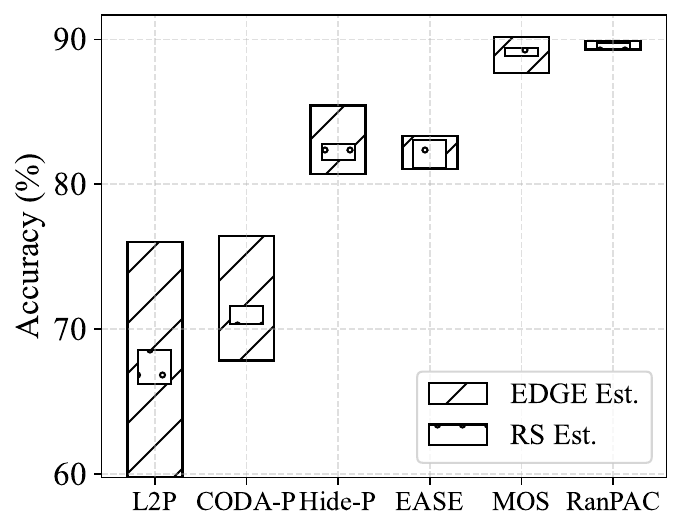}
    \caption{Under CUB-200}
    \label{fig: c_cub}
  \end{subfigure}
  \hfill
  \begin{subfigure}[b]{0.32\linewidth}
    \centering
    \includegraphics[width=\linewidth]{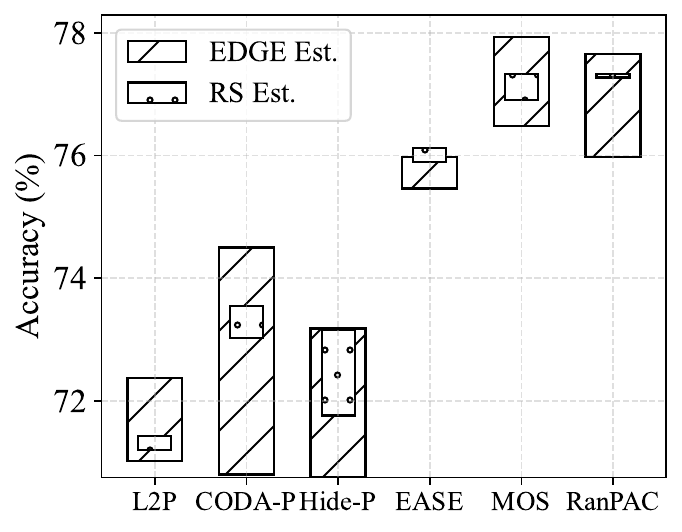}
    \caption{Under ImageNet-R}
    \label{fig: c_imagenet}
  \end{subfigure}
  
    \caption{
    Visualization of the estimated lower and upper performance bounds across three datasets under the classic CIL setting: (a) CIFAR-100, (b) CUB-200, and (c) ImageNet-R. 
    The slashed bars (/) denote the proposed \textbf{EDGE}, while the dotted bars (.) correspond to the existing \textbf{RS} protocol.
    }
  \label{fig: classic}

\end{figure}
\section{Conclusions}\label{sec: 6}
Class incremental learning (CIL) is inherently sensitive to the class-arrival order, making evaluation a distributional problem rather than a single-point estimate. In this paper, we revisit the mainstream random sampling (RS) protocol and show, both theoretically and empirically, that estimating performance from only a few randomly drawn sequences can yield biased means, severely underestimated variance, and misleading conclusions about robustness under rare but critical class orders.

To address this issue, we propose EDGE, an extreme case–aware evaluation protocol that leverages inter-task similarity to construct representative easy, medium, and hard sequences, thereby providing a more faithful approximation of the underlying performance distribution and its boundaries. Our analysis establishes the key role of extreme sequences in sample-efficient distribution estimation, and motivates similarity-guided sequence construction. Extensive experiments on both fully enumerable settings and classic CIL benchmarks demonstrate that EDGE more reliably captures performance extremes and yields distribution estimates closer to the ground truth than RS, offering more actionable evidence for model comparison, selection, and robustness checking in realistic deployments.

\subsubsection*{Acknowledgments}
This work is partially supported by NSFC (62376118, 62506160, 62476228), the Basic Research Program of Jiangsu (BK 20251251), and JSTJ-2025-147.

\subsubsection*{Ethics statement}
This work does not involve human subjects, personally identifiable information, or sensitive data. All experiments were conducted on publicly available benchmark datasets under their respective licenses, and no private or restricted data were used. The proposed evaluation protocol, EDGE, is designed to provide more reliable assessments of continual learning models and does not directly introduce risks of harm. The authors affirm that this research complies with the ICLR Code of Ethics and with standard practices of research integrity and transparency.

\subsubsection*{Reproducibility Statement}

We are committed to ensuring the reproducibility of our work. To this end, we provide the following: (1) \textbf{Code availability}: Our code is publicly available at \url{https://github.com/AIGNLAI/EDGE}. Detailed instructions for reproducing our experiments are provided in \Cref{sec:edge-repo-usage}. (2) \textbf{Theoretical results}: All assumptions are clearly stated, and complete proofs of our main theorems are presented in \Cref{sec:a2,sec:pro2}. (3) \textbf{Experimental details and additional results}: Further dataset descriptions, additional experiments, and analyses are provided in \Cref{sec:b2,sec:d12}.

\bibliography{iclr2026_conference}
\bibliographystyle{iclr2026_conference}
\clearpage
\appendix
\section*{Appendix}

The appendix is organized as follows:

\begin{itemize}[leftmargin=*]
    \item \Cref{sec: notation} introduces the notations and mathematical symbols used throughout the paper, providing a clear reference for theoretical and algorithmic components.
    
    \item \Cref{sec:a2} presents a detailed analysis of the existing RS (Random Sampling) protocol, including formal proofs of its limitations and additional empirical results that support our claims.
    
    \item \Cref{sec:a3} provides a comprehensive discussion of the proposed EDGE protocol. This includes pseudo-code, step-by-step explanations of the sequence generation algorithm, and theoretical justification for its effectiveness.
    
    \item \Cref{sec:a4} reports extended experimental results and offers in-depth analysis of the observed patterns. These findings provide new insights into the design and selection of CIL algorithms under varying sequence difficulties.

    \item \Cref{sec:edge-repo-usage} provides a practical guide to using our open-source EDGE repository, including installation, running commands for the random vs.\ EDGE protocols, and notes on the currently supported datasets and how to extend EDGE to new datasets.
\end{itemize}

\setcounter{table}{0}
\renewcommand{\thetable}{A\arabic{table}}
\setcounter{figure}{0}
\renewcommand{\thefigure}{A\arabic{figure}}
\setcounter{lemma}{0}
\setcounter{theorem}{0}

\section{Notation}\label{sec: notation}

\begin{table}[!ht]
\caption{Notations and their explanations used throughout this paper.}
\label{tab: notation}
\centering
\begin{tabular}{cc}
\hline
Notation & Explanation \\ \hline
$N$ & Total number of classes. \\ [4pt]
$\mathcal{N}(\mu,\sigma^2)$ & Gaussian (normal) distribution with mean $\mu$ and variance $\sigma^2$. \\ [4pt]
$\mathcal{D}^i$ & Training dataset for task $i$, consisting of $n_i$ input–label pairs. \\ [4pt]
$CLS^i$ & Set of classes associated with task $i$. \\ [4pt]
$\Omega$ & Sample space of all possible class sequences. \\ [4pt]
$\mathcal{O}\in \Omega$ & A specific ordered class sequence of length $K$. \\ [4pt]
$\mathcal{P}_{\mathrm{true}}$ & True (but unknown) distribution of the model’s performance. \\ [4pt]
$\mathcal{A}(\mathcal{O})$ & Final accuracy achieved by the model on sequence $\mathcal{O}$. \\ [4pt]
$K$ & Total number of tasks (i.e., the length of each sequence). \\ [4pt]
$L$ & Number of sampled sequences used for estimation. \\ [4pt]
$\delta$ & Allowed failure probability (so confidence is $1-\delta$). \\ [4pt]
$\epsilon$ & Tolerance for estimation error. \\ [4pt]
$\mathcal{S}(\mathcal{O})$ & Inter-task similarity score for sequence $\mathcal{O}$. \\ [4pt]
$\epsilon_g$ & Theoretical upper bound on the generalization error. \\ [4pt]
$\Phi$ & CLIP text encoder mapping text tokens to $d$‑dimensional embeddings. \\ [4pt]
$\mathbf{L}$ & Matrix of text embeddings for the $C$ class labels. \\ [4pt]
$\mathbf{D}$ & Similarity matrix between label embeddings (e.g.\ cosine similarity). \\ [4pt]
$\mathbf{ITS}$ & Inter-task similarity matrix aggregated from $\mathbf{D}$. \\ \hline
\end{tabular}
\end{table}

\Cref{tab: notation} provides a detailed description of the notations used throughout the paper, facilitating a clearer understanding of the mathematical formulations and algorithmic procedures.

\section{Detailed Analysis of RS Protocol}\label{sec:a2}

\subsection{Theoretical Analysis and Proof}

\begin{lemma}
Let \(N\) be the total number of classes, partitioned into \(K\) tasks of equal size \(M = N / K\).  Then the number of distinct class sequences is $\lvert \Omega \rvert \;=\; \frac{N!}{(M!)^K}.$ 
Moreover, under linear scaling \(K = \Theta(N)\), the quantity \(\lvert \Omega \rvert\) grows factorially, satisfying $\lvert \Omega \rvert \;=\;\Omega\bigl((N/e)^N\bigr),$ 
which asymptotically dwarfs any polynomial‐scale sampling capacity as \(N \to \infty\).
\end{lemma}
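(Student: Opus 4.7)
The plan is to establish the two claims separately. For the exact count of $|\Omega|$, I would interpret a class sequence as an ordered partition of the $N$ classes into $K$ tasks of size $M$, where the order of the tasks matters (since they arrive sequentially) but the internal order of classes within a task does not (since $CLS^i$ is a set). Counting by first permuting all $N$ classes in $N!$ ways and then quotienting by the $(M!)^K$ within-task reorderings that produce the same ordered partition yields the multinomial coefficient $|\Omega| = N!/(M!)^K$. A brief sanity check in an extreme regime ($K = N$, $M = 1$, giving $|\Omega| = N!$; or $K = 1$, $M = N$, giving $|\Omega| = 1$) confirms the formula.

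For the asymptotic lower bound under $K = \Theta(N)$, so that $M = N/K$ is a constant, I would apply Stirling's inequality $N! \geq (N/e)^N \sqrt{2\pi N}$ together with the elementary estimate $M! \leq M^M$, which gives $(M!)^K \leq M^{MK} = M^N$. Dividing yields
\[
|\Omega| \;\geq\; \frac{(N/e)^N \sqrt{2\pi N}}{M^N} \;=\; \left(\frac{N}{eM}\right)^{N}\sqrt{2\pi N}.
\]
Because $M$ is a constant under linear scaling, the right-hand side is $(N/e)^N$ up to an exponentially bounded factor in $N$, and in particular super-polynomial in $N$. This is precisely the qualitative content needed for the subsequent sample-complexity analysis in \Cref{thm: sample}, since it implies $|\Omega|$ overwhelms any polynomial sampling budget.

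The main subtlety I anticipate is that the constant hidden inside the asymptotic $\Omega(\cdot)$ depends on $M$: the dominant term is really $(N/(eM))^N$ rather than $(N/e)^N$, so the stated bound is tight only in the extreme regime $M=1$ (i.e.\ $K=N$). I would address this either by treating $M$ as a fixed constant and absorbing $M^{-N}$ into the hidden constant under the convention $K=\Theta(N)$, or by noting that the qualitative conclusion — super-polynomial, indeed super-exponential growth — is insensitive to the precise base of the exponent. Either way, the proof reduces to a one-line combinatorial identity plus a standard invocation of Stirling, so no deep technical obstacle arises.
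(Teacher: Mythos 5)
Your proposal is correct and takes essentially the same route as the paper's own proof: the exact count follows from the multinomial-coefficient argument (permute all $N$ classes and quotient by $(M!)^K$ intra-task reorderings), and the asymptotic bound follows from Stirling applied to $N!$ together with the observation that $(M!)^K$ grows only exponentially when $M = \Theta(1)$. The subtlety you flag is real and worth noting: the paper's derivation actually produces $|\Omega| = \Omega\bigl((N/(ec))^N\bigr)$ for a constant $c = (M!)^{1/M} > 1$ when $M>1$, and then asserts $|\Omega| = \Omega\bigl((N/e)^N\bigr)$ without justification — which does not literally follow, since $(N/(ec))^N = o\bigl((N/e)^N\bigr)$ for $c>1$. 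As you observe, the stated form is tight only for $M=1$, but the qualitative conclusion (super-polynomial, indeed super-exponential, growth) is unaffected; your suggestion to either fix $M$ and absorb the $M^{-N}$ factor into the statement, or simply state the bound with the $M$-dependent base, is the right way to tighten the exposition.
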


\begin{proof}
We begin by noting that the total number of distinct permutations of \(N\) classes is \(N!\). When partitioning these classes into \(K\) tasks of equal size \(M = N/K\), each individual task contains \(M\) unordered classes. Since the order of classes within a task is irrelevant but the order of tasks themselves is preserved, we must quotient out the intra-task permutations.

For each task, there are \(M!\) ways to permute its classes internally. Since there are \(K\) such tasks, the total number of intra-task permutations is \((M!)^K\). Consequently, the total number of distinct class sequences that respect this task-based structure is given by
$\lvert \Omega \rvert = \frac{N!}{(M!)^K}.$
To analyze the growth rate of \(\lvert \Omega \rvert\), assume a linear scaling regime where \(K = \Theta(N)\). Then \(M = N / K = \Theta(1)\), implying that \(M!\) is constant with respect to \(N\). Therefore, \((M!)^K = \Theta(c^N)\) for some constant \(c > 0\).

By Stirling's approximation, we have:
\begin{equation}
    N! \sim \sqrt{2\pi N}\left(\frac{N}{e}\right)^N.
\end{equation}
Thus,
\begin{equation}
    \lvert \Omega \rvert = \frac{N!}{(M!)^K} = \Omega\left(\frac{(N/e)^N}{c^N}\right) = \Omega\left(\left(\frac{N}{ec}\right)^N\right),
\end{equation}
which shows that \(\lvert \Omega \rvert\) grows at least as fast as \((N/e')^N\) for some constant \(e' > e\), i.e., $\lvert \Omega \rvert = \Omega\left((N/e)^N\right).$

Finally, note that any polynomial function in \(N\) is dominated by \((N/e)^N\) as \(N \to \infty\). Hence, the number of possible class sequences \(\lvert \Omega \rvert\) asymptotically exceeds any polynomial-scale sampling budget, concluding the proof.
\end{proof}

\begin{theorem}\label{thm: a1}
Let \(\Omega\) be the set of all possible class sequences with \(\lvert \Omega\rvert\) elements, and fix tolerance \(\varepsilon>0\) and failure probability \(\delta\in(0,1)\). Suppose we draw \(L\) sequences \(\{RS_l\}_{l=1}^L\) \emph{without} replacement uniformly from \(\Omega\), and let $\widehat{\mathcal{A}}_L \;=\;\frac{1}{L}\sum_{l=1}^L \mathcal{A}(RS_l)$ be the empirical mean, respectively.  Then for any \(\varepsilon>0\), if
\begin{equation}
  L\,\frac{\lvert\Omega\rvert - L}{\lvert\Omega\rvert - 1}
  \;\ge\;
  \frac{\ln\bigl(2\,\lvert \Omega\rvert/\delta\bigr)}{2\,\varepsilon^2},
\end{equation}
then with probability at least \(1-\delta\), $\bigl\lvert \widehat{\mathcal{A}}_L - \mathbb{E}_{\omega}[\mathcal{A}(\omega)]\bigr\rvert 
  \;\le\; \varepsilon.$
\end{theorem}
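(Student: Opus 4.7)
The plan is to combine a concentration inequality for sampling without replacement with an elementary union bound, and then invert the resulting tail probability to yield the stated sample-complexity condition on $L$.

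\textbf{Step 1: Bounded range.} First I would observe that the accuracy functional satisfies $\mathcal{A}(\omega)\in[0,1]$ for every $\omega\in\Omega$, so each sampled accuracy lies in a bounded interval of width one. This is the only regularity hypothesis required to invoke Hoeffding-style concentration.

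\textbf{Step 2: Hoeffding-Serfling for without-replacement sampling.} The central tool is Serfling's refinement of Hoeffding's inequality for uniform sampling without replacement from a finite population, which, for values in $[0,1]$, takes the form
$$\Pr\!\left(\bigl|\widehat{\mathcal{A}}_L-\mathbb{E}_\omega[\mathcal{A}(\omega)]\bigr|\geq\varepsilon\right)\;\le\;2\exp\!\left(-\,2\varepsilon^2\,L\,\tfrac{|\Omega|-L}{|\Omega|-1}\right).$$
The factor $\tfrac{|\Omega|-L}{|\Omega|-1}$ is precisely the classical finite-population variance correction; it sharpens the i.i.d.\ Hoeffding bound by exploiting the negative dependence of without-replacement draws and recovers the standard bound as $|\Omega|\to\infty$. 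This is why the remark correctly observes that for $L\ll|\Omega|$ the correction factor is approximately one and the with-replacement sample complexity is recovered.

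\textbf{Step 3: Union bound and inversion.} A union bound over the $|\Omega|$ realizations that a sampled index can take inflates the preceding right-hand side by a factor of $|\Omega|$, producing
$$\Pr\!\left(\bigl|\widehat{\mathcal{A}}_L-\mathbb{E}_\omega[\mathcal{A}(\omega)]\bigr|\geq\varepsilon\right)\;\le\;2|\Omega|\exp\!\left(-2\varepsilon^2 L\,\tfrac{|\Omega|-L}{|\Omega|-1}\right).$$
Setting the right-hand side to $\delta$ and taking logarithms gives
$$2\varepsilon^2 L\,\tfrac{|\Omega|-L}{|\Omega|-1}\;\geq\;\ln\!\bigl(2|\Omega|/\delta\bigr),$$
which rearranges exactly into the hypothesis \eqref{eq:sample_complexity_no_replacement}. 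Consequently, with probability at least $1-\delta$, $|\widehat{\mathcal{A}}_L-\mathbb{E}_\omega[\mathcal{A}(\omega)]|\leq\varepsilon$, as claimed.

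\textbf{Main obstacle.} The principal technical hurdle is establishing the Hoeffding-Serfling inequality with the correction $\tfrac{|\Omega|-L}{|\Omega|-1}$, since the usual i.i.d.\ Hoeffding argument breaks down in the absence of independence. The cleanest derivation proceeds via a martingale-difference decomposition along the sample path of without-replacement draws followed by Azuma's inequality, though Hoeffding's original (1963) reverse-coupling argument and the more recent Bardenet-Maillard sharpening are equivalent alternatives. Once that inequality is in hand, the union bound of Step 3 and the final algebraic inversion are entirely routine.
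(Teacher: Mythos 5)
Your proposal is correct and follows essentially the same route as the paper: both rely on a Hoeffding-type concentration bound for sampling without replacement carrying the finite-population correction $\tfrac{|\Omega|-L}{|\Omega|-1}$, then apply a union bound with factor $|\Omega|$, and invert to obtain the sample-complexity condition. The only cosmetic difference is that you invoke the Hoeffding--Serfling inequality as a known result whereas the paper sketches its derivation via a Doob martingale and Azuma's inequality --- but you explicitly note in your "main obstacle" paragraph that this martingale-plus-Azuma argument is precisely how one would establish the inequality, so the two proofs coincide in substance.
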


\begin{proof}
Let \(\Omega = \{\omega_1, \omega_2, \ldots, \omega_{\lvert \Omega \rvert}\}\) denote the finite set of all possible class sequences. Define the true mean accuracy as
\begin{equation}
    \mu = \mathbb{E}_{\omega \sim \Omega}[\mathcal{A}(\omega)] = \frac{1}{\lvert \Omega \rvert} \sum_{i=1}^{\lvert \Omega \rvert} \mathcal{A}(\omega_i).
\end{equation}
Let \(\{RS_1, RS_2, \dots, RS_L\}\) be a sample of size \(L\) drawn uniformly at random \emph{without replacement} from \(\Omega\), and define the empirical mean
\begin{equation}
    \widehat{\mathcal{A}}_L = \frac{1}{L} \sum_{i=1}^L \mathcal{A}(RS_i).
\end{equation}
Our goal is to bound the deviation probability \(\mathbb{P}(|\widehat{\mathcal{A}}_L - \mu| \ge \varepsilon)\), under the assumption that \(\mathcal{A}(\cdot) \in [0,1]\) for all \(\omega \in \Omega\).
Let us define the Doob martingale sequence
\begin{equation}
    Z_0 = \mathbb{E}[\widehat{\mathcal{A}}_L], \quad
Z_i = \mathbb{E}[\widehat{\mathcal{A}}_L \mid RS_1, \ldots, RS_i], \quad i = 1,\ldots,L.
\end{equation}
Then \(\{Z_i\}_{i=0}^L\) forms a martingale with respect to the filtration \(\mathcal{F}_i = \sigma(RS_1, \ldots, RS_i)\), and we have:
\begin{equation}
    Z_0 = \mu, \quad Z_L = \widehat{\mathcal{A}}_L, \quad \text{and} \quad \widehat{\mathcal{A}}_L - \mu = Z_L - Z_0 = \sum_{i=1}^L (Z_i - Z_{i-1}).
\end{equation}
Since the sampling is without replacement from a bounded set \(\mathcal{A}(\omega) \in [0,1]\), we can bound each martingale difference:
\begin{equation}
    |Z_i - Z_{i-1}| \le \frac{1}{L} \cdot \sqrt{\frac{\lvert \Omega \rvert - L}{\lvert \Omega \rvert - 1}}, \quad \text{for all } i = 1,\ldots,L.
\end{equation}
This bound can be obtained via an extension of McDiarmid's inequality for sampling without replacement, or directly computed via sensitivity analysis of the sample mean with respect to one replacement in the sequence.
Thus, the variance proxy is bounded as:
\begin{equation}
\sum_{i=1}^L (Z_i - Z_{i-1})^2 \le L \cdot \left( \frac{1}{L^2} \cdot \frac{\lvert \Omega \rvert - L}{\lvert \Omega \rvert - 1} \right) = \frac{1}{L} \cdot \frac{\lvert \Omega \rvert - L}{\lvert \Omega \rvert - 1}.
\end{equation}
Using the standard Azuma–Hoeffding inequality for martingales with bounded increments, we obtain:
\begin{equation}
    \mathbb{P} \left( \left| \widehat{\mathcal{A}}_L - \mu \right| \ge \varepsilon \right)
\le 2 \exp\left( -\frac{\varepsilon^2}{2 \sum_{i=1}^L (Z_i - Z_{i-1})^2} \right)
\le 2 \exp\left( -2 \varepsilon^2 L \cdot \frac{\lvert \Omega \rvert - L}{\lvert \Omega \rvert - 1} \right).
\end{equation}
To ensure that the deviation probability is at most \(\delta/\lvert \Omega \rvert\) for each of the \(\lvert \Omega \rvert\) possible values (for use in a union bound), it suffices that:
\begin{equation}
    2 \exp\left( -2 \varepsilon^2 L \cdot \frac{\lvert \Omega \rvert - L}{\lvert \Omega \rvert - 1} \right) \le \frac{\delta}{\lvert \Omega \rvert}.
\end{equation}
Solving this inequality, we take logarithms on both sides:
\begin{equation}
    -2 \varepsilon^2 L \cdot \frac{\lvert \Omega \rvert - L}{\lvert \Omega \rvert - 1} \le \ln(\delta / 2S),
\end{equation}
which is equivalent to:
\begin{equation}
    L \cdot \frac{\lvert \Omega \rvert - L}{\lvert \Omega \rvert - 1} \ge \frac{\ln(2\lvert \Omega \rvert/\delta)}{2\varepsilon^2}.
\end{equation}
\end{proof}

\begin{theorem}\label{thm: a2}
Let \(\Omega\) be the set of all class sequences, and define
$\mu \;=\;\mathbb{E}_{\omega\sim\Omega}[\mathcal{A}(\omega)]$, $ 
\sigma \;=\;\sqrt{\mathrm{Var}_{\omega\sim\Omega}[\mathcal{A}(\omega)]}
$
as the realistic mean and standard deviation of the accuracy function \(\mathcal{A}\).  Suppose we know two extreme sequences
$\omega_{+},\;\omega_{-}$
satisfying
\(\mathcal{A}(\omega_{+}) - \mu \ge \sigma\)
and
\(\mu - \mathcal{A}(\omega_{-}) \ge \sigma\).
Draw \(L\) sequences
\(\{RS_l\}_{l=1}^L\)
\emph{without} replacement uniformly from \(\Omega \setminus \{\omega_{+},\omega_{-}\}\), and define
$\widetilde{\mathcal{A}}_{L+2}
\;=\;
\frac{1}{L+2}
\Bigl[
  \mathcal{A}(\omega_{-})
  + \mathcal{A}(\omega_{+})
  + \sum_{l=1}^L \mathcal{A}(RS_l)
\Bigr].$
Then for any \(\varepsilon>0\) and \(\delta\in(0,1)\), if
\begin{equation}
  L \;\frac{\lvert\Omega\rvert -2  - L}{\lvert\Omega\rvert - 3}
  \;\ge\;
  \frac{\displaystyle \ln\!\bigl(2\,(\lvert\Omega\rvert -2) /\delta\bigr)\;\bigl(R^{(\sigma)}\bigr)^2}
       {2\,\varepsilon^2},
\end{equation}
where
$R^{(\sigma)}
\;=\;
\mathcal{A}(\omega_{+})-\mathcal{A}(\omega_{-}),$
then with probability at least \(1-\delta\),
$\bigl\lvert \widetilde{\mathcal{A}}_{L+2}
  - \mathbb{E}_{\omega\sim\Omega}[\mathcal{A}(\omega)]
\bigr\rvert
\;\le\;\varepsilon.$
\end{theorem}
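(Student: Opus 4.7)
The strategy is to reduce Theorem~\ref{thm: a2} to Theorem~\ref{thm: a1} by treating the two fixed sequences $\omega_{+},\omega_{-}$ as a deterministic ``backbone'' and applying the Azuma–Hoeffding martingale argument only to the $L$ random samples, which are drawn without replacement from the reduced universe $\Omega' = \Omega\setminus\{\omega_{+},\omega_{-}\}$ of size $|\Omega|-2$.

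First I would introduce the restricted population mean $\mu' = \mathbb{E}_{\omega\sim\Omega'}[\mathcal{A}(\omega)]$ and the empirical mean of the random portion $\widehat{\mathcal{A}}_L' = \frac{1}{L}\sum_{l=1}^L \mathcal{A}(RS_l)$, and write the clean decomposition
\begin{equation*}
\widetilde{\mathcal{A}}_{L+2} - \mu
\;=\; \frac{L}{L+2}\bigl(\widehat{\mathcal{A}}_L' - \mu'\bigr)
\;+\; \frac{1}{L+2}\bigl(\mathcal{A}(\omega_{+}) + \mathcal{A}(\omega_{-}) + L\mu' - (L+2)\mu\bigr).
\end{equation*}
Using the elementary identity $|\Omega|\,\mu \;=\; (|\Omega|-2)\mu' + \mathcal{A}(\omega_{+}) + \mathcal{A}(\omega_{-})$, the second (deterministic) term collapses to $\frac{\mathcal{A}(\omega_{+})+\mathcal{A}(\omega_{-})-2\mu}{L+2}\cdot\frac{|\Omega|-2-L}{|\Omega|-2}$, and it vanishes precisely when the two extreme sequences are symmetrically placed around $\mu$.

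Second I would control the stochastic first term by rerunning the Doob-martingale construction from Theorem~\ref{thm: a1} on the reduced universe. Setting $Z_i = \mathbb{E}[\widehat{\mathcal{A}}_L' \mid RS_1,\ldots,RS_i]$, and using that every $\mathcal{A}(\omega)$ for $\omega\in\Omega'$ lies in an interval of length at most $R^{(\sigma)} = \mathcal{A}(\omega_{+})-\mathcal{A}(\omega_{-})$ (by treating $\omega_{\pm}$ as the range-witnessing pair of the accuracy function), the standard without-replacement sensitivity calculation yields $|Z_i - Z_{i-1}| \le \frac{R^{(\sigma)}}{L}\sqrt{(|\Omega|-2-L)/(|\Omega|-3)}$. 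Summing squared increments and applying Azuma–Hoeffding gives
\begin{equation*}
\mathbb{P}\bigl(|\widehat{\mathcal{A}}_L' - \mu'| \ge \varepsilon\bigr)
\;\le\;
2\exp\!\left(-\,\frac{2\varepsilon^2\,L}{(R^{(\sigma)})^2}\cdot\frac{|\Omega|-2-L}{|\Omega|-3}\right).
\end{equation*}
A union bound over the $|\Omega|-2$ elements of $\Omega'$, exactly mirroring the last step of Theorem~\ref{thm: a1}, forces the RHS below $\delta$ under the condition stated in the theorem; since the prefactor $L/(L+2)\le 1$ only shrinks the stochastic contribution, the same bound transfers to $\widetilde{\mathcal{A}}_{L+2}$ up to the deterministic term.

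I expect the main obstacle to be this deterministic bias $(\mathcal{A}(\omega_{+})+\mathcal{A}(\omega_{-})-2\mu)/(L+2)$. The hypotheses furnish only the one-sided inequalities $\mathcal{A}(\omega_{+})-\mu\ge\sigma$ and $\mu-\mathcal{A}(\omega_{-})\ge\sigma$, which separately do not force $\mathcal{A}(\omega_{+})+\mathcal{A}(\omega_{-})=2\mu$, so the bias does not vanish a priori. A fully rigorous proof will therefore need either a symmetric-extremes strengthening of the hypothesis, a crude absorption step that uses $|\mathcal{A}(\omega_{+})+\mathcal{A}(\omega_{-})-2\mu|\le R^{(\sigma)}$ together with the damping factor $1/(L+2)$ to hide the bias inside $\varepsilon$, or a reinterpretation of the target (bounding the deviation from $\mathbb{E}[\widetilde{\mathcal{A}}_{L+2}]$ rather than from $\mu$). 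The concentration inequality itself is a direct adaptation of Theorem~\ref{thm: a1}; the only genuinely new bookkeeping is the reduced effective population $|\Omega|-2$ and the reduced effective range $R^{(\sigma)}$ replacing the trivial range $1$, which is exactly what converts the sample budget from $\Omega(1/\varepsilon^2)$ to $\Omega((R^{(\sigma)})^2/\varepsilon^2)$ and gives the practical reduction highlighted in the remark.
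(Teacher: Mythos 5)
Your proposal follows essentially the same route as the paper's proof: a Doob martingale over the $L$ without-replacement draws, Azuma--Hoeffding with the reduced range $R^{(\sigma)}$ in place of the trivial range $1$, and a union bound over $\lvert\Omega\rvert-2$ elements. The difference is that you decompose $\widetilde{\mathcal{A}}_{L+2}-\mu$ explicitly into a stochastic fluctuation plus a deterministic bias before applying the martingale, whereas the paper applies the martingale directly to $\widetilde{\mathcal{A}}_{L+2}$ and asserts without justification that $Z_0=\mathbb{E}[\widetilde{\mathcal{A}}_{L+2}]=\mu$. Your bookkeeping shows that this assertion is false in general: with $\mu'$ the mean over $\Omega\setminus\{\omega_+,\omega_-\}$, the identity $\lvert\Omega\rvert\mu=(\lvert\Omega\rvert-2)\mu'+\mathcal{A}(\omega_+)+\mathcal{A}(\omega_-)$ gives $\mathbb{E}[\widetilde{\mathcal{A}}_{L+2}]=\mu$ if and only if $\mathcal{A}(\omega_+)+\mathcal{A}(\omega_-)=2\mu$, which the one-sided hypotheses $\mathcal{A}(\omega_+)-\mu\ge\sigma$ and $\mu-\mathcal{A}(\omega_-)\ge\sigma$ do not imply. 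So you have surfaced a genuine gap in the paper's own proof, and the repairs you list (a symmetric-extremes hypothesis, absorbing the $O(1/(L+2))$ bias into $\varepsilon$, or retargeting the bound at $\mathbb{E}[\widetilde{\mathcal{A}}_{L+2}]$) are exactly the fixes needed.

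You also correctly flag the other unstated strengthening shared by both arguments: bounding the martingale increments by a multiple of $R^{(\sigma)}$ requires every $\mathcal{A}(\omega)$ for $\omega\in\Omega\setminus\{\omega_+,\omega_-\}$ to lie between $\mathcal{A}(\omega_-)$ and $\mathcal{A}(\omega_+)$. The paper invokes this with a single clause (``Since each $\mathcal{A}(RS_i)$ lies between the two extremes''), but the theorem statement only posits that the extremes are at least $\sigma$ from the mean, which does not make them range witnesses. In short, your concentration argument is the paper's; what you add is an honest accounting of the deterministic term and an explicit record of the two places where the theorem's hypotheses, as stated, fall short of what the proof needs.
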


\begin{proof}
Let
\(\Omega=\{\omega_1,\dots,\omega_{|\Omega|}\}\) and write
\begin{equation}
  \mu
  =
  \frac{1}{|\Omega|}\sum_{i=1}^{|\Omega|}\mathcal{A}(\omega_i),
  \quad
  \sigma^2
  =
  \mathrm{Var}_{\omega\sim\Omega}[\mathcal{A}(\omega)]
\end{equation}
By assumption there exist sequences \(\omega_+,\omega_-\) satisfying
\begin{equation}
  \mathcal{A}(\omega_+) - \mu \ge \sigma,
  \quad
  \mu - \mathcal{A}(\omega_-) \ge \sigma.
\end{equation}
Define the total range
\begin{equation}
  R^{(\sigma)}
  =
  \mathcal{A}(\omega_+) - \mathcal{A}(\omega_-)
  \;\ge\;2\sigma.
\end{equation}
Draw \(L\) samples \(\{RS_i\}_{i=1}^L\) without replacement from
\(\Omega' = \Omega \setminus \{\omega_+,\omega_-\}\), and set
\begin{equation}
  \widetilde{\mathcal{A}}_{L+2}
  =
  \frac{1}{L+2}\Bigl[\mathcal{A}(\omega_-)+\mathcal{A}(\omega_+)
    + \sum_{i=1}^L \mathcal{A}(RS_i)\Bigr].
\end{equation}
Consider the Doob martingale
\begin{equation}
  Z_0 = \mathbb{E}[\widetilde{\mathcal{A}}_{L+2}],
  \quad
  Z_i = \mathbb{E}[\widetilde{\mathcal{A}}_{L+2} \mid RS_1,\dots,RS_i],
  \quad i=1,\dots,L.
\end{equation}
Then
\begin{equation}
  Z_0 = \mu,
  \quad
  Z_L = \widetilde{\mathcal{A}}_{L+2},
  \quad
  \widetilde{\mathcal{A}}_{L+2}-\mu = \sum_{i=1}^L (Z_i - Z_{i-1}).
\end{equation}
\textit{Since each \(\mathcal{A}(RS_i)\) lies between the two extremes, replacing one sample can change the sum by at most
\(R^{(\sigma)}\)}.  Moreover, because we sample without replacement
from a set of size \(|\Omega|-2\), the sensitivity of the average
\(\widetilde{\mathcal{A}}_{L+2}\) to a single replacement is further
scaled by \(\sqrt{(\,|\Omega|-2 - L\,)/(|\Omega|-3)}\).  Altogether one obtains
\begin{equation}
  |Z_i - Z_{i-1}|
  \le
  \frac{1}{L+2}\;R^{(\sigma)}
  \;\sqrt{\frac{|\Omega|-2 - L}{|\Omega|-3}},
  \quad i=1,\dots,L.
\end{equation}
Hence, the sum of squared increments is bounded by
\begin{equation}
  \sum_{i=1}^L (Z_i - Z_{i-1})^2
  \le
  L\;\Bigl(\frac{R^{(\sigma)}}{L+2}\Bigr)^2
  \;\frac{|\Omega|-2 - L}{|\Omega|-3}.
\end{equation}
By Azuma–Hoeffding,
\begin{equation}
  \Pr\bigl(|\widetilde{\mathcal{A}}_{L+2}-\mu| \ge \varepsilon\bigr)
  \le
  2\exp\biggl(
    -\frac{\varepsilon^2}{2\sum_{i=1}^L (Z_i - Z_{i-1})^2}
  \biggr)
  \le
  2\exp\Bigl(
    -2\varepsilon^2\;L\;\frac{|\Omega|-2 - L}{|\Omega|-3}\;(R^{(\sigma)})^{-2}
  \Bigr).
\end{equation}
Requiring this probability to be at most \(\delta/(|\Omega|-2)\) and
solving for \(L\) gives
\begin{equation}
  L\;\frac{|\Omega|-2 - L}{|\Omega|-3}
  \ge
  \frac{\ln\bigl(2(|\Omega|-2)/\delta\bigr)\,(R^{(\sigma)})^2}
       {2\,\varepsilon^2}.
\end{equation}
\end{proof}

\subsection{Empirical Analysis}\label{sec:b2}

To empirically validate the distributional characteristics of performance metrics across different continual learning methods, we conducted experiments on CIFAR‑100 and ImageNet‑R. As shown in \Cref{tab: allpar}, the number of possible class sequences grows rapidly even with a small number of classes. To balance feasibility and distributional richness, we chose a configuration with 6 classes divided into 3 tasks of 2 classes each, yielding 90 possible class sequences. This setting is large enough to exhibit meaningful variation in performance, yet still allows complete enumeration of the sequence space. For each dataset, we randomly selected 6 classes and partitioned them accordingly. We then evaluated two groups of methods:

\begin{table}[ht]
\centering
\vspace{-0.5cm}
\caption{Number of possible class sequences \(|\Omega|\) under different partitions}
\label{tab:  allpar}
\begin{tabular}{ccccc}
\toprule
$N$ & $K$ & $M = N/K$ & Formula & $|\Omega| = \frac{N!}{(M!)^K}$ \\
\midrule
4 & 2 & 2 & $\frac{4!}{(2!)^2}$ & 6 \\
6 & 2 & 3 & $\frac{6!}{(3!)^2}$ & 20 \\
8 & 2 & 4 & $\frac{8!}{(4!)^2}$ & 70 \\
10 & 2 & 5 & $\frac{10!}{(5!)^2}$ & 252 \\
\textbf{6} & \textbf{3} & \textbf{2} & \textbf{$\frac{6!}{(2!)^3}$} & \textbf{90} \\
9 & 3 & 3 & $\frac{9!}{(3!)^3}$ & 1680 \\
8 & 4 & 2 & $\frac{8!}{(2!)^4}$ & 2520 \\
\bottomrule
\end{tabular}
\end{table}

\begin{itemize}[leftmargin=*]
  \item \textbf{Non‑pretrained CIL methods}: EWC \cite{kirkpatrick2017overcoming}, DER \cite{yan2021dynamically}, iCaRL \cite{rebuffi2017icarl}, FOSTER \cite{wang2022foster}, MEMO \cite{zhou2023model}, and TagFex \cite{zheng2025task}.
  \item \textbf{Pre‑trained CIL methods} : L2P \cite{wang2022learning}, CODA‑Prompt \cite{smith2023coda}, HidePrompt \cite{wang2023hierarchical}, EASE \cite{zhou2024expandable}, and MOS \cite{sun2025mos}.
\end{itemize}

\textbf{Observation 1: The capacity distribution of the model is \textbf{near-Gaussian}}

For each method–dataset pair, we collected the final task accuracies over the 90 sequences and applied a Box–Cox power transformation. The optimal parameter \(\lambda\) was chosen by maximizing the log‑likelihood under the normality assumption. We then performed three normality tests on the transformed accuracies: the Shapiro–Wilk test, D’Agostino’s \(K^2\) test, and the one‑sample Kolmogorov–Smirnov (KS) test. Each yields a \(p\)-value indicating the probability of observing the data under a Gaussian null hypothesis.

\begin{table}[!ht]
\centering
\caption{Normality test results, \textbf{demonstrating that the model capacity distribution approximates a Gaussian}}
\label{tab: normality_results}
\begin{tabular}{c c c c c c}
\toprule
Method   & Dataset       & $\lambda$   & Shapiro--Wilk $p$ & D'Agostino's $p$ & KS $p$    \\
\midrule
CODA-Prompt    & CIFAR-100     & 4.5971      & 0.4753            & 0.4357           & 0.8365    \\
CODA-Prompt    & ImageNet-R    & 2.3957      & 0.9321            & 0.7483           & 0.8811    \\
DER      & CIFAR-100     & 2.2577      & 0.3582            & 0.3030           & 0.3824    \\
DER      & ImageNet-R    & 0.0321      & 0.2755            & 0.5488           & 0.6937    \\
EWC      & CIFAR-100     & 1.5076      & 0.1926            & 0.1913           & 0.4064    \\
EWC      & ImageNet-R    & 0.7893      & 0.2854            & 0.5735           & 0.3519    \\
FOSTER   & CIFAR-100     & 2.8988      & 0.2215            & 0.9340           & 0.4010    \\
FOSTER   & ImageNet-R    & 0.1042      & 0.2216            & 0.1903           & 0.2880    \\
Hide-Prompt   & CIFAR-100     & 4.1164      & 0.5802            & 0.4904           & 0.4783    \\
Hide-Prompt   & ImageNet-R    & 1.7462      & 0.3538            & 0.8186           & 0.6373    \\
iCaRL    & CIFAR-100     & 7.2381      & 0.1369            & 0.0326           & 0.4413    \\
iCaRL    & ImageNet-R    & 1.1917      & 0.9577            & 0.9403           & 0.9870    \\
L2P      & CIFAR-100     & 11.8644     & 0.0554            & 0.3215           & 0.2628    \\
L2P      & ImageNet-R    & 4.4415      & 0.1850            & 0.1541           & 0.6715    \\
MEMO     & CIFAR-100     & 0.7041      & 0.1720            & 0.0524           & 0.7084    \\
MEMO     & ImageNet-R    & 0.6267      & 0.8781            & 0.8697           & 0.7357    \\
MOS    & CIFAR-100     & -12.9763    & 0.4968            & 0.7681           & 0.5230    \\
MOS    & ImageNet-R    & 3.0909      & 0.1546            & 0.7973           & 0.2497    \\
\bottomrule
\end{tabular}
\end{table}

The results in \Cref{tab: normality_results} indicate that, after Box–Cox transformation, most method–dataset combinations exhibit \(p\)-values above the conventional 0.05 threshold in at least two of the three tests, suggesting an adequate approximation to normality. 

In addition, methods not listed in \Cref{tab: normality_results}, such as RanPAC and EASE, produce a limited number of possible task sequences due to their architectural design, resulting in insufficient sample sizes for reliable normality testing; hence, their results are reported as \textit{n/a}.

\textbf{Observation 2: The sampling of RS protocol cannot reflect the realistic ability of the model well}

\Cref{fig: all_dis_npt} and \Cref{fig: all_dis_pre} illustrate the true performance distribution and the sampling locations obtained using the RS protocol (random seeds 0, 42, and 1993). Random sampling often fails to capture the true characteristics of the distribution. First, most sampled points cluster around the center of the distribution, making them ineffective in reflecting the model’s behavior under extreme conditions. Second, the randomness of the sampling process introduces significant uncertainty across different data types and datasets, as the sampling locations vary considerably. This variability leads to unstable evaluations, where some methods are overestimated while others are underestimated. Third, two major issues arise when using these randomly selected points to estimate the true distribution: the mean is inaccurately estimated, and the variance is severely underestimated. These problems together compromise the reliability of model evaluation under the RS protocol.

\begin{figure}[!ht]
    \centering
    \includegraphics[width=1\linewidth]{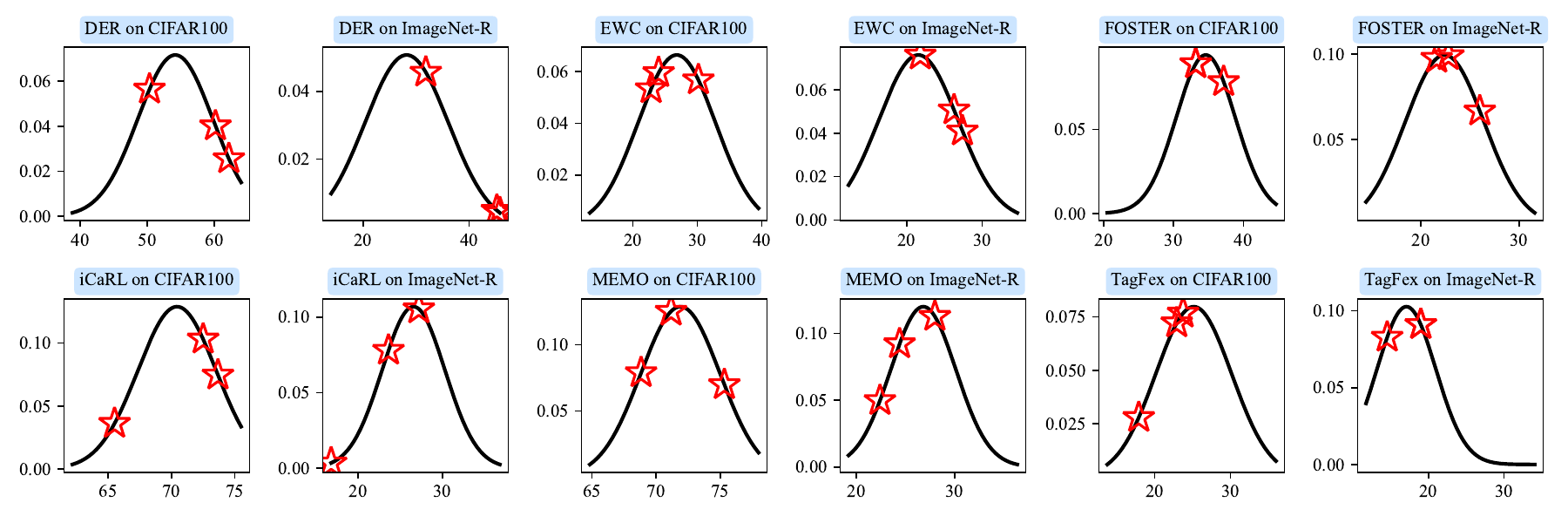}
    \caption{True performance distribution (black) and sampling positions under the RS protocol (blue) for non-pretrained CIL methods. The figure illustrates that RS fails to adequately capture the true distribution, leading to biased estimation.}

    \label{fig: all_dis_npt}
\end{figure}

\begin{figure}[!ht]
    \centering
    \includegraphics[width=1\linewidth]{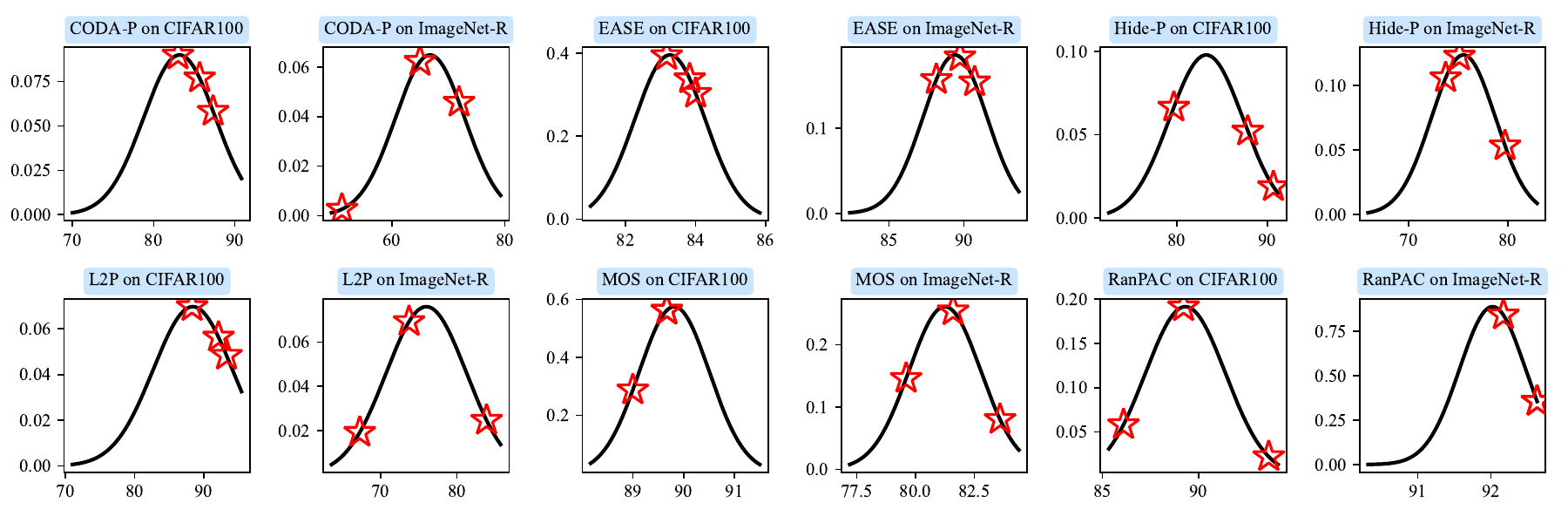}
    \caption{True data distribution and the sampling positions under the RS protocol (pre-trained CIL methods).}
    \label{fig: all_dis_pre}
\end{figure}

\textbf{Observation 3: The performance of the model is positively correlated with inter-task similarity}

Building upon the prior theoretical analysis and the first two observations, we recognize the necessity of incorporating extreme sequences for auxiliary evaluation. However, a key challenge lies in adaptively identifying such extreme sequences and determining a principled basis for algorithmic design. In the CIL setting, it is intuitively understood that when adjacent tasks exhibit low similarity, the model parameters undergo substantial changes during task transitions, increasing the risk of forgetting. Motivated by this intuition, we investigate the relationship between inter-task similarity and model performance. As shown in Figure 2c, a strong positive correlation is observed across most methods. This insight suggests that inter-task similarity can be a foundation for designing strategies to sample challenging sequences and evaluate model robustness.

\section{Detailed Analysis of EDGE}\label{sec:a3}

\subsection{Theoretical Analysis and Proof}\label{sec:pro2}

\begin{theorem}
Consider a CIL system consisting of \( K \) tasks, where each task \( T_k \) is associated with a data distribution \( \mathcal{D}_k \) and a class set \( \mathcal{C}_k \). The generalization error is defined as \( \epsilon_g = \frac{1}{K} \sum_{k=1}^K \mathbb{E}_{(x, y) \sim \mathcal{D}_k} [ L(h(x), y) ] \), where \( L(h(x), y) \) denotes the loss between the model prediction \( h(x) \) and the true label \( y \).
Given a task order \( \mathcal{O} = \{T_1, T_2, \dots, T_K\} \), the similarity score \( \mathcal{S}(\mathcal{O}) \) is defined as:
\begin{equation}
\label{eq: sim11}
\mathcal{S}(\mathcal{O}) = \frac{K}{(K-1)N}\sum_{1 \leq i  \leq K-1} \sum_{c \in \mathcal{C}_i} \sum_{c' \in \mathcal{C}_{i+1}} Sim(c, c'),
\end{equation}
where \( Sim(c, c') \) denotes the semantic similarity in the representation space between classes \( c \) and \( c' \), belonging to tasks \( T_i \) and \( T_j \), respectively.
Let \( \mathcal{O}_h \) and \( \mathcal{O}_e \) denote the sequences with the minimum and maximum similarity scores \( \mathcal{S}(\mathcal{O}) \), respectively, and let \( \mathcal{O}_r \) represent a randomly generated sequence. Then, the following conditions hold:

$\bullet$ The similarity score satisfies \( \mathcal{S}(\mathcal{O}_h) \leq \mathcal{S}(\mathcal{O}_r) \leq \mathcal{S}(\mathcal{O}_e) \), 

$\bullet$ The generalization error satisfy $\epsilon_g(\mathcal{O}_h) \geq \epsilon_g(\mathcal{O}_r) \geq \epsilon_g(\mathcal{O}_e).$

\end{theorem}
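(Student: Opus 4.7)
The plan is to dispatch the two claims separately, since they require very different levels of work. The first inequality $\mathcal{S}(\mathcal{O}_h)\le \mathcal{S}(\mathcal{O}_r)\le \mathcal{S}(\mathcal{O}_e)$ is essentially a definitional observation: by construction $\mathcal{O}_h=\arg\min_{\mathcal{O}\in\Omega}\mathcal{S}(\mathcal{O})$ and $\mathcal{O}_e=\arg\max_{\mathcal{O}\in\Omega}\mathcal{S}(\mathcal{O})$, so any sequence, including the randomly drawn $\mathcal{O}_r\in\Omega$, is sandwiched between them. I would dispose of this in a single line after unpacking the definition in \Cref{eq: sim11}.

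For the substantive second inequality, I would adopt a domain-adaptation-style bound in the spirit of Ben-David. The key observation is that training on task $T_{i+1}$ after $T_i$ induces a distribution shift, and the error accumulated on $T_i$ after that shift can be bounded by the sum of the in-task error and a divergence term $\mathrm{Div}(\mathcal{D}_i,\mathcal{D}_{i+1})$ (e.g., $\mathcal{H}$-divergence, total variation, or Wasserstein). Telescoping this along the sequence yields a bound of the form
\begin{equation}
\epsilon_g(\mathcal{O}) \;\le\; \epsilon_0 \;+\; \frac{C}{K-1}\sum_{i=1}^{K-1}\mathrm{Div}(\mathcal{D}_i,\mathcal{D}_{i+1}),
\end{equation}
where $\epsilon_0$ collects terms independent of the ordering and $C>0$ is a constant reflecting the hypothesis class.

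The crucial bridge is then to link the semantic similarity $\mathrm{Sim}(c,c')$ in representation space to the inter-distribution divergence between consecutive tasks. I would argue (or assume) that the per-class divergence satisfies a monotone relation $\mathrm{Div}(\mathcal{D}_c,\mathcal{D}_{c'})\le \alpha - \beta\,\mathrm{Sim}(c,c')$ for constants $\alpha,\beta>0$, so that averaging across the $N/K$ classes per adjacent task pair and summing over $i=1,\dots,K-1$ produces exactly the aggregate in \Cref{eq: sim11} up to normalization. Consequently $\epsilon_g(\mathcal{O})\le \epsilon_0'-\gamma\,\mathcal{S}(\mathcal{O})$, which is monotonically decreasing in $\mathcal{S}(\mathcal{O})$. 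Applying this bound to $\mathcal{O}_h,\mathcal{O}_r,\mathcal{O}_e$ and combining with the first claim yields the desired chain $\epsilon_g(\mathcal{O}_h)\ge\epsilon_g(\mathcal{O}_r)\ge\epsilon_g(\mathcal{O}_e)$.

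The main obstacle will be the divergence–similarity link: semantic cosine similarity in a CLIP embedding space is not a priori a bound on a formal distributional distance between image distributions. I would either formalise this with an additional Lipschitz-type assumption on the semantic embedding relative to the data generating process, or state it as a mild monotonicity hypothesis that is empirically corroborated by \Cref{fig: 23}. A secondary subtlety is that translating an ordering of \emph{upper bounds} into an ordering of \emph{actual} generalization errors strictly requires the bound to be tight enough (or uniform across sequences); I would make this explicit by assuming the base error $\epsilon_0$ is ordering-invariant, which is reasonable since it depends only on the class set $\bigcup_k \mathcal{C}_k$ and the learner, not on permutation order.
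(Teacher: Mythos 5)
Your proposal takes a genuinely different route from the paper. You are right that the first bullet is essentially definitional: $\mathcal{O}_h$ and $\mathcal{O}_e$ are the argmin and argmax of $\mathcal{S}$ over $\Omega$, so any $\mathcal{O}_r$ is sandwiched between them, and the paper's additional McDiarmid concentration argument for $\mathcal{O}_r$ is not actually needed for the claim as stated. For the second bullet, however, the paper does not use a Ben-David/domain-adaptation bound at all. It instead imports the closed-form generalization-error expression of Lin et al.\ \citep{lin2023theory} for overparameterized linear CIL, which contains the ordering-dependent term $\tfrac{1-r}{K}\sum_i r^{K-i}\sum_k \lVert w_k^*-w_i^*\rVert^2$. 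It then posits that each task-optimal weight is a sum of per-class vectors whose pairwise inner products are proportional to $Sim(\cdot,\cdot)$, and combines this with a conservation identity between intra- and inter-task similarity to collapse the whole expression to $C_2 - \gamma\,\mathcal{S}(\mathcal{O})$ with $\gamma>0$, i.e.\ an exact affine-decreasing dependence of $\epsilon_g$ on $\mathcal{S}(\mathcal{O})$, from which the error ordering follows immediately. Your route (telescoping a divergence bound and assuming a monotone $\mathrm{Div}$-versus-$\mathrm{Sim}$ bridge) is more generic and model-agnostic, but it only yields an ordering of \emph{upper bounds}, a gap you correctly flag; the paper escapes this by working with an exact formula, at the price of a restrictive linear-model hypothesis, and it handles your bridging concern (i) with an analogous unverified proportionality assumption $\langle v_m,v_n\rangle\propto Sim(m,n)$. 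In short, the two arguments trade restrictiveness (exact linear theory giving an equality) against looseness (bound-only ordering), and both lean on a similarity-to-geometry bridge that is postulated rather than derived.
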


\begin{lemma}\label{lemma: lin} \cite{lin2023theory}
    When $p \ge n + 2 $, we must have:
\begin{align}
    \mathbb{E}[\epsilon_g] &= \frac{r^T}{T}\sum_{i = 1}^{T-1}\lVert w_i^*\rVert^2 + \frac{1-r}{T}\sum_{i = 1}^T r^{T-i}\sum_{k=1}^T \lVert w_k^* -  w_i^*\rVert^2 \nonumber 
    \\ &+ \frac{p\sigma^2}{p-n-1}(1-r^T).
    \label{eq: e_gen}
\end{align}
where the overparameterization ratio \( r = 1 - \frac{n}{p} \) in this context quantifies the degree of overparameterization in a model, where \( n \) represents the sample size, and \( p \) denotes the number of model parameters. The coefficients \( c_{i,j} = (1 - r)(r^{T-i} - r^{j-i} + r^{T-j}) \), with \( 1 \leq i < j \leq T \), correspond to the indices of tasks, and \( \sigma \) denotes a coefficient representing the model's noise level.
\end{lemma}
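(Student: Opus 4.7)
The plan is to unroll the sequential minimum-norm update recursion for continual linear regression, take expectations over the iid Gaussian design matrices under the overparameterized regime $p\ge n+2$, and reduce the generalization error to a sum of bias terms in the $w_i^*$'s plus an isolated variance term. Throughout, I assume the standard setup implicit in the statement: at task $t$ the learner observes $(X_t,y_t)$ with $y_t = X_t w_t^* + \xi_t$, $X_t\in\mathbb{R}^{n\times p}$ Gaussian and independent across $t$, and updates by the continual min-norm rule.

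The key algebraic step is to write the update as $w_t = w_{t-1} + X_t^{+}(y_t - X_t w_{t-1}) = (I-P_t)w_{t-1} + P_t w_t^* + X_t^{+}\xi_t$, where $P_t = X_t^\top(X_tX_t^\top)^{-1}X_t$ is the orthogonal projection onto the $n$-dimensional row space of $X_t$. Unrolling from $w_0=0$ gives the explicit representation
\begin{equation}
w_T \;=\; \sum_{i=1}^T \Big(\prod_{j=i+1}^T (I-P_j)\Big)\bigl(P_i w_i^* + X_i^{+}\xi_i\bigr).
\end{equation}
The per-task generalization error then takes the form $\mathbb{E}\|w_T - w_k^*\|^2/p + \sigma^2$ under isotropic test covariates, and the theorem follows by averaging over $k=1,\dots,T$. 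The three expectation ingredients I would use are: (a) $\mathbb{E}[P_t] = (1-r)I$, so that $\mathbb{E}\|(I-P_t)v\|^2 = r\|v\|^2$ for any $v$ independent of $X_t$, which generates the geometric factors $r^{T-i}$; (b) isotropy of $P_t$ under the Haar measure on $n$-planes, which lets cross-terms $(I-P_{j_1})\cdots P_{i_1}\cdots (I-P_{j_2})$ collapse via the tower property; and (c) the inverse-Wishart moment $\mathbb{E}\|X_i^{+}\xi_i\|^2 = n\sigma^2/(p-n-1)$, which is where the factor $p/(p-n-1)$ and the hypothesis $p\ge n+2$ enter. Matching terms then reproduces (i) the ``pure bias'' term $r^T\|w_i^*\|^2/T$ from sequences where the projection never hits the $i$-th subspace, (ii) the ``drift'' term with coefficient $(1-r)r^{T-i}/T$ on $\|w_k^*-w_i^*\|^2$ from the mixed contributions $P_i w_i^* - w_k^*$ weighted by the geometric survival factor, and (iii) the noise term $\tfrac{p\sigma^2}{p-n-1}(1-r^T)$ obtained by summing the independent noise injections with $\sum_{i=1}^{T} r^{T-i}(1-r) = 1-r^T$.

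The main obstacle I expect is the bookkeeping in step (b): cross-terms of the form $\mathbb{E}\bigl[\langle (\prod_{j>i_1}(I-P_j))P_{i_1}w_{i_1}^*,\,(\prod_{j>i_2}(I-P_j))P_{i_2}w_{i_2}^*\rangle\bigr]$ with $i_1\ne i_2$ must be carefully handled to show that, after centering against $w_k^*$, they rearrange into pairwise distances $\|w_k^*-w_i^*\|^2$ rather than leaving residual inner products $\langle w_i^*,w_j^*\rangle$. This requires repeatedly invoking the fact that $P_{i_1}$ and $P_{i_2}$ are independent and both isotropic, combined with the identity $\mathbb{E}[P_iAP_i] = (1-r)^2 A + \tfrac{(1-r)(1-(1-r))}{?}\mathrm{tr}(A)I/p$ (the exact second-moment formula for a uniformly random rank-$n$ projector). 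Correctly matching the combinatorial coefficients so that the first sum runs only up to $T-1$ (reflecting that the most recent task's component of $w_T$ is replaced rather than shrunk) is the delicate step that distinguishes this formula from a naive geometric-decay heuristic.
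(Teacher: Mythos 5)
There is no in-paper proof to compare against here: the lemma is imported verbatim from \cite{lin2023theory} and used as a black box in the proof of the similarity--error theorem, so your proposal must be judged as a reconstruction of the argument in that reference. At the level of strategy you have the right skeleton, and it matches the cited source: sequential minimum-norm least squares gives $w_t=(I-P_t)w_{t-1}+P_tw_t^*+X_t^{+}\xi_t$ with $P_t$ a uniformly random rank-$n$ projector, unrolling yields your representation of $w_T$, the identities $\mathbb{E}[P_t]=(1-r)I$ and $\mathbb{E}\lVert(I-P_t)v\rVert^2=r\lVert v\rVert^2$ generate the geometric factors, and the inverse-Wishart trace moment (finite exactly when $p\ge n+2$) produces the $p\sigma^2/(p-n-1)$ factor, with $\sum_{i}(1-r)r^{T-i}=1-r^T$ closing the noise term.

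As a proof, however, there is a genuine gap, and you have correctly located it yourself: the entire content of the lemma lives in the cross-term bookkeeping of your step (b), and that step is not carried out. You leave the second-moment identity for a random projector with an unresolved denominator (the ``$?$'' in your formula for $\mathbb{E}[P_iAP_i]$), and you assert rather than derive that the mixed inner products reorganize into squared distances $\lVert w_k^*-w_i^*\rVert^2$ with coefficient $(1-r)r^{T-i}/T$, that no residual $\langle w_i^*,w_j^*\rangle$ terms survive, and that the pure-bias sum terminates at $T-1$. Note also that the lemma statement itself quotes coefficients $c_{i,j}=(1-r)(r^{T-i}-r^{j-i}+r^{T-j})$ for $i<j$ which do not appear in your expansion at all; these arise precisely from the pairwise cross terms you defer, so reproducing them is the acid test of whether your bookkeeping is right. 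Until that telescoping over $i<j$ is actually performed, the displayed formula is a plausible ansatz consistent with the easy marginal computations, not a derived identity. In short: right approach and right ingredients, but the step you flag as ``delicate'' is the proof, and it is missing.
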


\begin{proof}
First, consider the total cross-task similarity:
\begin{equation}
\sum_{1 \leq i  \leq K}\sum_{1 \leq j  \leq K} \sum_{c \in \mathcal{C}_i} \sum_{c' \in \mathcal{C}_{j}} Sim(c, c'),
\end{equation}
and the intra-task similarity:
\begin{equation}
\sum_{1 \leq i  \leq K}\sum_{c,c' \in \mathcal{C}_i} Sim(c, c').
\end{equation}
These satisfy the conservation relationship:
\begin{equation}\label{eq: const}
\sum_{1 \leq i  \leq K}\sum_{1 \leq j  \leq K} \sum_{c \in \mathcal{C}_i} \sum_{c' \in \mathcal{C}_{j}} Sim(c, c') + \sum_{1 \leq i  \leq K}\sum_{c,c' \in \mathcal{C}_i} Sim(c, c') = C_1.
\end{equation}

For optimal parameters between tasks $i$ and $j$, we have:
\begin{align}
\lVert w_i^* -  w_j^*\rVert^2 &\propto \lVert \sum_{m \in \mathcal{C}_i}v_m^* -  \sum_{n \in \mathcal{C}_j}v_n^*\rVert^2 \nonumber \\
&= \sum_{m \in \mathcal{C}_i}\sum_{n \in \mathcal{C}_i} \langle v_m,v_n\rangle + \sum_{m \in \mathcal{C}_j}\sum_{n \in \mathcal{C}_j}\langle v_m,v_n\rangle  -2\sum_{m \in \mathcal{C}_i}\sum_{n \in \mathcal{C}_j}\langle v_m,v_n\rangle \nonumber \\
&= \alpha \left(\sum_{c,c' \in \mathcal{C}_i} Sim(c, c') + \sum_{c,c' \in \mathcal{C}_j} Sim(c, c')\right)- \alpha \sum_{c \in \mathcal{C}_i} \sum_{c' \in \mathcal{C}_{j}} Sim(c, c'),
\end{align}
where $\alpha$ is a proportionality constant.

Substituting into the key term from Lemma~\ref{lemma: lin}:
\begin{align}
&\frac{1-r}{K}\sum_{i = 1}^K r^{K-i}\sum_{k=1}^K \lVert w_k^* -  w_i^*\rVert^2 \nonumber \\
&= \frac{1-r}{K}\sum_{i = 1}^K r^{K-i}\sum_{k=1}^K \alpha\left(\sum_{c,c' \in \mathcal{C}_i} Sim(c, c') + \sum_{c,c' \in \mathcal{C}_k} Sim(c, c') \right.  \left.- \sum_{c \in \mathcal{C}_i} \sum_{c' \in \mathcal{C}_{k}} Sim(c, c')\right) \nonumber \\
&= \frac{1-r}{K}\sum_{i=1}^Kr^{K-i}\alpha\left( (K-1) \sum_{c,c' \in \mathcal{C}_i}Sim(c,c')-\sum_{k=1}^K  \sum_{c \in \mathcal{C}_i} \sum_{c' \in \mathcal{C}_{k}} Sim(c, c')\right) \nonumber \\
&= \frac{1-r^K}{K}\alpha(K-1)\left(C_1 - \sum_{i=1}^K\sum_{j=1,j \ne i}^K \sum_{c \in \mathcal{C}_i} \sum_{c' \in \mathcal{C}_{j}} Sim(c, c')\right) \nonumber \\
&\quad - \frac{1-r}{K}\sum_{i=1}^Kr^{K-i}\alpha\sum_{k=1}^K  \sum_{c \in \mathcal{C}_i} \sum_{c' \in \mathcal{C}_{k}} Sim(c, c') \nonumber \\
&= (1-r^K)\alpha C_1-\frac{1-r}{K}\alpha \sum_{i=1}^K\sum_{k=1,k \ne i}^K r^{k-i}  \sum_{c \in \mathcal{C}_i} \sum_{c' \in \mathcal{C}_{k}} Sim(c, c') \nonumber \\
&= C_2 - 2\frac{r-r^2}{K}\alpha\sum_{1 \leq i  \leq K-1} \sum_{c \in \mathcal{C}_i} \sum_{c' \in \mathcal{C}_{i+1}} Sim(c, c'),
\end{align}
where $C_2$ contains terms independent of the task ordering. 

To establish the probabilistic bound for random sequences, let $\Omega$ denote the set of all possible task permutations and $\mathcal{O}_r \sim \text{Unif}(\Omega)$. Define the random variable $X_{i,j} = \sum_{c \in \mathcal{C}_i}\sum_{c' \in \mathcal{C}_j} Sim(c,c')$. The similarity score can be rewritten as:
\begin{equation}
\mathcal{S}(\mathcal{O}) = \frac{K}{(K-1)N} \sum_{t=1}^{K-1} X_{\pi(t),\pi(t+1)},
\end{equation}
where $\pi$ is the permutation function. By symmetry, the probability that any two distinct tasks $T_i$ and $T_j$ are adjacent in a random permutation is $\frac{2}{K(K-1)}$. Thus, the expected similarity score is:
\begin{align}
\mathbb{E}[\mathcal{S}(\mathcal{O}_r)] &= \frac{K}{(K-1)N} \cdot \frac{2}{K(K-1)} \sum_{1 \leq i < j \leq K} X_{i,j} \nonumber \\
&= \frac{2}{(K-1)^2 N} \sum_{i \neq j} X_{i,j}.
\end{align}

Applying McDiarmid's inequality to the function $f(\pi) = \mathcal{S}(\mathcal{O})$, observe that swapping two tasks in $\pi$ changes $f(\pi)$ by at most $\frac{4U}{N}$, where $U$ is the upper bound of $Sim(c,c')$. This yields:
\begin{equation}
\mathbb{P}\left(|f(\pi) - \mathbb{E}[f]| \geq \delta\right) \leq 2\exp\left(-\frac{2\delta^2 N^2}{K(4U)^2}\right).
\end{equation}
Letting $\delta = \min(\mathbb{E}[f] - \mathcal{S}(\mathcal{O}_h), \mathcal{S}(\mathcal{O}_e) - \mathbb{E}[f])$, we obtain the concentration bound:
\begin{equation}
\mathbb{P}\left(\mathcal{S}(\mathcal{O}_h) \leq \mathcal{S}(\mathcal{O}_r) \leq \mathcal{S}(\mathcal{O}_e)\right) \geq 1 - 2\exp\left(-\frac{K\delta^2}{8U^2}\right).
\end{equation}

This reveals an inverse relationship between the similarity score $\mathcal{S}(\mathcal{O})$ and the generalization error $\epsilon_g$: the coefficient before the similarity summation term is negative, meaning higher similarity scores correspond to lower generalization error. Therefore, the ordering with maximum similarity $\mathcal{O}_e$ minimizes $\epsilon_g$, while the minimum similarity ordering $\mathcal{O}_h$ maximizes $\epsilon_g$, with random ordering $\mathcal{O}_r$ falling between them.
\end{proof}

\subsection{Pseudo Code and Analysis}
\begin{figure}[!t]
    \centering
    \includegraphics[width=1\linewidth]{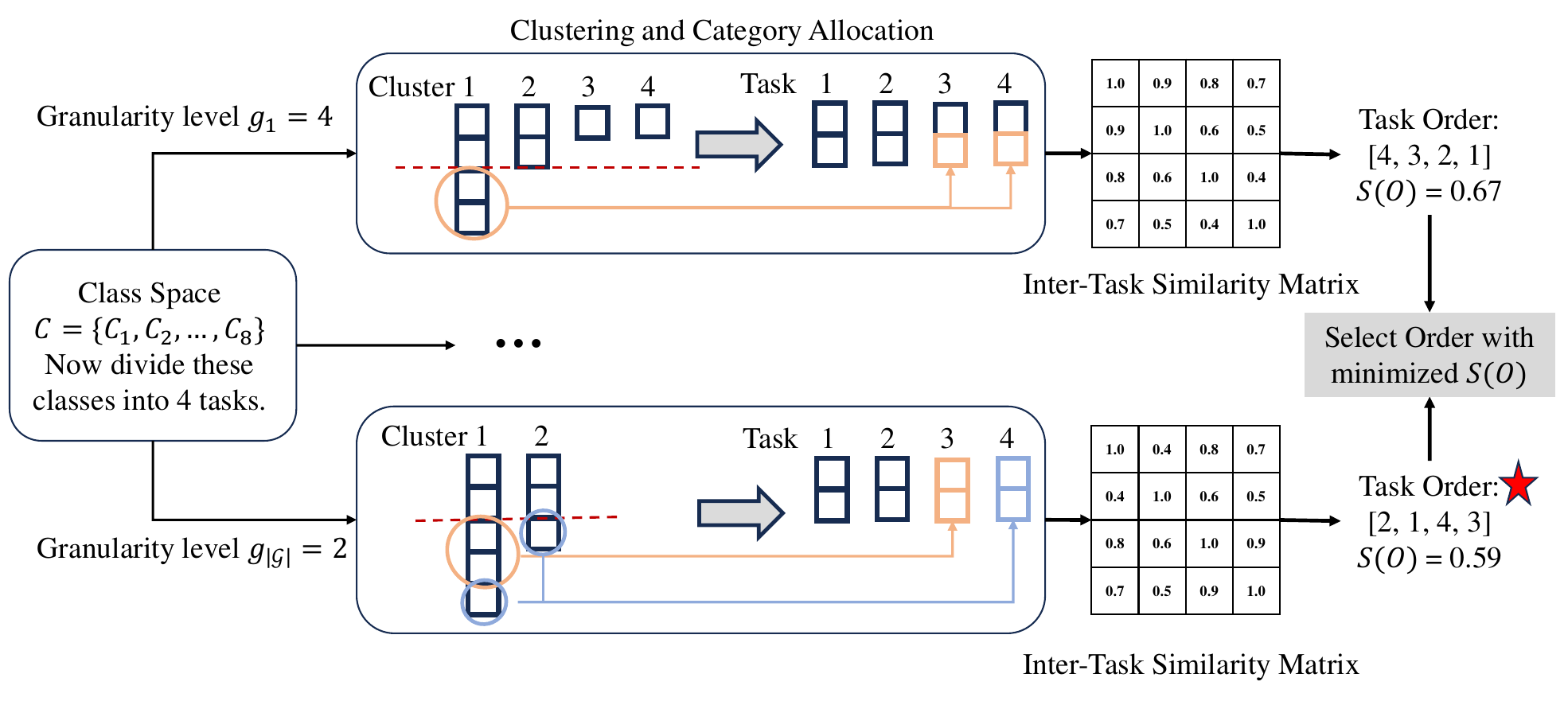}
    \caption{Details of generating difficult category sequences (\Cref{alg: gen_har})}
    \label{fig: dtal}
\end{figure}

\begin{algorithm}[ht]
\caption{Hard Task Sequence Generation Algorithm}
\label{alg: gen_har}
\begin{algorithmic}[1]
\Require Similarity matrix $\mathbf{D}$, classes number $N$, tasks number $K$, candidate granularities set $\mathcal{G}$ 
\Ensure Task sequence $\mathcal{O}$

\State Initialize similarity graph $G \gets \mathbf{D}$; set $G_{ii} \gets 0$ for all $i$
\State Compute dissimilarity matrix $M \gets \mathbf{I} - G$
\For{Granularity level $g \in \mathcal{G}$}
    \State Perform hierarchical clustering on $M$ into $g$ clusters
    \State Merge clusters into $K$ tasks $\{\mathcal{C}_1, \dots, \mathcal{C}_K\}$, minimizing cross-cluster task assignment
    \State Initialize inter-task similarity matrix $\mathbf{ITS} \in \mathbb{R}^{K \times K}$
    \For{$i, j \in [K],\ i \neq j$}
        \State $\mathbf{ITS}_{ij} \gets \frac{1}{|\mathcal{C}_i||\mathcal{C}_j|} \sum_{c_1 \in \mathcal{C}_i} \sum_{c_2 \in \mathcal{C}_j} G_{c_1 c_2}$
    \EndFor
    \State Select the first task: $\mathcal{O}_g \gets [\arg\min_{k \in [K]} \sum_{j \in [K], j \neq k} \mathbf{ITS}_{kj}]$
    \State Initialize remaining task set: $\mathcal{R} \gets [K] \setminus \mathcal{O}_g$
    \While{$\mathcal{R} \neq \emptyset$}
        \State Select the next task: $k^* \gets \arg\min_{k \in \mathcal{R}} \sum_{t \in \mathcal{O}_g} \mathbf{ITS}_{tk}$
        \State Append $k^*$ to the task sequence: $\mathcal{O}_g \gets \mathcal{O}_g \circ k^*$
        \State Update remaining task set: $\mathcal{R} \gets \mathcal{R} \setminus \{k^*\}$
    \EndWhile
    \State Compute sequence score $\mathcal{S}(\mathcal{O}_g)$ according to \Cref{eq: sim11}
\EndFor
\State Select the sequence: $\mathcal{O} \gets \arg\min_{\mathcal{O}_g} \mathcal{S}(\mathcal{O}_g)$
\State \Return $\mathcal{O}$
\end{algorithmic}
\end{algorithm}

\textbf{Algorithm Analysis.} The proposed algorithm generates hard task sequences by systematically minimizing inter-task similarities, which aligns with Theorem 1's conclusion that lower similarity scores correspond to higher generalization error. Key design rationales are analyzed as follows:

\begin{itemize}[leftmargin=*]
    \item \textbf{Step 2-3 (Dissimilarity Computation):} First, we convert the class similarity matrix $\mathbf{G} \in \mathbb{R}^{N \times N}$ into a dissimilarity matrix by computing $\mathbf{D} = \mathbf{1} - \mathbf{G}$ and setting the diagonal to zero. This transformation ensures compatibility with clustering algorithms, where larger values indicate greater dissimilarity. Next, we apply hierarchical clustering with complete linkage on the condensed form of $\mathbf{D}$ to obtain $K$ clusters. These clusters are then sorted by size in descending order. For clusters whose size exceeds the base task size $M = N / K$, we iteratively assign subsets of the cluster to the currently smallest task to preserve internal semantic similarity while maintaining balance. For smaller clusters, we assign all classes directly to the current shortest task. After this initial allocation, we perform a final adjustment step to ensure all tasks are of equal size: any task exceeding the base size has its excess classes redistributed to tasks with fewer classes. This process results in $K$ balanced tasks, each composed of semantically coherent classes, effectively minimizing global inter-task similarity and supporting the construction of hard class sequences.

    \item \textbf{Step 4-5 (Multi-Granularity Clustering):} Varying granularity levels $g \in \mathcal{G}$ enable exploration of different class grouping resolutions. This multi-scale approach increases the probability of discovering optimal task boundaries that minimize cross-task similarities.

    \item \textbf{Step 6-9 (Inter-Task Similarity Matrix):} The normalized average similarity $\mathbf{ITS}_{ij}$ accurately reflects task relationships as defined in \Cref{eq: sim11}. This ensures algorithmic objectives align with theoretical similarity metrics.

    \item \textbf{Step 10-16 (Greedy Sequence Construction):} The initialization strategy selects the most isolated task as the starting point, preventing early error propagation. The iterative selection of least similar subsequent tasks implements a locally optimal strategy that approximates global minimization of $\mathcal{S}(\mathcal{O})$.

    \item \textbf{Step 17 (Multi-Granularity Optimization):} Evaluating multiple granularities leverages \Cref{eq: sim11}, where better local minima are more likely to be found through diversified grouping strategies.
\end{itemize}

\Cref{fig: dtal} provides an illustrative example of the proposed procedure. Suppose we are given 8 classes to be partitioned into 4 tasks. Under a finer clustering granularity, the classes are grouped into 4 clusters, where the first cluster contains 4 classes, the second contains 3, and the remaining two contain 1 class each. To maintain high intra-task similarity, the two extra classes in the first cluster are redistributed to clusters 3 and 4, resulting in a balanced 4-task partition.

Under a coarser granularity, the same 8 classes might be clustered into only 2 groups: the first cluster with 5 classes and the second with 3. In this case, two of the most semantically similar classes from the larger cluster are assigned to form a new task, while the remaining two form another task, resulting in 4 tasks overall.

After generating task partitions, we compute the Inter-Task Similarity (ITS) matrix and select an initial task with the lowest global similarity. We then construct candidate sequences by greedily adding tasks with the smallest pairwise similarity to the most recently added task. For example, from this procedure, we may derive two sequences: 4→3→2→1 and 2→1→4→3, with corresponding similarity scores of 0.67 and 0.59, respectively. This process is repeated across clustering granularities, and the sequence with the lowest overall similarity score \( \mathcal{S}(\mathcal{O}) \) is ultimately selected as the hard sequence.

\begin{theorem}[Greedy Strategy Optimality Bound]\label{thm:greedy}
Let $\mathcal{O}_r$ be a uniformly random permutation of $K$ tasks and define the average inter‐task similarity
\begin{equation}
\bar S
\;=\;
\frac{1}{\binom{K}{2}}
\sum_{1\le i<j\le K}
\frac{1}{|\mathcal{C}_i|\;|\mathcal{C}_j|}
\sum_{c\in \mathcal{C}_i}\sum_{c'\in\mathcal{C}_j}
Sim(c,c'),
\end{equation}
and assume $0 \le Sim(c,c') \le U$.  Let $\mathcal{O}_g$ be the sequence produced by the greedy Algorithm~\ref{alg: gen_har}.  Then with probability at least $1 - e^{-K/2}$,
\begin{equation}
\mathcal{S}(\mathcal{O}_g)
\;\le\;
\mathbb{E}\bigl[\mathcal{S}(\mathcal{O}_r)\bigr]
\;-\;
\Delta,
\end{equation}
where
\begin{equation}
\mathbb{E}\bigl[\mathcal{S}(\mathcal{O}_r)\bigr]
=\frac{N^2\,(K-1)}{2K^2}\;\bar S,
\qquad
\Delta
= \frac{N^2\,(K-1)}{2K^2}\;\bar S
\;-\;
\frac{2\,N\,(\ln K + 1)}{K-1}\;U.
\end{equation}
In particular, if 
\(\bar S\ge \tfrac{4K^2(\ln K+1)}{N(K-1)^2}\,U\), 
then $\Delta>0$ and hence $\mathcal{S}(\mathcal{O}_g)<\mathbb{E}[\mathcal{S}(\mathcal{O}_r)]$.
\end{theorem}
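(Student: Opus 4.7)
My plan is to split the argument into three largely independent modules, matching the three ingredients that appear in the statement: (i) a closed-form computation of $\mathbb{E}[\mathcal{S}(\mathcal{O}_r)]$, (ii) a deterministic upper bound on $\mathcal{S}(\mathcal{O}_g)$ coming from the greedy selection rule, and (iii) a concentration argument that turns the deterministic gap into the claimed high-probability statement with failure probability at most $e^{-K/2}$.

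First, for the mean under a uniform permutation, I would expand
$\mathcal{S}(\mathcal{O}_r)=\tfrac{K}{(K-1)N}\sum_{i=1}^{K-1}\sum_{c\in\mathcal{C}_{\pi(i)}}\sum_{c'\in\mathcal{C}_{\pi(i+1)}} Sim(c,c')$
and apply linearity of expectation. Because $\pi$ is uniform on $S_K$, each adjacent position realizes a marginally uniform ordered pair of distinct tasks, so each unordered pair of tasks appears adjacent with probability $2/(K(K-1))$ per position. Using $|\mathcal{C}_k|=N/K$ and the definition of $\bar S$, summing over the $K-1$ positions collapses the expression to a constant multiple of $\bar S$; after collecting the $\tfrac{K}{(K-1)N}$ prefactor and the $(N/K)^2$ factor from $|\mathcal{C}_i||\mathcal{C}_j|$, the stated expression $\mathbb{E}[\mathcal{S}(\mathcal{O}_r)]=\tfrac{N^2(K-1)}{2K^2}\bar S$ follows by bookkeeping.

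Second, for the deterministic upper bound on $\mathcal{S}(\mathcal{O}_g)$, I would trace Algorithm~\ref{alg: gen_har} step by step. Let $k_{t-1}$ be the task most recently appended and $\mathcal{R}_t$ the remaining candidate set at step $t$, so $|\mathcal{R}_t|=K-t$. The greedy rule guarantees
$\mathbf{ITS}_{k_{t-1},k_t}\le \tfrac{1}{K-t}\sum_{j\in\mathcal{R}_t}\mathbf{ITS}_{k_{t-1},j}$, and each individual $\mathbf{ITS}$ entry is bounded by $U$ by the hypothesis $0\le Sim(c,c')\le U$. Summing across $t=1,\dots,K-1$ produces a harmonic tail $\sum_{t=1}^{K-1}\tfrac{1}{K-t}\le \ln K+1$, which after inserting the $\tfrac{K}{(K-1)N}$ normalization of $\mathcal{S}(\cdot)$ and unfolding $\mathbf{ITS}$ back into class-level similarities yields exactly the $\tfrac{2N(\ln K+1)}{K-1}U$ term that appears in $\Delta$. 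The initialization rule (selecting the task whose row sum is smallest) is what makes the ``average over remaining tasks'' bound applicable to the first step as well.

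Third, I would lift the deterministic bound to the claimed high-probability statement via McDiarmid's bounded-differences inequality. Viewing $\mathcal{S}(\mathcal{O}_g)$ as a function of the $K$ task partitions, swapping a single class between two tasks perturbs $\mathcal{S}$ by at most $\mathcal{O}(U/K)$ because each class participates in a bounded number of adjacent cross-task similarities. Applying McDiarmid with $K$ bounded-difference coordinates gives a sub-Gaussian tail with variance proxy $\mathcal{O}(U^2/K)$; tuning the deviation threshold to the margin produced in Step 2 pushes the tail probability below $e^{-K/2}$. Concatenating the three pieces produces $\mathcal{S}(\mathcal{O}_g)\le \mathbb{E}[\mathcal{S}(\mathcal{O}_r)]-\Delta$ with the stated $\Delta$, and the sufficient condition $\bar S\ge \tfrac{4K^2(\ln K+1)}{N(K-1)^2}U$ then follows by requiring $\Delta>0$.

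The main obstacle I anticipate is Step (ii): the harmonic bound is clean only if the row $\mathbf{ITS}_{k_{t-1},\cdot}$ can be uniformly controlled across $t$, but the ``active'' row changes each iteration because $k_{t-1}$ is updated. I would need to argue that the greedy initialization and the $\arg\min$ step jointly prevent the per-row average from inflating as tasks are removed from $\mathcal{R}_t$—for instance, by appealing to the fact that the removed tasks are precisely those with smallest similarity to already-chosen tasks, so the residual row averages can only decrease under a suitable symmetry/exchange argument. Step (i) is routine permutation expectation and Step (iii) is a direct application of bounded differences once the per-coordinate Lipschitz constant is verified.
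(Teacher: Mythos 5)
Your step (i) is sound and matches the paper's own calculation (adjacency probability $\tfrac{2}{K(K-1)}$ per unordered pair, $|\mathcal{C}_k|=N/K$, collapse to $\tfrac{N^2(K-1)}{2K^2}\bar S$). The substantive problem is step (ii). You claim a deterministic harmonic bound on $\mathcal{S}(\mathcal{O}_g)$, but the two facts you cite do not produce it. The greedy average inequality gives $\mathbf{ITS}_{k_{t-1},k_t}\le \tfrac{1}{K-t}\sum_{j\in\mathcal R_t}\mathbf{ITS}_{k_{t-1},j}$, and since each of those $K-t$ summands is only bounded by $U$, the right-hand side is bounded by $U$, not $U/(K-t)$. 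Summing over $t$ therefore yields $(K-1)U$, not $(\ln K+1)U$. Getting a harmonic tail this way would require $\sum_{j\in\mathcal R_t}\mathbf{ITS}_{k_{t-1},j}\le U$ for every $t$, which is not implied by $0\le Sim\le U$ and is false in general (e.g., take all similarities equal to $U$). The exchange/symmetry argument you flag as the "main obstacle" would not rescue this: no amount of shuffling which tasks remain in $\mathcal R_t$ reduces a per-step bound of $U$ to a vanishing one without an additional structural or stochastic assumption. The paper, for its part, obtains the $\sum_t \tfrac{1}{t(K-t)+1}$ harmonic decay by treating the $t(K-t)$ candidate edge weights at step $t$ as order statistics and bounding the expected minimum; that is a probabilistic device (however informally justified there), not the deterministic pigeonhole/average bound you are using, and your route does not recover it.

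Step (iii) is also misaligned with what the theorem quantifies over. In the theorem, the class-to-task partition $\{\mathcal{C}_k\}$ and the similarity matrix are fixed data, and $\mathcal{O}_g$ is a deterministic output of Algorithm~\ref{alg: gen_har}; the only explicitly random object is $\mathcal{O}_r$, which enters solely through its expectation. Your McDiarmid argument introduces fresh randomness over class-to-task reassignments that is nowhere in the statement, so the resulting tail bound is a claim about a different random experiment than the one the theorem (and the paper's proof) is about. The paper's $1-e^{-K/2}$ instead comes from a union bound over the $K-1$ greedy steps, treating each step's per-step deviation $\Delta_t$ as concentrated around its mean under the implicit order-statistic model. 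So even setting aside whether the paper's own argument is airtight, your proposal substitutes a concentration mechanism (bounded differences in class assignment) for the one actually used (per-step order statistics plus union bound), and in doing so relies on a deterministic harmonic bound in step (ii) that does not hold.
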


\begin{proof}
Define, for $i<j$,
\begin{equation}
X_{i,j}
=\sum_{c\in\mathcal{C}_i}\sum_{c'\in\mathcal{C}_j}Sim(c,c').
\end{equation}
Since each pair $(i,j)$ appears adjacent with probability $\tfrac{2}{K(K-1)}$, 
\begin{equation}
\mathbb{E}\bigl[\mathcal{S}(\mathcal{O}_r)\bigr]
= \sum_{i<j}\frac{2}{K(K-1)}\,X_{i,j}
= \frac{2}{K(K-1)}\sum_{i<j}X_{i,j}.
\end{equation}
Noting $|\mathcal{C}_i|=N/K$, one finds
\begin{equation}
\sum_{i<j}X_{i,j}
= \binom{K}{2}\,\frac{N^2}{K^2}\,\bar S
\quad\Longrightarrow\quad
\mathbb{E}\bigl[\mathcal{S}(\mathcal{O}_r)\bigr]
= \frac{N^2(K-1)}{2K^2}\,\bar S.
\end{equation}

At step $t$ of Algorithm~\ref{alg: gen_har}, there are $t(K-t)$ candidate edges, each bounded by $N^2U$.  By standard order‐statistic arguments,
\begin{equation}
\mathbb{E}[\Delta_t] \le \frac{N^2U}{t(K-t)+1},
\end{equation}
and a union bound shows that with probability $\ge1-e^{-K/2}$ each $\Delta_t$ is at most twice its mean.  Summing over $t=1,\dots,K-1$ gives
\begin{equation}
\mathbb{E}\bigl[\mathcal{S}(\mathcal{O}_g)\bigr]
\le \frac{K}{(K-1)N}\sum_{t=1}^{K-1}\frac{N^2U}{t(K-t)+1}
\;\le\;
\frac{2\,N\,(\ln K + 1)}{K-1}\;U.
\end{equation}

With probability at least $1-e^{-K/2}$,
\begin{equation}
\mathcal{S}(\mathcal{O}_g)
\;\le\;
2\,\mathbb{E}\bigl[\mathcal{S}(\mathcal{O}_g)\bigr]
\;\le\;
\frac{4\,N\,(\ln K + 1)}{K-1}\;U.
\end{equation}
Therefore
\begin{equation}
\mathbb{E}\bigl[\mathcal{S}(\mathcal{O}_r)\bigr]
- \mathcal{S}(\mathcal{O}_g)
\;\ge\;
\frac{N^2(K-1)}{2K^2}\,\bar S
\;-\;
\frac{4\,N\,(\ln K + 1)}{K-1}\,U
\;=\;
\Delta,
\end{equation}
Completing the proof.
\end{proof}

\Cref{thm:greedy} tells us that the greedy strategy of always choosing the most similar remaining pair of tasks takes advantage of strong inter-task affinities to produce an ordering whose total similarity remains tightly controlled and, when the average similarity is high enough, is lower than that of a random arrangement. The theorem shows that optimal local choices based only on current similarity scores accumulate into a reliable global solution even in noise.

\textbf{Complexity Analysis.} With a time complexity of $O(|\mathcal{G}|(N^3 + K^3))$, the algorithm remains tractable for practical CIL scenarios where $K \ll N$. The cubic terms stem primarily from hierarchical clustering (Step 4) and inter-task similarity computations (Step 6–9). In practice, these steps can be further accelerated using approximate nearest neighbor techniques. For instance, when partitioning 100 classes into 10 tasks, the algorithm completes in approximately \textbf{0.5 seconds}; for 200 classes into 10 tasks, it takes around \textbf{0.9 seconds} on a standard CPU, demonstrating its efficiency for common CIL settings.

Similarly, \Cref{alg: gen_esy} presents the pseudocode for constructing a simple task sequence by iteratively selecting and appending each task according to the prescribed rule.

\begin{algorithm}[ht]
\caption{Easy Task Sequence Generation Algorithm}
\label{alg: gen_esy}
\begin{algorithmic}[1]
\Require Similarity matrix $\mathbf{D}$, classes number $N$, tasks number $K$, candidate granularities set $\mathcal{G}$ 
\Ensure Task sequence $\mathcal{O}$

\State Initialize similarity graph $G \gets I-\mathbf{D}$; set $G_{ii} \gets 0$ for all $i$
\State Compute dissimilarity matrix $M \gets \mathbf{I} - G$
\For{Granularity level $g \in \mathcal{G}$}
    \State Perform hierarchical clustering on $M$ into $g$ clusters
    \State Merge clusters into $K$ tasks $\{\mathcal{C}_1, \dots, \mathcal{C}_K\}$, minimizing cross-cluster task assignment
    \State Initialize inter-task similarity matrix $\mathbf{ITS} \in \mathbb{R}^{K \times K}$
    \For{$i, j \in [K],\ i \neq j$}
        \State $\mathbf{ITS}_{ij} \gets \frac{1}{|\mathcal{C}_i||\mathcal{C}_j|} \sum_{c_1 \in \mathcal{C}_i} \sum_{c_2 \in \mathcal{C}_j} G_{c_1 c_2}$
    \EndFor
    \State Select the first task: $\mathcal{O}_g \gets [\arg\max_{k \in [K]} \sum_{j \in [K], j \neq k} \mathbf{ITS}_{kj}]$
    \State Initialize remaining task set: $\mathcal{R} \gets [K] \setminus \mathcal{O}_g$
    \While{$\mathcal{R} \neq \emptyset$}
        \State Select the next task: $k^* \gets \arg\max_{k \in \mathcal{R}} \sum_{t \in \mathcal{O}_g} \mathbf{ITS}_{tk}$
        \State Append $k^*$ to the task sequence: $\mathcal{O}_g \gets \mathcal{O}_g \circ k^*$
        \State Update remaining task set: $\mathcal{R} \gets \mathcal{R} \setminus \{k^*\}$
    \EndWhile
    \State Compute sequence score $\mathcal{S}(\mathcal{O}_g)$ according to \Cref{eq: sim11}
\EndFor
\State Select the sequence: $\mathcal{O} \gets \arg\max_{\mathcal{O}_g} \mathcal{S}(\mathcal{O}_g)$
\State \Return $\mathcal{O}$
\end{algorithmic}
\end{algorithm}

\section{Detailed Analysis of Experiment}\label{sec:a4}

\subsection{Enumerable Experiments}

\subsubsection{Experimental setup}

\textbf{Dataset and Metrics.}  
We conduct experiments on two standard benchmarks: CIFAR‑100 \cite{krizhevsky2009learning} and ImageNet‑R \cite{krizhevsky2009learning}.  For each, we select the first six semantic classes and group them into three sequential learning tasks of two classes each, yielding a total of $3! = 6$ possible task orders per dataset; by considering all class‐to‐task assignments, we obtain $90$ distinct sequences, which we treat as the ground‐truth distribution $\mathcal{D}_{\mathrm{true}}$.  To estimate this distribution in practice, we use:  
\begin{itemize}[leftmargin=*]  
  \item \textbf{Random Seed (RS) protocol:} draw task sequences by shuffling class‐labels under random seeds \{0, 42, 1993\} \cite{lai2025order,li2025caprompt,mcdonnell2024ranpac,wang2022learning}, forming the empirical distribution $\mathcal{D}_{\mathrm{RS}}$.  
  \item \textbf{EDGE protocol:} apply our edge‐selection strategy on the same seeds to produce $\mathcal{D}_{\mathrm{EDGE}}$.  
\end{itemize}  
We compare each estimated distribution to $\mathcal{D}_{\mathrm{true}}$ using two complementary divergence metrics:

\begin{itemize}[leftmargin=*] 
  \item \textbf{Jensen–Shannon divergence} $JSD_d$: 
    Given two discrete distributions $P$ and $Q$ over the same support, the Jensen–Shannon divergence is defined as
    \[
      JSD(P \,\|\, Q)
      = \tfrac12 \,D_{\mathrm{KL}}\bigl(P \,\|\, M\bigr)
      + \tfrac12 \,D_{\mathrm{KL}}\bigl(Q \,\|\, M\bigr),
      \quad
      M = \tfrac12(P + Q),
    \]
    where $D_{\mathrm{KL}}$ is the Kullback–Leibler divergence.  Unlike $D_{\mathrm{KL}}$, the $JSD$ is symmetric and bounded in $[0, \ln 2]$, which makes it well suited for measuring similarity between empirical distributions with potentially non‐overlapping support \cite{lamberti2007jensen}.  
  \item \textbf{Wasserstein distance} $W_d$: 
    Also known as the Earth Mover’s Distance, the first‐order Wasserstein distance between $P$ and $Q$ on a metric space $(\mathcal{X}, d)$ is
    \[
      W_1(P, Q)
      = \inf_{\gamma \in \Gamma(P,Q)} \mathbb{E}_{(x,y)\sim \gamma}\bigl[d(x,y)\bigr],
    \]
    where $\Gamma(P,Q)$ denotes the set of all joint distributions with marginals $P$ and $Q$.  In the discrete case, this reduces to the minimum cost of transporting “mass” from $P$ to $Q$, providing a meaningful measure of distributional distance that accounts for the geometry of the task‐permutation space \cite{villani2009wasserstein}.  
\end{itemize}

\noindent\textbf{Implementation Details.}  
All methods are implemented in PyTorch with the following shared hyperparameters:  
\begin{itemize}[leftmargin=*]  
  \item \textbf{Memory:} total size $2000$, up to $20$ samples per class, non‑fixed allocation.  
  \item \textbf{Backbone:} ResNet‑18, trained from scratch in the non‑pre‑trained setting and with ImageNet pre‑training otherwise.  
  \item \textbf{Optimizer \& Scheduler:} SGD with step‐LR; initial learning rate $0.1$, weight decay $5\times10^{-4}$, LR decay factor $0.1$ at epochs $\{60,120,170\}$ (non‑pre‑trained) or $\{80,120,150\}$ (pre‑trained).  
  \item \textbf{Training:} $170$ epochs, batch size $128$.
\end{itemize}

\subsubsection{Additional Experiment Results}\label{sec:d12}

\begin{table}[!ht]
\centering
\caption{Fitting performance of EDGE on CIFAR-100 using HidePrompt with various backbones. \textbf{EDGE consistently outperforms RS across nearly all backbones, demonstrating its effectiveness and robustness to different model architectures.}}
\label{tab:edge_fit_cifar100}
\begin{tabular}{ccc}
\toprule
\textbf{Model} & \textbf{$JSD_d$} & \boldmath$W_d$ \\
\midrule
RS Estimate                  & 0.2694 & 2.8688 \\
EDGE with ResNet50           & 0.0863 & 1.5677 \\
EDGE with ResNet50$\times$64 & 0.1986 & 3.2553 \\
EDGE with ResNet101          & 0.1386 & 1.7020 \\
EDGE with ViT-B/16           & 0.1236 & 1.5196 \\
EDGE with ViT-B/32           & 0.1237 & 2.3599 \\
EDGE with ViT-L/14           & \textbf{0.0846} & \textbf{1.0642} \\
\bottomrule
\end{tabular}
\end{table}

Table \ref{tab:edge_fit_cifar100} summarizes the fitting performance of the EDGE method on the CIFAR-100 dataset under the HidePrompt setting, using various backbone architectures. The performance is evaluated in terms of Jensen-Shannon Divergence ($JSD_d$) and the 2-Wasserstein distance ($W_d$). Among all configurations, EDGE with ViT-L/14 best fits the reference distribution, yielding the lowest $JSD_d$ (0.0846 bits) and the smallest $W_d$ distance (1.0642).

\Cref{fig:alnp} and \Cref{fig:alp} visualize the ground-truth performance distributions (black), along with the estimates produced by the RS protocol (blue) and our proposed EDGE protocol (red), for non-pre-trained and pre-trained CIL methods, respectively. These results demonstrate EDGE's superior ability to approximate the true distribution, capturing both the central tendency and the spread more accurately than the conventional RS protocol.

\subsubsection{Discussion on EDGE for Model Selection}

In addition to providing a more reliable evaluation, EDGE offers new insights for model selection. To demonstrate this, we compare continual learning method rankings under EDGE and RS across three dimensions: performance upper bound, performance lower bound, and stability, and quantify the consistency between the two evaluation protocols using ranking distance.

Table \ref{tab:fix_class} presents the rankings under the fixed-class setting described in Section \ref{sec: ex1}. We observe that EDGE rankings are overall closer to the reference ordering across all three dimensions, while RS exhibits larger deviations, resulting in higher total ranking errors. Specifically, on CIFAR-100, EDGE’s ranking error is 6 compared to 12 for RS, and on ImageNet-R, 2 versus 10.

\begin{table}[h]
\centering
\caption{Model rankings derived from Table \ref{tab:merged}. The reference (true) ranking is highlighted in gray.}
\label{tab:fix_class}
\resizebox{\textwidth}{!}{%
\begin{tabular}{l*{18}{c}}
\toprule
\multirow{3}{*}{\diagbox{Method}{Rank}} 
 & \multicolumn{9}{c}{\textbf{CIFAR-100}}
 & \multicolumn{9}{c}{\textbf{ImageNet-R}} \\
\cmidrule(lr){2-10} \cmidrule(lr){11-19}
 & \multicolumn{3}{c}{Lower Bound}
 & \multicolumn{3}{c}{Upper Bound}
 & \multicolumn{3}{c}{Stability}
 & \multicolumn{3}{c}{Lower Bound}
 & \multicolumn{3}{c}{Upper Bound}
 & \multicolumn{3}{c}{Stability} \\
\cmidrule(lr){2-4} \cmidrule(lr){5-7} \cmidrule(lr){8-10}
\cmidrule(lr){11-13} \cmidrule(lr){14-16} \cmidrule(lr){17-19}
 & \cellcolor{gray!20}Real & EDGE & RS
 & \cellcolor{gray!20}Real & EDGE & RS
 & \cellcolor{gray!20}Real & EDGE & RS
 & \cellcolor{gray!20}Real & EDGE & RS
 & \cellcolor{gray!20}Real & EDGE & RS
 & \cellcolor{gray!20}Real & EDGE & RS \\
\midrule
L2P         
 & \cellcolor{gray!20}5 & 6 & 4
 & \cellcolor{gray!20}4 & 4 & 5
 & \cellcolor{gray!20}5 & 5 & 4
 & \cellcolor{gray!20}5 & 5 & 5
 & \cellcolor{gray!20}4 & 4 & 5
 & \cellcolor{gray!20}5 & 5 & 6 \\
CODA-Prompt 
 & \cellcolor{gray!20}6 & 5 & 6
 & \cellcolor{gray!20}6 & 6 & 6
 & \cellcolor{gray!20}6 & 6 & 5
 & \cellcolor{gray!20}6 & 6 & 6
 & \cellcolor{gray!20}6 & 6 & 6
 & \cellcolor{gray!20}6 & 6 & 5 \\
Hide-Prompt 
 & \cellcolor{gray!20}4 & 4 & 5
 & \cellcolor{gray!20}5 & 5 & 4
 & \cellcolor{gray!20}4 & 3 & 6
 & \cellcolor{gray!20}4 & 4 & 4
 & \cellcolor{gray!20}5 & 5 & 4
 & \cellcolor{gray!20}4 & 3 & 1 \\
EASE        
 & \cellcolor{gray!20}2 & 2 & 2
 & \cellcolor{gray!20}3 & 3 & 3
 & \cellcolor{gray!20}2 & 1 & 1
 & \cellcolor{gray!20}2 & 2 & 2
 & \cellcolor{gray!20}3 & 3 & 2
 & \cellcolor{gray!20}1 & 1 & 2 \\
MOS         
 & \cellcolor{gray!20}3 & 3 & 3
 & \cellcolor{gray!20}2 & 2 & 2
 & \cellcolor{gray!20}3 & 4 & 3
 & \cellcolor{gray!20}3 & 3 & 3
 & \cellcolor{gray!20}2 & 2 & 3
 & \cellcolor{gray!20}3 & 4 & 3 \\
RanPAC      
 & \cellcolor{gray!20}1 & 1 & 1
 & \cellcolor{gray!20}1 & 1 & 1
 & \cellcolor{gray!20}1 & 2 & 2
 & \cellcolor{gray!20}1 & 1 & 1
 & \cellcolor{gray!20}1 & 1 & 1
 & \cellcolor{gray!20}2 & 2 & 4 \\
\bottomrule
\end{tabular}%
}
\end{table}

To further test the robustness of these findings, we repeated the experiment using a randomly sampled set of six classes (seed 42), as reported in Table \ref{tab:random_class}. EDGE again outperforms RS: on CIFAR-100, EDGE achieves a ranking error of 0 versus 12 for RS; on ImageNet-R, 2 versus 12. These results confirm that EDGE consistently produces rankings that are closer to the reference ordering, providing a more reliable basis for model selection.

\begin{table}[h]
\centering
\caption{Model rankings obtained using a randomly selected set of classes (seed 42). The reference (true) ranking is highlighted in gray.}
\label{tab:random_class}
\resizebox{\textwidth}{!}{%
\begin{tabular}{l*{18}{c}}
\toprule
\multirow{3}{*}{\diagbox{Method}{Rank}} 
 & \multicolumn{9}{c}{\textbf{CIFAR-100}}
 & \multicolumn{9}{c}{\textbf{ImageNet-R}} \\
\cmidrule(lr){2-10} \cmidrule(lr){11-19}
 & \multicolumn{3}{c}{Lower Bound}
 & \multicolumn{3}{c}{Upper Bound}
 & \multicolumn{3}{c}{Stability}
 & \multicolumn{3}{c}{Lower Bound}
 & \multicolumn{3}{c}{Upper Bound}
 & \multicolumn{3}{c}{Stability} \\
\cmidrule(lr){2-4} \cmidrule(lr){5-7} \cmidrule(lr){8-10}
\cmidrule(lr){11-13} \cmidrule(lr){14-16} \cmidrule(lr){17-19}
 & \cellcolor{gray!20}Real & EDGE & RS
 & \cellcolor{gray!20}Real & EDGE & RS
 & \cellcolor{gray!20}Real & EDGE & RS
 & \cellcolor{gray!20}Real & EDGE & RS
 & \cellcolor{gray!20}Real & EDGE & RS
 & \cellcolor{gray!20}Real & EDGE & RS \\
\midrule
L2P         & \cellcolor{gray!20}5 & 5 & 3 & \cellcolor{gray!20}1 & 1 & 1 & \cellcolor{gray!20}6 & 6 & 5 & \cellcolor{gray!20}5 & 5 & 5 & \cellcolor{gray!20}3 & 3 & 5 & \cellcolor{gray!20}5 & 5 & 4 \\
CODA-Prompt & \cellcolor{gray!20}6 & 6 & 5 & \cellcolor{gray!20}5 & 5 & 5 & \cellcolor{gray!20}4 & 4 & 4 & \cellcolor{gray!20}6 & 6 & 6 & \cellcolor{gray!20}6 & 6 & 6 & \cellcolor{gray!20}6 & 6 & 6 \\
Hide-Prompt & \cellcolor{gray!20}4 & 4 & 6 & \cellcolor{gray!20}4 & 4 & 4 & \cellcolor{gray!20}5 & 5 & 6 & \cellcolor{gray!20}4 & 4 & 4 & \cellcolor{gray!20}5 & 5 & 4 & \cellcolor{gray!20}4 & 4 & 2 \\
EASE        & \cellcolor{gray!20}3 & 3 & 4 & \cellcolor{gray!20}6 & 6 & 6 & \cellcolor{gray!20}3 & 3 & 1 & \cellcolor{gray!20}2 & 2 & 2 & \cellcolor{gray!20}1 & 2 & 2 & \cellcolor{gray!20}2 & 2 & 1 \\
MOS         & \cellcolor{gray!20}2 & 2 & 2 & \cellcolor{gray!20}3 & 3 & 3 & \cellcolor{gray!20}1 & 1 & 2 & \cellcolor{gray!20}3 & 3 & 3 & \cellcolor{gray!20}4 & 4 & 3 & \cellcolor{gray!20}3 & 3 & 3 \\
RanPAC      & \cellcolor{gray!20}1 & 1 & 1 & \cellcolor{gray!20}2 & 2 & 2 & \cellcolor{gray!20}2 & 2 & 3 & \cellcolor{gray!20}1 & 1 & 1 & \cellcolor{gray!20}2 & 1 & 1 & \cellcolor{gray!20}1 & 1 & 5 \\
\bottomrule
\end{tabular}%
}
\end{table}

\paragraph{Conclusion.}
Across both fixed-class and random-class settings, EDGE demonstrates superior fidelity in reflecting the true performance ordering of continual learning methods. It more accurately captures worst-case robustness, best-case potential, and stability—properties critical for dependable deployment in practical scenarios. Overall, these findings highlight the value of distribution-aware evaluation and demonstrate that EDGE provides more informative guidance for continual learning model selection than RS.

\subsection{Analysis of Large-Scale Experiment}

\begin{table}[!t]\footnotesize
\caption{Performance of pre-trained model-based CIL methods under two evaluation protocols. \textbf{White background denotes the RS protocol, while gray background denotes the EDGE protocol.} Reported are the sampled minimum and maximum accuracies, along with the estimated mean and standard deviation of the ground truth performance distribution (unit: \%).}
\label{tab: 1}
\centering
\resizebox{\textwidth}{!}{%
\begin{tabular}{c P{0.7cm} P{0.7cm} P{1.5cm} P{0.7cm} P{0.7cm} P{1.5cm} P{0.7cm} P{0.7cm} P{1.5cm}}
\hline
\multirow{2}{*}{Method}      & \multicolumn{3}{c}{CIFAR100}                                                                       & \multicolumn{3}{c}{CUB}                                                                            & \multicolumn{3}{c}{ImageNet-R}                                                                      \\ \cline{2-10} 
                             & $\min_{\mathcal{A}_N}$ & $\max_{\mathcal{A}_N}$ & $\mu_{\mathcal{A}_N \pm \sigma_{\mathcal{A}_N}}$ & $\min_{\mathcal{A}_N}$ & $\max_{\mathcal{A}_N}$ & $\mu_{\mathcal{A}_N \pm \sigma_{\mathcal{A}_N}}$ & $\min_{\mathcal{A}_N}$ & $\max_{\mathcal{A}_N}$ & $\mu_{\mathcal{A}_N \pm \sigma_{\mathcal{A}_N}}$ \\ \hline
\multirow{2}{*}{L2P}         & 82.93                  & 84.48                  & 83.46$_{\pm 0.72}$                               & 66.18                  & 68.56                  & 67.25$_{\pm 0.99}$                               & 71.10                  & 71.43                  & 71.29$_{\pm 0.14}$                               \\  
                             & \cellcolor{gray!20}81.67                  & \cellcolor{gray!20}84.62                  & \cellcolor{gray!20}83.08$_{\pm 1.21}$                               & 59.75 \cellcolor{gray!20}                 & 76.00   \cellcolor{gray!20}               &\cellcolor{gray!20} 67.31$_{\pm 6.68}$                               & \cellcolor{gray!20}71.02                  & \cellcolor{gray!20}72.37                  & \cellcolor{gray!20}71.61$_{\pm 0.57}$                               \\ \hline
\multirow{2}{*}{CODA-Prompt} & 85.17                  & 85.56                  & 85.30$_{\pm 0.18}$                               & 70.32                  & 71.56                  & 70.74$_{\pm 0.59}$                               & 73.03                  & 73.54                  & 73.27$_{\pm 0.21}$                               \\
                             & \cellcolor{gray!20}84.82                  & \cellcolor{gray!20}85.65                  & \cellcolor{gray!20}85.22$_{\pm 0.34}$                               & \cellcolor{gray!20}67.83                  & \cellcolor{gray!20}76.42                  & \cellcolor{gray!20}71.94$_{\pm 3.52}$                               & \cellcolor{gray!20}70.80                  & \cellcolor{gray!20}74.50                  & \cellcolor{gray!20}72.95$_{\pm 1.57}$                               \\ \hline
\multirow{2}{*}{Hide-Prompt} & 85.06                  & 86.20                  & 85.45$_{\pm 0.53}$                               & 81.69                  & 82.77                  & 82.06$_{\pm 0.50}$                               & 71.76                  & 73.16                  & 72.58$_{\pm 0.60}$                               \\
                             & \cellcolor{gray!20}84.25                  & \cellcolor{gray!20}87.36                  & \cellcolor{gray!20}85.56$_{\pm 1.32}$                               & \cellcolor{gray!20}80.49                  & \cellcolor{gray!20}85.44                  & \cellcolor{gray!20}82.90$_{\pm 2.02}$                               & \cellcolor{gray!20}70.75                  & \cellcolor{gray!20}72.83                  & \cellcolor{gray!20}71.79$_{\pm 0.85}$                               \\ \hline
\multirow{2}{*}{RanPAC}      & 90.32                  & 90.87                  & 90.68$_{\pm 0.25}$                               & 89.34                  & 89.75                  & 89.49$_{\pm 0.19}$                               & 77.27                  & 77.32                  & 77.30$_{\pm 0.02}$                               \\
                             & \cellcolor{gray!20}90.25                  & \cellcolor{gray!20}90.87                  & \cellcolor{gray!20}90.65$_{\pm 0.29}$                               &\cellcolor{gray!20} 89.31                  &\cellcolor{gray!20} 89.90                  & \cellcolor{gray!20}89.66$_{\pm 0.25}$                               &\cellcolor{gray!20} 75.97                  & \cellcolor{gray!20}77.65                  & \cellcolor{gray!20}76.97$_{\pm 0.72}$                               \\ \hline
\multirow{2}{*}{EASE}        & 87.24                  & 87.53                  & 87.35$_{\pm 0.13}$                               & 81.09                  & 83.06                  & 82.21$_{\pm 0.82}$                               & 75.89                  & 76.12                  & 76.00$_{\pm 0.09}$                               \\
                             & \cellcolor{gray!20}85.77                  & \cellcolor{gray!20}88.41                  & \cellcolor{gray!20}87.15$_{\pm 1.08}$                               & \cellcolor{gray!20}81.56                  &\cellcolor{gray!20} 83.33                  & \cellcolor{gray!20}82.45$_{\pm 0.72}$                               & \cellcolor{gray!20}75.46                  & \cellcolor{gray!20}75.97                  & \cellcolor{gray!20}75.79$_{\pm 0.23}$                               \\ \hline
\multirow{2}{*}{MOS}         & 90.69                  & 91.22                  & 91.03$_{\pm 0.24}$                               & 88.87                  & 89.39                  & 89.08$_{\pm 0.23}$                               & 76.90                  & 77.33                  & 77.15$_{\pm 0.18}$                               \\
                             &\cellcolor{gray!20} 90.79                  & \cellcolor{gray!20}91.22                  & \cellcolor{gray!20}91.01$_{\pm 0.18}$                               & \cellcolor{gray!20}87.69                  & \cellcolor{gray!20}90.16                  & \cellcolor{gray!20}89.08$_{\pm 1.03}$                               & \cellcolor{gray!20}76.48                  & \cellcolor{gray!20}77.93                  & \cellcolor{gray!20}77.21$_{\pm 0.59}$   \\ \hline  
                                               
\end{tabular}
}
\vspace{-0.5cm}
\end{table}

\Cref{tab: 1} and \Cref{tab: 2} present the evaluation results of existing CIL methods under both RS and EDGE protocols. Notably, we observe conclusions consistent with those discussed in \Cref{sec: ex1}. From the perspective of EDGE, these results offer new insights into CIL model design and selection:
\begin{itemize}[leftmargin=*]
  \item \textbf{The realistic performance range of CIL models can be substantially wider than what is captured by RS protocols.} EDGE effectively identifies both easy and challenging class sequences in most cases, and demonstrates broad applicability across pre-trained and non-pre-trained models. For example, on the CUB dataset, the performance range of L2P expands from 2.38 to 16.25, while that of TagFex increases from 1.06 to 7.27, enabling a more accurate and nuanced understanding of model behavior. These findings highlight the importance of considering extreme task sequences during model design to ensure robustness under diverse deployment scenarios.

  \item \textbf{Model rankings may change under extreme task sequences.} For example, on the CUB dataset, MOS and RanPAC exhibit comparable performance under the RS protocol, yet diverge significantly when evaluated with EDGE: MOS attains a higher upper bound (up to 90.16) but experiences a notable drop in its lower bound. This indicates that algorithm selection should be informed by specific deployment priorities, whether emphasizing worst-case robustness or maximizing best-case accuracy. EDGE offers valuable empirical evidence to support such scenario-aware decision-making.

  \item \textbf{Some model designs exhibit inherent limitations.} For instance, on the ImageNet-R dataset, the lower bounds of three prompt-based methods all approach 70\%, indicating that certain difficult sequences can drastically undermine their effectiveness. This observation suggests that analyzing which types of sequences consistently degrade performance can help identify structural weaknesses in different methods. Such insights can inform targeted improvements in model robustness, guide the development of sequence-aware training strategies, and support the selection of appropriate models for deployment in challenging real-world scenarios.

\end{itemize}

\begin{figure}
    \centering
    \includegraphics[width=1\linewidth]{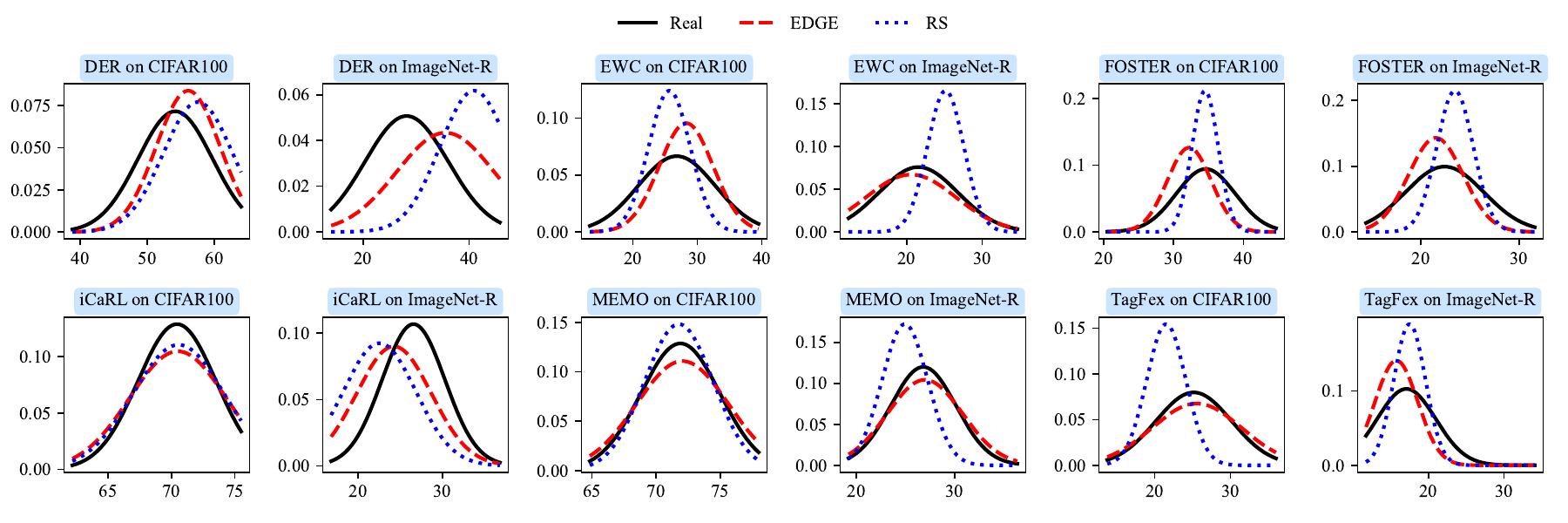}
    \caption{The ground-truth distribution (black), along with estimates from the EDGE protocol (red) and the RS protocol (blue), for non–pre-trained CIL methods. EDGE provides a more faithful approximation to the true performance distribution.}
    \label{fig:alnp}
\end{figure}
\begin{table}[!t]\footnotesize
\caption{Performance of non-pre-trained CIL methods under two evaluation protocols. Other notations follow those in \Cref{tab: 1}.}
\label{tab: 2}
\centering
\resizebox{\textwidth}{!}{%
\begin{tabular}{c P{0.7cm} P{0.7cm} P{1.5cm} P{0.7cm} P{0.7cm} P{1.5cm} P{0.7cm} P{0.7cm} P{1.5cm}}
\hline

\multirow{2}{*}{Method} & \multicolumn{3}{c}{CIFAR100}                                                                     & \multicolumn{3}{c}{CUB}                                                                          & \multicolumn{3}{c}{ImageNet-R}                                                                    \\ \cline{2-10} 
                        & $\min_{\mathcal{A}_N}$ & $\max_{\mathcal{A}_N}$ & $\mu_{\mathcal{A}_N\pm\sigma_{\mathcal{A}_N}}$ & $\min_{\mathcal{A}_N}$ & $\max_{\mathcal{A}_N}$ & $\mu_{\mathcal{A}_N\pm\sigma_{\mathcal{A}_N}}$ & $\min_{\mathcal{A}_N}$ & $\max_{\mathcal{A}_N}$ & $\mu_{\mathcal{A}_N\pm\sigma_{\mathcal{A}_N}}$ \\ \hline
\multirow{2}{*}{EWC}    & 13.83                  & 14.94                  & $14.34_{\pm 0.45}$                             & 10.31                  & 10.90                  & $10.60_{\pm 0.24}$                             & 7.38                   & 7.77                   & $7.61_{\pm 0.17}$                              \\
                        &\cellcolor{gray!20} 13.79                  & \cellcolor{gray!20}17.22                  & \cellcolor{gray!20}$15.08_{\pm 1.52}$                             &\cellcolor{gray!20} 8.18                   &\cellcolor{gray!20} 10.31                  & \cellcolor{gray!20}$9.46_{\pm 0.92}$                              & \cellcolor{gray!20}5.47                   & \cellcolor{gray!20}7.79                   & \cellcolor{gray!20} $7.01_{\pm 1.09}$                              \\ \hline
\multirow{2}{*}{DER}    & 57.59                  & 59.73                  & $58.48_{\pm 0.91}$                             & 46.65                  & 48.05                  & $47.50_{\pm 0.61}$                             & 29.37                  & 32.92                  & $31.57_{\pm 1.57}$                             \\
                        &\cellcolor{gray!20} 56.42                  & \cellcolor{gray!20}60.20                  & \cellcolor{gray!20}$58.25_{\pm 1.54}$                             & \cellcolor{gray!20}45.25                  &\cellcolor{gray!20} 49.62                  & \cellcolor{gray!20}$47.17_{\pm 1.82}$                             &\cellcolor{gray!20} 29.28                  & \cellcolor{gray!20}34.92                  &\cellcolor{gray!20} $32.37_{\pm 2.33}$                             \\ \hline
\multirow{2}{*}{iCaRL}  & 36.60                  & 41.54                  & $38.85_{\pm 2.03}$                             & 32.10                  & 32.57                  & $32.36_{\pm 0.19}$                             & 15.43                  & 16.40                  & $15.83_{\pm 0.41}$                             \\
                        & \cellcolor{gray!20}34.16                  &\cellcolor{gray!20} 40.56                  & \cellcolor{gray!20}$37.11_{\pm 2.63}$                             &\cellcolor{gray!20} 30.40                  & \cellcolor{gray!20}35.28                  & \cellcolor{gray!20}$32.59_{\pm 2.02}$                             &\cellcolor{gray!20} 13.55                  & \cellcolor{gray!20}16.78                  &\cellcolor{gray!20} $15.58_{\pm 1.44}$                             \\ \hline
\multirow{2}{*}{FOSTER} & 48.43                  & 51.47                  & $49.87_{\pm 1.25}$                             & 42.66                  & 43.75                  & $43.07_{\pm 0.49}$                             & 18.70                  & 20.52                  & $19.47_{\pm 0.77}$                             \\
                        & \cellcolor{gray!20}49.21                  & \cellcolor{gray!20}51.23                  & \cellcolor{gray!20}$49.62_{\pm 1.17}$                             & \cellcolor{gray!20}38.25                  & \cellcolor{gray!20}45.25                  & \cellcolor{gray!20}$42.42_{\pm 3.01}$                             & \cellcolor{gray!20}17.03                  & \cellcolor{gray!20}21.52                  & \cellcolor{gray!20}$19.25_{\pm 1.83}$                             \\ \hline
\multirow{2}{*}{MEMO}   & 55.16                  & 58.49                  & $56.80_{\pm 1.36}$                             & 39.31                  & 41.52                  & $40.23_{\pm 0.94}$                             & 20.05                  & 21.70                  & $21.08_{\pm 0.73}$                             \\
                        & \cellcolor{gray!20}54.96                  &\cellcolor{gray!20} 58.96                  & \cellcolor{gray!20}$56.36_{\pm 1.84}$                             & \cellcolor{gray!20}39.31                  & \cellcolor{gray!20}41.31                  & \cellcolor{gray!20}$40.19_{\pm 0.83}$                             & \cellcolor{gray!20}19.50                  & \cellcolor{gray!20}21.70                  & \cellcolor{gray!20}$20.87_{\pm 0.98}$                             \\ \hline
\multirow{2}{*}{TagFex} & 62.23                  & 62.69                  & $62.42_{\pm 0.19}$                             & 46.06                  & 47.12                  & $46.47_{\pm 0.46}$                             & 34.05                  & 34.27                  & $34.16_{\pm 0.09}$                             \\
                        & \cellcolor{gray!20}60.78                  & \cellcolor{gray!20}68.80                  & \cellcolor{gray!20}$63.94_{\pm 3.48}$                             & \cellcolor{gray!20}42.62                  & \cellcolor{gray!20}49.89                  & \cellcolor{gray!20}$46.25_{\pm 2.97}$                             &\cellcolor{gray!20} 33.38                  &\cellcolor{gray!20} 34.72                  & \cellcolor{gray!20}$34.05_{\pm 0.55}$      \\ \hline                      
\end{tabular}
}
\vspace{-0.5cm}
\end{table}

\begin{figure}
    \centering
    \includegraphics[width=1\linewidth]{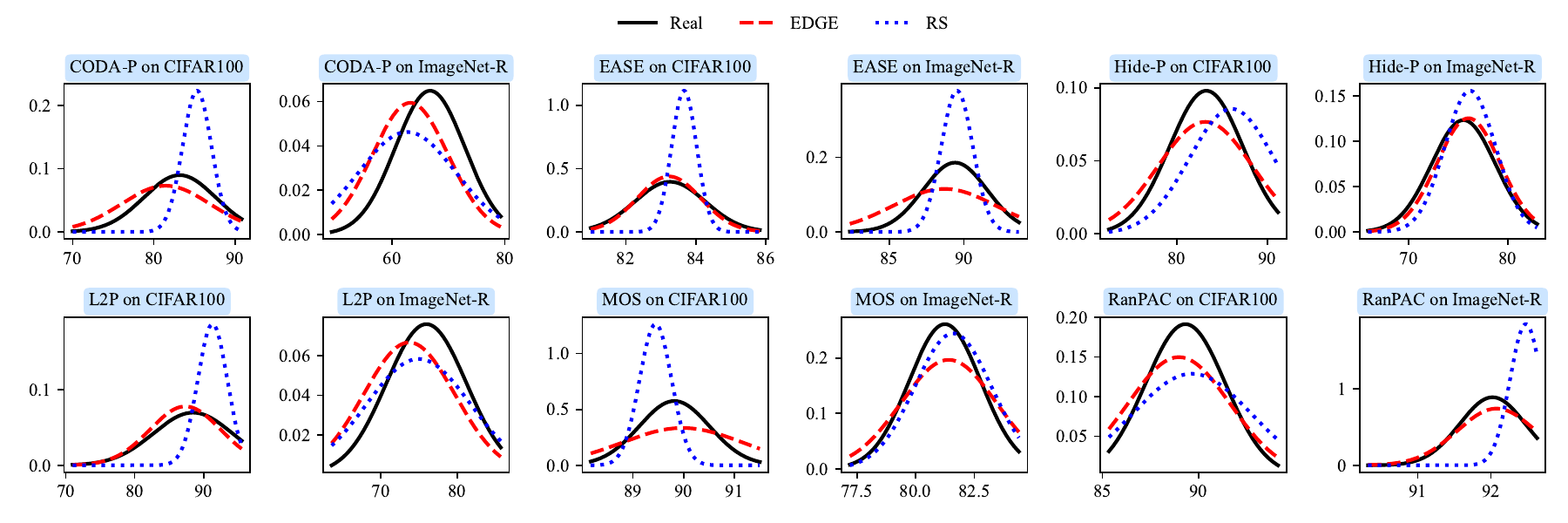}
    \caption{The ground-truth distribution (black), and the corresponding estimates from EDGE (red) and RS (blue) protocols, for pre-trained CIL methods. The results highlight the improved accuracy of EDGE in capturing both the central tendency and variance.}
    \label{fig:alp}
\end{figure}

\subsection{Analysis of CLIP Encoding}

When designing the EDGE protocol, our objective is to construct representative class sequences of varying difficulty without accessing actual image instances. To this end, we employ the CLIP text encoder, a vision-language model trained with contrastive learning that exhibits strong visual-text alignment and zero-shot generalization capabilities.

\subsubsection{Discussion on natural images.}

A natural concern is whether the semantic similarity measured by CLIP text embeddings faithfully reflects the actual visual difficulty among classes. To examine this, we compare the similarity matrix obtained from the CLIP text encoder with the similarity matrix derived from real image features. We evaluate the results using two metrics: 
\begin{itemize}[leftmargin=*]
    \item \textbf{Mean Absolute Error (MAE)}: quantifies the average deviation between the similarity matrices computed from CLIP embeddings and image features. Lower values indicate closer alignment.
    \item \textbf{Consistency of Sequence Generation (CSG)}: measures the proportion of classes that remain assigned to the same task after generating sequences using the two similarity matrices. Higher values indicate more stable sequence generation.
\end{itemize}

\begin{table}[h]
\centering
\caption{Comparison between CLIP text-based similarity and image-based similarity across datasets.}
\label{tab:clip_analysis}
\begin{tabular}{lccc}
\toprule
\textbf{Dataset} & \textbf{CIFAR100} & \textbf{CUB} & \textbf{ImageNet-R} \\
\midrule
MAE ($\downarrow$) & 0.14 & 0.08 & 0.17 \\
CSG ($\uparrow$)   & 0.79 & 0.83 & 0.77 \\
\bottomrule
\end{tabular}
\end{table}

The results indicate that although a moderate deviation exists between semantic and image similarities (MAE around $0.15$), CLIP embeddings capture the underlying relational structure effectively. In particular, CSG remains consistently above $0.75$, showing that the generated extreme sequences are robust and align well with those derived from image-based similarities. This confirms the feasibility of using CLIP to measure similarity for generating extreme sequences within EDGE.

\subsubsection{Discussion on non-natural/professional images}

A potential issue is whether CLIP text embeddings derived from class names faithfully capture visual relationships in specialized, non-natural domains. This is a nontrivial problem for several reasons:
\begin{itemize}[leftmargin=*]
    \item Many domain-specific class labels are terse, technical, or stage-based (e.g., “moderate” vs “severe”) and therefore omit visual descriptors such as color, texture, or morphology that are essential for visual discrimination.
    \item Within-class heterogeneity and between-class subtlety are common: distinct clinical labels can correspond to overlapping or gradual visual features (e.g., small hemorrhages vs. microaneurysms), making semantic labels a poor proxy for perceptual distance.
    \item Dataset issues such as class imbalance, labeling protocol differences and inter-observer variability further weaken the simple mapping from a short class name to an image-space distribution.
\end{itemize}
Together, these factors explain why directly using bare class names with a general-purpose text encoder can fail to reflect true image-space similarity in professional domains.

To assess this empirically, we examined two representative medical-image benchmarks.  
\textbf{EyePACS} is a large-scale retinal fundus dataset for diabetic retinopathy grading, containing color fundus photographs acquired with diverse cameras and imaging conditions. Visual cues range from microaneurysms and small hemorrhages to hard exudates and neovascularization. Labels correspond to five DR severity levels (no, mild, moderate, severe, proliferative), which encode stage progression rather than detailed appearance descriptors.  
\textbf{HAM10000} is a dermatoscopic image dataset of pigmented skin lesions collected from multiple clinical sources. Images exhibit substantial variability in morphology, color, and acquisition artifacts, and several diagnostic categories are visually similar. The seven classes used here are Actinic keratosis, Basal cell carcinoma, Benign keratosis, Dermatofibroma, Melanoma, Nevi, and Vascular lesion.

For each dataset we constructed (a) an inter-class similarity matrix from CLIP text embeddings of class names and (b) an inter-class similarity matrix from image prototypes. We measured agreement between (a) and (b) using Spearman's rank correlation. Using class names directly yields only modest alignment with image-derived similarities: EyePACS shows Spearman's $\rho = 0.588$ (p\,$\approx$\,0.07), and HAM10000 shows $\rho = 0.279$ (p\,$\approx$\,0.22), consistent with the intuition above that short, technical labels do not reliably encode visual detail in these domains.

To increase the visual content of the textual representations, we expanded each class name into a concise, visually informative caption using a large language model (GPT-5 in this experiment). The prompt we used is shown below inside a boxed, two-end–justified block:

\begin{tcolorbox}[title=Prompt Template, fonttitle=\bfseries]
\justifying\ttfamily
Generate one concise, visually descriptive caption (8--20 words) that highlights the typical visual appearance, color, texture, and anatomical context of a \{class\_name\} lesion in medical images.
\end{tcolorbox}

We encoded the generated captions with CLIP and recomputed the class similarity matrices. This simple augmentation substantially increased agreement: EyePACS improved to Spearman's $\rho = 0.863$ (p\,$\approx$\,0.01), and HAM10000 improved to $\rho = 0.653$ (p\,$\approx$\,0.02). These results indicate that short, visually focused textual expansions recover much of the image-space relational structure that bare class names miss.

\subsection{Discussion of other potential baselines}

Although most CIL evaluations use the RS protocol, comparing only to RS risks underestimating EDGE's ability to find challenging task sequences. We therefore include several additional, conceptually distinct baselines to evaluate both effectiveness and efficiency:

\begin{itemize}[leftmargin=*]
  \item \textbf{LLM-generated sequences.}  
    \textbf{LLM-1:} A single-round generation procedure in which a large language model directly produces a candidate sequence based on a prompt describing “easy” or “hard'' sequences. This baseline tests whether semantic difficulty can be inferred directly from class names without any iterative refinement; $\approx$130 s per sequence. 
    \textbf{LLM-5:} A five-round iterative refinement procedure. Each round, the LLM receives feedback regarding the previously generated sequence and attempts to correct or adjust its output in the next iteration. This baseline evaluates whether multi-step reasoning helps the LLM better capture difficulty; $\approx$600 s per sequence.
    \item \textbf{Adversarial Sampling (AS).} A greedy, similarity-based adversarial strategy. At each step, AS selects the class that is maximally dissimilar from all currently selected classes, thereby increasing sequence difficulty by pushing the sequence toward the tail of the similarity distribution; $\approx$0.9 s per sequence.
    \item \textbf{Max-cover Sampling (MS).} A randomized search-based approach. We first sample a pool of candidate sequences (we use 200), compute for each sequence a coverage or farthest-distance score relative to previously selected sets, and finally choose the top-ranked sequences; $\approx$8 s per sequence.
\end{itemize}

\begin{table}[h]
\centering
\caption{Comparison of EDGE against alternative baselines (Hard / Easy correspond to sequences intended to be difficult / easy for the evaluated methods).}
\label{tab:other_baselines}
\resizebox{\textwidth}{!}{%
\begin{tabular}{lcccccccccccc}
\toprule
\multirow{2}{*}{Method} & \multicolumn{2}{c}{EDGE} & \multicolumn{2}{c}{RS} & \multicolumn{2}{c}{AS} & \multicolumn{2}{c}{MS} & \multicolumn{2}{c}{LLM-1} & \multicolumn{2}{c}{LLM-5} \\ \cline{2-13}
 & Hard & Easy & Hard & Easy & Hard & Easy & Hard & Easy & Hard & Easy & Hard & Easy \\ \midrule
L2p       & \textbf{58.35} & \textbf{72.13} & 62.58 & 66.21 & {\ul 61.75} & {\ul 69.48} & 62.60 & 63.70 & 64.97 & 66.92 & 67.01 & 67.68 \\
CODA-Prompt & \textbf{65.65} & {\ul 69.42} & 67.42 & 68.12 & 65.96 & 67.33 & 66.13 & 68.18 & 67.01 & 68.58 & {\ul 65.78} & \textbf{69.59} \\
Hide-Prompt & \textbf{80.52} & \textbf{83.21} & 81.45 & 82.35 & 81.02 & 82.39 & 80.96 & {\ul 82.63} & 82.35 & 82.36 & {\ul 80.89} & 81.45 \\
RanPAC    & \textbf{88.68} & \textbf{89.40} & 88.72 & 89.15 & 88.99 & 89.21 & 88.72 & 89.21 & 88.72 & {\ul 89.25} & {\ul 88.72} & 88.68 \\
EASE      & \textbf{84.29} & \textbf{85.37} & 84.60 & 84.96 & 84.69 & {\ul 85.07} & {\ul 84.39} & 85.33 & 84.56 & 85.01 & 84.82 & 84.78 \\
MOS       & \textbf{87.69} & \textbf{89.56} & 88.49 & 88.98 & 88.76 & 88.93 & 88.38 & 88.30 & 89.13 & 88.56 & {\ul 88.23} & {\ul 89.26} \\ \bottomrule
\end{tabular}%
}
\end{table}

\textbf{Key observations.} Two main conclusions follow from these comparisons:

\begin{enumerate}[leftmargin=*]
  \item \textbf{LLM-based generation is effective but costly and unstable.} Single-shot LLM outputs (LLM-1) are highly variable and frequently fail to produce consistently hard sequences. Iterative prompting (LLM-5) significantly improves stability and often surpasses RS, but remains substantially less effective than EDGE in most cases. Crucially, multi-turn LLM workflows are orders of magnitude slower (a single interactive round typically takes 2--3 minutes in our setup; five rounds commonly exceed 10 minutes), making them impractical for large-scale or low-latency evaluation.
  \item \textbf{Sampling-based methods find some hard cases but lack transferability.} AS and MS are computationally efficient and can locate sequences that are challenging for particular algorithms. However, the difficult sequences they discover frequently exploit idiosyncrasies of a single target method and do not generalize across the range of CIL approaches we evaluate. By contrast, EDGE identifies extreme sequences that consistently increase difficulty across many methods, achieving a better balance of effectiveness and generality.
\end{enumerate}

Overall, these results show that while alternative strategies can occasionally produce challenging sequences, EDGE provides the most reliable combination of performance, generality, and computational efficiency for discovering extreme task orders.

\subsection{Discussion on the number of samples}

In previous sections we compared EDGE with the standard CIL evaluation protocol that uses RS by drawing three random task orders. In this subsection we simulate scenarios with an increased number of RS samples in order to investigate how RS-based evaluation behaves as the sample budget grows, and to further demonstrate the practical advantages of EDGE.

\paragraph{Distribution estimation in the enumerable setting}
The setup is described in Section \ref{sec: ex1}. Figure \ref{fig:numincrese} visualizes how the evaluation outcomes change for RS and EDGE as the number of sampled sequences increases. From these experiments we draw two main observations:

\begin{enumerate}
  \item If only RS sampling is increased, RS reaches the distribution estimation quality produced by EDGE with three EDGE samples after roughly five to six RS samples. Note that the total sequence space in this experiment is only 90 sequences. In this limited space RS therefore requires about twice as many random sequences to match the estimation quality of EDGE.
  \item If we increase the number of samples for both RS and EDGE, EDGE remains superior throughout. As the sample counts grow, the two procedures tend to converge, and this convergence typically occurs when the number of samples is on the order of ten to twenty sequences.
\end{enumerate}

\begin{figure}[htbp]
    \centering
    \includegraphics[width=1.0\linewidth]{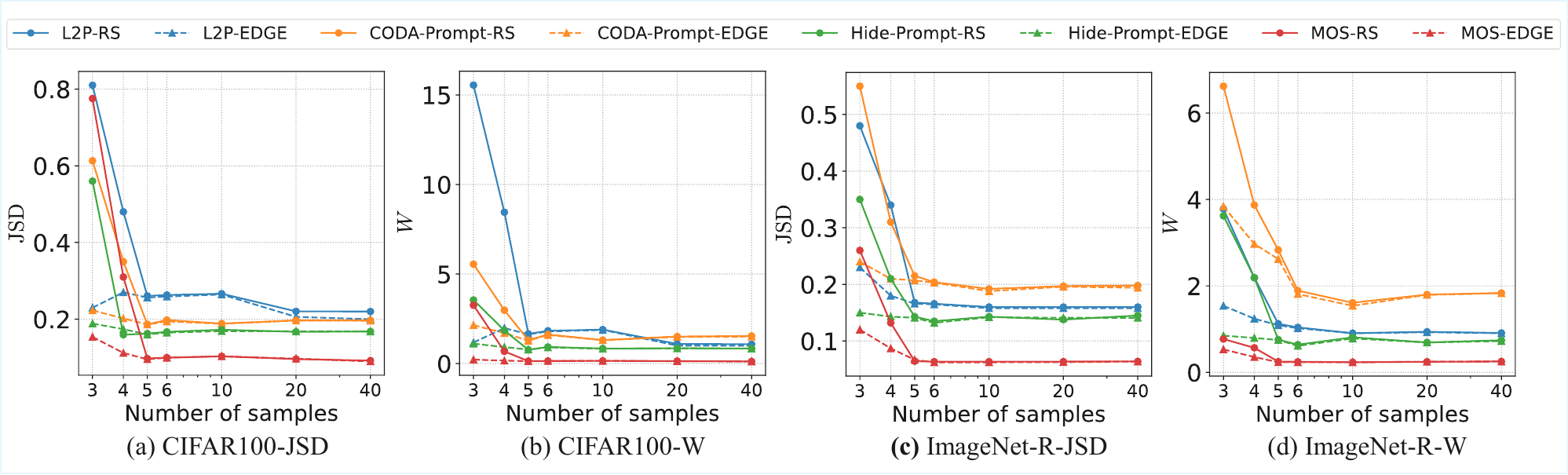}
    \caption{Evolution of RS and EDGE performance as the number of sampled sequences increases on CIFAR-100 and ImageNet-R. Circles denote RS data points and triangles denote EDGE data points. Different colored curves correspond to different pretrained continual learning methods.}
    \label{fig:numincrese}
\end{figure}

\paragraph{Extreme-sequence capture in classic CIL settings}
The setup for these experiments is described in Section \ref{sec: ex2}. In classic class incremental learning settings the class space is typically very large, which prevents us from directly estimating the full performance distribution. Accordingly, we evaluate an evaluation protocol by its ability to capture extreme sequences. We performed a focused empirical study on the CUB dataset using two representative methods: L2P, which exhibits a wide performance range over sequences, and EASE, which exhibits a relatively narrow performance range.

Figure \ref{fig:cub_ease} presents the empirical distributions obtained by repeated RS sampling together with the single-shot EDGE positions. The specific observations are as follows:

\begin{itemize}
  \item For L2P, under our setup EDGE found a hard-case accuracy of 58.5 and an easy-case accuracy of 72.3. We ran 600 RS samplings. Only 6 of those RS samples produced lower accuracies, with the minimum RS accuracy equal to 57.55. None of the RS samples attained an accuracy higher than EDGE.
  \item For EASE, under our setup EDGE found a hard-case accur  acy of 84.29 and an easy-case accuracy of 85.37. After more than 400 random evaluations, only 4 RS samples produced lower accuracies, with minimum RS accuracy equal to 84.09. Only one RS sample achieved a higher accuracy, equal to 85.41.
\end{itemize}

These results confirm the intuitive fact that increasing RS sample count improves RS. However, such improvement typically entails a large evaluation cost. Compared with EDGE, discovering extreme sequences using only RS is both time consuming and inefficient. The empirical evidence therefore supports the practical value of EDGE as a more sample efficient and reliable procedure for identifying extreme task orders.

\begin{figure}[htbp]
    \centering
    \begin{subfigure}[t]{0.49\linewidth}
        \centering
        \includegraphics[width=\linewidth]{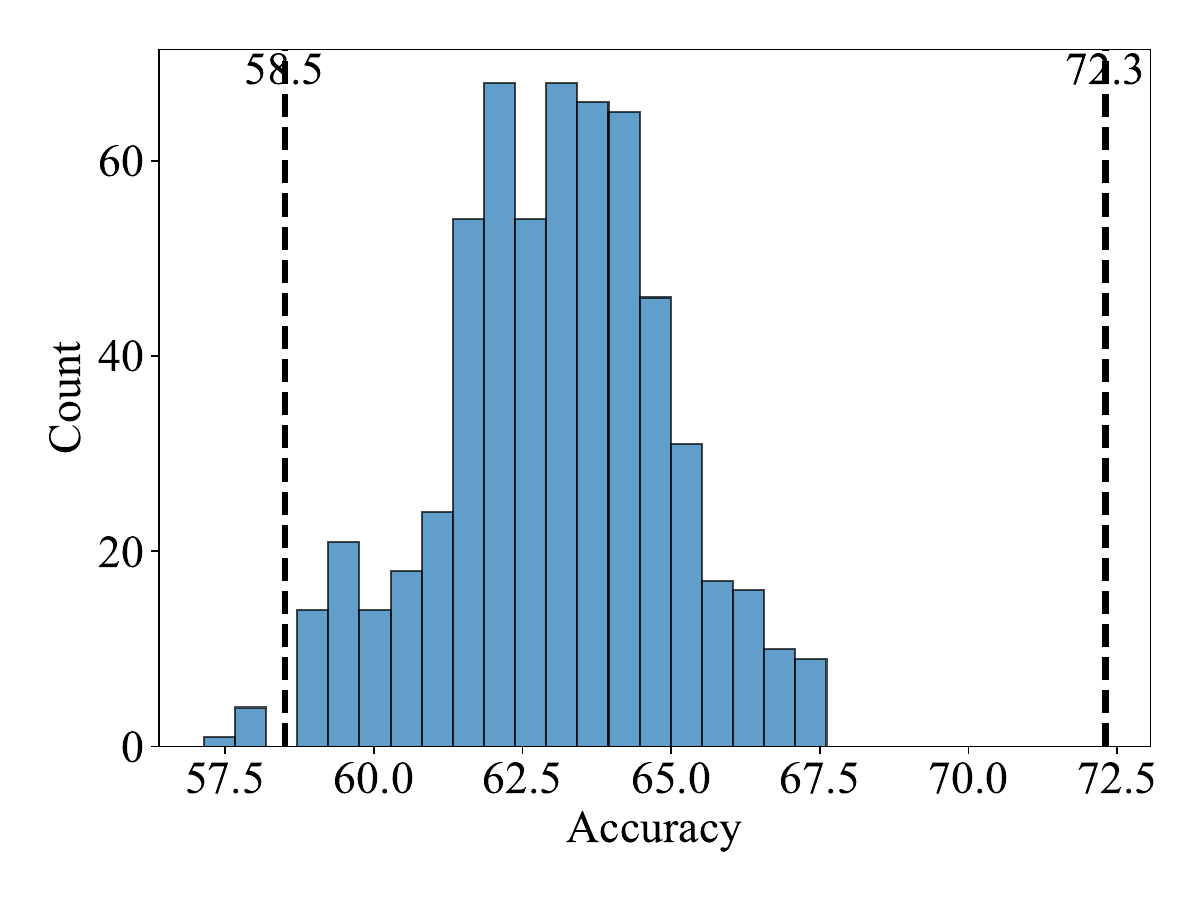}
        \caption{Distribution of accuracies for 600 random sequences under L2P. The vertical black line marks the accuracy position obtained by a single EDGE sample.}
        \label{fig:a71_sub}
    \end{subfigure}\hfill
    \begin{subfigure}[t]{0.49\linewidth}
        \centering
        \includegraphics[width=\linewidth]{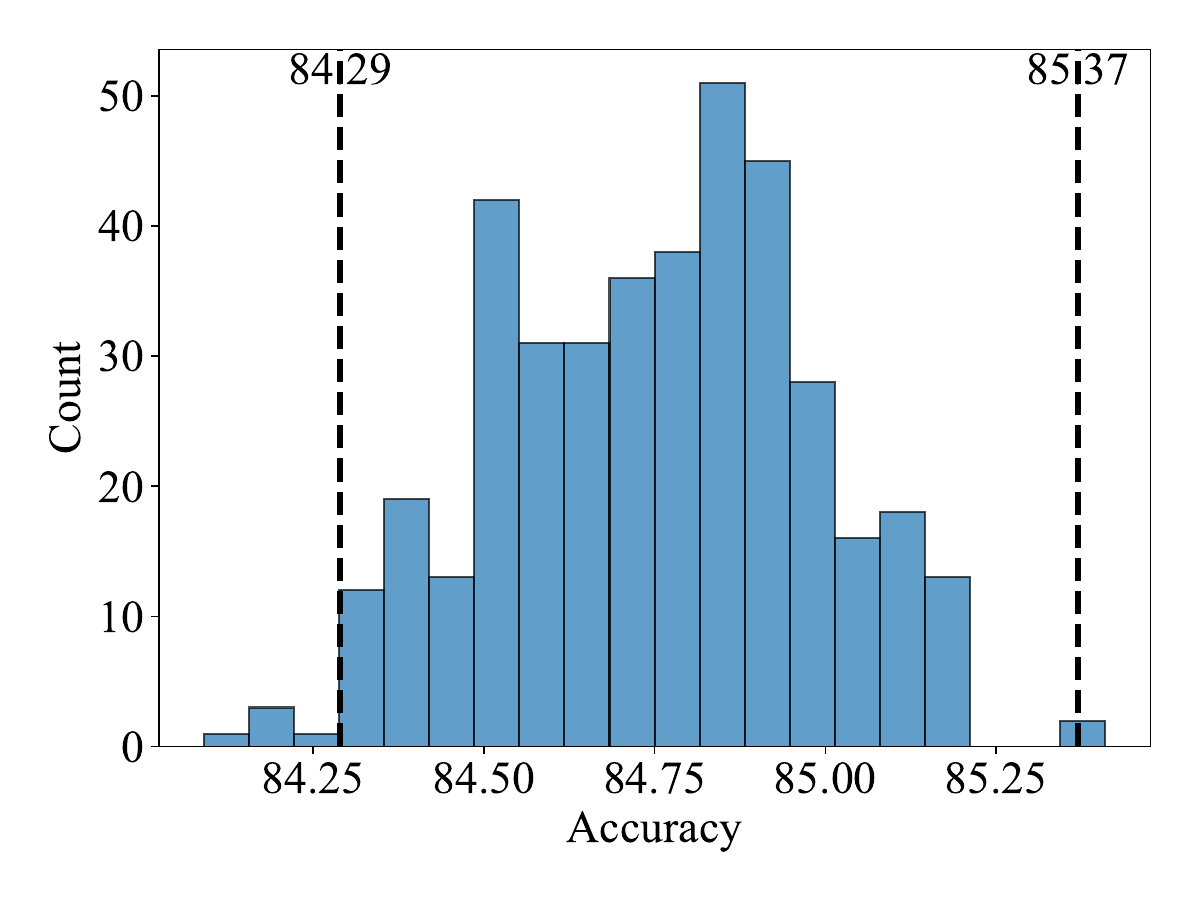}
        \caption{Distribution of accuracies for 400 random sequences under EASE. The vertical black line marks the accuracy position obtained by a single EDGE sample.}
        \label{fig:at1_sub}
    \end{subfigure}

    \vspace{1mm}
    \caption{Random sampling distributions and EDGE single-shot positions on the CUB dataset. Left panel corresponds to L2P and right panel corresponds to EASE.}
    \label{fig:cub_ease}
\end{figure}

\paragraph{Remarks}
All experimental details and plotting scripts are provided in Appendix E.5. The plots in Figure \ref{fig:numincrese} and Figure \ref{fig:cub_ease} support the conclusion that EDGE achieves comparable or better distribution estimation with far fewer samples than RS, which reduces evaluation cost and improves the reliability of worst-case and best-case assessments.

\section{How to Use the EDGE Repository}\label{sec:edge-repo-usage}

We release an official implementation of EDGE for reproducing the proposed evaluation protocol in CIL.\footnote{\url{https://github.com/AIGNLAI/EDGE}}
Unlike earlier versions that only provided pre-generated class orders, the current repository integrates EDGE directly into two widely used CIL toolboxes, enabling users to switch between the standard random-order evaluation and the EDGE protocol via a unified command-line interface.

\subsection{Repository Layout}
The repository contains two integrated codebases:
\begin{itemize}
    \item \texttt{PILOT/}: an EDGE-integrated fork based on the PILOT toolbox.
    \item \texttt{PyCIL/}: an EDGE-integrated fork based on the PyCIL toolbox.
\end{itemize}
In both integrations, EDGE is implemented as a modular evaluation component and can be invoked without changing the training recipe or model definition. This design keeps EDGE largely decoupled from the training pipeline, making it straightforward to apply to different methods supported by the toolboxes.

\subsection{Installation}
We recommend creating a dedicated environment and installing the dependencies required by the corresponding toolbox. In addition to common CIL dependencies (PyTorch, torchvision, \texttt{numpy}, \texttt{tqdm}, etc.), EDGE requires the CLIP package for computing the class-name similarity used by the protocol.

\begin{verbatim}
git clone https://github.com/AIGNLAI/EDGE
cd EDGE

conda create -n edge python=3.10 -y
conda activate edge

pip install torch torchvision
pip install git+https://github.com/openai/CLIP.git
pip install scipy
pip install timm==0.6.12
pip install tqdm
\end{verbatim}

\subsection{Running: Random Protocol vs. EDGE Protocol}
EDGE is exposed via a single command-line flag (denoted as \texttt{--eval} in the repository usage guide) to select the evaluation protocol:
\begin{itemize}
    \item \texttt{--eval random}: the standard random class-order protocol (randomly sampled class orders/seeds).
    \item \texttt{--eval edge}: the proposed EDGE protocol (adaptive sampling with extreme sequences to better approximate the performance distribution boundary).
\end{itemize}

A typical invocation is:
\begin{verbatim}
# Run with EDGE protocol
python main.py --config ./exps/[MODEL_NAME].json --eval edge

# Run with standard random-order protocol
python main.py --config ./exps/[MODEL_NAME].json --eval random
\end{verbatim}

\paragraph{Notes.}
(1) \textbf{Config files.} The \texttt{--config} argument points to the experiment configuration provided by the toolbox (e.g., model, dataset, task size, memory budget, optimizer, and schedule). EDGE does not require modifying these training settings.
(2) \textbf{Fair comparison.} For a fair comparison between protocols, keep the training configuration fixed and only switch \texttt{--eval}. If you use multiple seeds/orders under the random protocol, we recommend using the same overall evaluation budget when running EDGE.
(3) \textbf{Outputs.} The scripts report the standard CIL metrics supported by the toolbox (e.g., average incremental accuracy). Under EDGE, the evaluation additionally emphasizes boundary-aware statistics by considering extreme sequences, which helps reveal the variance and worst/best-case behaviors that can be under-estimated by random sampling.

\subsection{Practical Recommendations}
We recommend reporting results under both protocols:
\begin{itemize}
    \item \textbf{Random protocol} for compatibility with prior work and standard leaderboard settings;
    \item \textbf{EDGE protocol} to provide boundary-aware evaluation and a more informative characterization of performance variability across class orders.
\end{itemize}
This combined reporting provides a clearer picture of a method's robustness under realistic order distributions and mitigates the risk of drawing conclusions from a small number of random orders.

\paragraph{Dataset support.}
The current release provides built-in support for \textbf{CIFAR-100}, \textbf{CUB-200}, \textbf{ImageNet-R}, and \textbf{ImageNet-A}. To extend EDGE to additional datasets, users need to register the corresponding \emph{class label names} (used to compute CLIP text-embedding similarities) in \texttt{utils/edge.py}. Concretely, add the dataset-specific list/dictionary of class names in the same format as the existing entries, and ensure the dataset identifier in your configuration matches the key used in \texttt{utils/edge.py}.

\end{document}